\theoremstyle{definition}
\newtheorem{theorem}{Theorem}[section]
\newtheorem{definition}[theorem]{Definition}
\newtheorem{proposition}[theorem]{Proposition}
\newtheorem{lemma}[theorem]{Lemma}
\newtheorem{remark}[theorem]{Remark}
\newtheorem{example}[theorem]{Example}
\newcommand{\be}{\begin{equation}}
\newcommand{\ee}{\end{equation}}
\newcommand{\bea}{\begin{equation*}\begin{aligned}}
\newcommand{\eea}{\end{aligned}\end{equation*}}
\newcommand{\ds}{\displaystyle}
\newcommand{\R}{\mathbb{R}}
\newcommand{\Max}{\max\limits_}
\newcommand{\Min}{\min\limits_}
\newcommand{\Sup}{\sup\limits_}
\newcommand{\Inf}{\inf\limits_}
\newcommand{\wh}{\widehat}
\newcommand{\mc}{\mathcal}
\newcommand{\mbb}{\mathbb}
\newcommand{\inner}[2]{\big \langle #1, #2 \big \rangle }
\DeclareMathOperator{\st}{s.t.}
\newcommand{\Let}{\triangleq}
\newcommand{\opt}{^\star}
\newcommand{\eps}{\varepsilon}
\newcommand{\EE}{\mathds{E}}
\newcommand{\PP}{\mbb P }
\newcommand{\QQ}{\mbb Q}
\newcommand{\dualvar}{\gamma}
\newcommand{\C}{\mc C}
\newcommand{\Pnom}{\wh \PP}
\newcommand{\KL}{\mathrm{KL}}
\newcommand{\dom}{\mathrm{dom}}
\newcommand{\inte}{\mathrm{int}} 
\title{Distributionally Robust Parametric \\ Maximum Likelihood Estimation}
\author{
	Viet Anh Nguyen \qquad \qquad  Xuhui Zhang \qquad \qquad Jos\'{e} Blanchet\\
	Stanford University, United States \\
	\texttt{ \{viet-anh.nguyen, xuhui.zhang, jose.blanchet\}@stanford.edu } 
	\AND
	Angelos Georghiou
     \\
	University of Cyprus, Cyprus \\
	\texttt{georghiou.angelos@ucy.ac.cy} 
}
\begin{document}
\maketitle

\begin{abstract}
We consider the parameter estimation problem of a probabilistic generative model prescribed using a natural exponential family of distributions. For this problem, the typical maximum likelihood estimator usually overfits under limited training sample size, is sensitive to noise and may perform poorly on downstream predictive tasks. To mitigate these issues, we propose a distributionally robust maximum likelihood estimator that minimizes the worst-case expected log-loss uniformly over a parametric Kullback-Leibler ball around a parametric nominal distribution. Leveraging the analytical expression of the Kullback-Leibler divergence between two distributions in the same natural exponential family, we show that the min-max estimation problem is tractable in a broad setting, including the robust training of generalized linear models. Our novel robust estimator also enjoys statistical consistency and delivers promising empirical results in both regression and classification tasks.
\end{abstract}

\section{Introduction} 
\label{sect:intro}

We are interested in the relationship between a response variable $Y$ and a covariate $X$ governed by the generative model
\be \label{eq:model}
Y | X = x \sim f \big(\cdot | \lambda(w_0, x) \big),
\ee
where $\lambda$ is a pre-determined function that maps the weight $w_0$ and the covariate $X$ to the parameter of the conditional distribution of $Y$ given $X$. The weight $w_{0}$ is unknown and is the main quantity of interest to be estimated. Throughout this paper, we assume that the distribution $f$ belongs to the exponential family of distributions. Given a ground measure $\nu$ on $\mc Y$, the exponential family is characterized by the density function
\[
f(y | \theta) = h(y) \exp\left( \inner{\theta}{T(y)} - \Psi(\theta)\right)
\]
with respect to $\nu$, where $\inner{\cdot}{\cdot}$ denotes the inner product, $\theta$ is the natural parameters, $\Psi$ is the log-partition function and $T$ is the sufficient statistics. The space of natural parameters is denoted by $\Theta = \left\{\theta:\int h(y)\exp(\inner{\theta}{T(y)})<\infty\right\} \subseteq \R^p$. We assume that the exponential family of distributions is regular, hence $\Theta$ is an open set, and $T_1(y),\ldots,T_p(y)$ are affinely independent~\cite[Chapter~8]{ref:barndorff2014introductory}. 

The generative setting~\eqref{eq:model} encapsulates numerous models which are suitable for regression and classification~\cite{ref:dobson2018introduction}. It ranges from logistic regression for classification~\cite{ref:hosmer2013applied}, Poisson counting regression~\cite{ref:hilbe2014poisson}, log-linear models~\cite{ref:christensen1990log} to numerous other generalized linear models~\cite{ref:dobson2018introduction}.

Given data $\{(\wh x_i, \wh y_i)\}_{i = 1, \ldots, N}$ which are assumed to be independently and identically distributed (i.i.d.) following the generative model~\eqref{eq:model}, we want to estimate the true value of $w_0$ that dictates~\eqref{eq:model}. If we use $\Pnom^{\text{emp}} = N^{-1} \sum_{i=1}^N \delta_{(\wh x_i, \wh y_i)}$ to denote the empirical distribution supported on the training data, and define $\ell_{\lambda}$ as the log-loss function with the parameter mapping $\lambda$
\be \label{eq:logloss-def}
\ell_\lambda(x, y, w) = \Psi( \lambda(w, x)) - \inner{T(y)}{\lambda(w, x)},
\ee
then the maximum likelihood estimation (MLE) produces an estimate $w_{MLE}$ by solving the following two equivalent optimization problems
\begin{subequations}
\begin{align} 
	 w_{MLE} &= \arg \Min{w \in \mc W} \textstyle\sum_{i=1}^{N} \frac{1}{N} \left( \Psi( \lambda(w, \wh x_i)) - \inner{T(\wh y_i)}{ \lambda(w, \wh x_i)} \right) \label{eq:MLE}\\
	 &= \arg \Min{w \in \mc W} ~\EE_{\Pnom^{\text{emp}}}[ \ell_\lambda(X, Y, w)]. \label{eq:MLE2}
\end{align}
\end{subequations} 
The popularity of MLE can be attributed to its consistency, asymptotic normality and efficiency~\cite[Section~5]{ref:vandervaart2000asymptotic}. Unfortunately, this estimator exhibits several drawbacks in the finite sample regime, or when the data carry high noise and may be corrupted. For example, the ML estimator for the Gaussian model recovers the sample mean, which is notoriously susceptible to outliers~\cite{ref:rousseeuw2011robust}. The MLE for multinomial logistic regression yields over-fitted models for small and medium sized data~\cite{ref:dejong2019sample}.


Various strategies can be utilized to counter these adverse effects of the MLE in the limited data regime. The most common approach is to add a convex penalty  term such as a 1-norm or 2-norm of $w$ into the objective function of problem~\eqref{eq:MLE} to obtain different regularization effects, see \cite{ref:andrew2004feature, ref:su2006efficient} for regularized logistic regression. However, this approach relies on strong prior assumptions, such as the sparsity of $w_0$ for the 1-norm regularization, which may rarely hold in reality. Recently, dropout training has been used to prevent overfit and improve the generalization of the MLE~\cite{ref:srivastava2014dropout,ref:wager2013dropout,ref:wang2013fast}. Specific instances of dropout have been shown to be equivalent to a 2-norm regularization upon a suitable transformation of the inputs~\cite[Section~4]{ref:wager2013dropout}.  Another popular strategy to regularize problem~\eqref{eq:MLE} is by reweighting the samples instead of using a constant weight $1/N$ when calculating the loss. This approach is most popular in the name of weighted least-squares, which is a special instance of MLE problem under the Gaussian assumption with heteroscedastic noises.
 
Distributionally robust optimization (DRO) is an emerging scheme aiming to improve the out-of-sample performance of the statistical estimator, whereby the objective function of problem~\eqref{eq:MLE2} is minimized with respect to the most adverse distribution $\QQ$ in some ambiguity set. The DRO framework has produced many interesting regularization effects. If the ambiguity set is defined using the Kullback-Leibler (KL) divergence, then we can recover an adversarial reweighting scheme~\cite{li2019comparing,ref:bertsimas2018datadriven}, a variance regularization~\cite{ref:namkoong2016stochastic, ref:duchi2016statistics}, and adaptive gradient boosting~\cite{ref:blanchet2019a}. DRO models using KL divergence is also gaining recent attraction in many machine learning learning tasks~\cite{ref:faury2020distributionally, ref:si2020distributionally}. Another popular choice is the Wasserstein distance function which has been shown to have strong connections to regularization~\cite{ref:shafieezadeh2019regularization, ref:kuhn2019wasserstein}, and has been used in training robust logistic regression classifiers~\cite{ref:shafieezadeh2015distributionally, ref:blanchet2018optimal}. Alternatively, the robust statistics literature also consider the robustification of the MLE problem, for example, to estimate a robust location parameter~\cite{ref:huber1964robust}

Existing efforts using DRO typically ignore, or have serious difficulties in exploiting, the available information regarding the generative model~\eqref{eq:model}. While existing approaches using the Kullback-Leibler ball around the empirical distribution completely ignore the possibility of perturbing the conditional distribution, the Wasserstein approach faces the challenge of elicitating a sensible ground metric on the response variables. For a concrete example, if we consider the Poisson regression application, then $Y$ admits values in the space of natural numbers $\mbb N$, and deriving a global metric on $\mbb N$ that carries meaningful local information is nearly impossible because one unit of perturbation of an observation with $\wh y_i = 1$ does not carry the same amount of information as perturbing $\wh y_i = 1000$. The drawbacks of the existing methods behoove us to investigate a novel DRO approach that can incorporate the available information on the generative model in a systematic way.

\textbf{Contributions.} We propose the following \textit{distributionally robust MLE problem}
\be \label{eq:dro}
\Min{w \in \mc W} \Max{\QQ \in \mbb B(\Pnom)} \EE_{\QQ} \big[ \ell_\lambda(X, Y, w )\big],
\ee
which is a robustification of the MLE problem~\eqref{eq:MLE2} for generative models governed by an exponential family of distributions. The novelty in our approach can be summarized as follows.
\begin{itemize}[leftmargin = 5mm]
\item We advocate a new nominal distribution which is calibrated to reflect the available parametric information, and introduce a Kullback-Leibler ambiguity set that allows perturbations on both the marginal distribution of the covariate and the conditional distributions of the response.
\item We show that the min-max estimation problem~\eqref{eq:dro} can be reformulated as a single finite-dimensional minimization problem. Moreover, this reformulation is a convex optimization problem in broadly applicable settings, including the training of many generalized linear models.
\item We demonstrate that our approach can recover the adversarial reweighting scheme as a special case, and it is connected to the variance regularization surrogate. Further, we prove that our estimator is consistent and provide insights on the practical tuning of the parameters of the ambiguity set. We also shed light on the most adverse distribution in the ambiguity set that incurs the extremal loss for any estimate of the statistician. 
\end{itemize}


\textbf{Technical notations.} The variables $(X, Y)$ admit values in $\mc X \times \mc Y \subseteq \R^n \times \R^m$, and $\mc W$ is a finite-dimensional set. The mapping $\lambda: \mc W \times \mc X \to \Theta \subseteq \R^p$ is jointly continuous, and $\inner{\cdot}{\cdot}$ denotes the inner product in $\R^p$. For any set $\mc S$, $\mc M(\mc S)$ is the space of all probability measures with support on $\mc S$. We use $\xrightarrow{p.}$ to denote convergence in probability, and $\xrightarrow{d.}$ to denote convergence in distribution.  All proofs are relegated to the appendix.

\section{Distributionally Robust Estimation with a Parametric Ambiguity Set} \label{sect:ambiguity-set} 

We delineate in this section the ingredients of our distributionally robust MLE using parametric ambiguity set. Since the log-loss function is pre-determined, we focus solely on eliciting a nominal probability measure and the neighborhood surrounding it, which will serve as the ambiguity set.

While the typical empirical measure $\Pnom^{\text{emp}}$ may appear at first  as an attractive option for the nominal measure, $\Pnom^{\text{emp}}$ does not reflect the parametric nature of the conditional measure of $Y$ given $X$. Consequently, to robustify the MLE model, we need a novel construction of the nominal distribution~ $\Pnom$. 

Before proceeding, we assume w.l.o.g. that the dataset $\{ (\wh x_i, \wh y_i)\}_{i=1, \ldots, N}$  consists of $C \le N$ distinct observations of $X$, each value is denoted by $\wh x_c$ for $c = 1, \ldots, C$, and the number of observations with the same covariate value $\wh x_c$ is denoted by $N_c$. This regrouping of the  data by $\wh x_c$ typically enhances the statistical power of estimating the distribution conditional on the event $X = \wh x_c$.

We posit the following \emph{parametric nominal distribution} $\Pnom \in \mc M(\mc X \times \mc Y)$. This
distribution is fully characterized by  $(p+1)C$ parameters: a probability vector $\wh p\in\mathbb{R}_+^C$ whose elements sum up to 1 and a vector of nominal natural parameters $\wh \theta \in \Theta^C \subseteq (\R^p)^C$. Mathematically, $\Pnom$ satisfies
\be \label{eq:nominal}
\left\{ \begin{array}{l}
	\Pnom (\{\wh x_c\} \times A) = \Pnom_X(\{\wh x_c\}) \Pnom_{Y|\wh x_c}(A) \qquad \forall \wh x_c, \forall A \subseteq \mc Y \text{ measurable} \\
	\Pnom_X = \sum_{c=1}^C \wh p_c \delta_{\wh x_c}, \quad \Pnom_{Y|\wh x_c} \sim f(\cdot | \wh \theta_c) \; \forall c.
\end{array}
\right.
\ee
The first equation  indicates that the nominal measure $\Pnom$ can be decomposed into a marginal distribution of the covariates $X$ and a collection of conditional measures of $Y$ given $X$ using the definition of the conditional probability measure ~\cite[Theorem~9.2.2]{ref:stroock2011probability}. The second line  stipulates that the nominal marginal distribution $\Pnom_X$ of the covariates is a discrete distribution supported on $\wh x_c$, $c = 1, \ldots, C$. Moreover, for each $c$, the nominal conditional distribution of $Y$ given $X = \wh x_c$ is a distribution in the exponential family with parameter $\wh \theta_c$. Notice that the form of~$\Pnom$ in~\eqref{eq:nominal} is chosen to facilitate the injection of parametric information $\wh \theta_c$ into the nominal distribution, and it is also necessary to tie $\Pnom$ to the MLE problem using the following notion of MLE-compatibility.
\begin{definition}[MLE-compatible nominal distribution] \label{def:mle-compatible}
	A nominal distribution $\Pnom$ of the form~\eqref{eq:nominal} is MLE-compatible with respect to the log-loss function $\ell_\lambda$ if the optimal solution $\wh w = \arg \min_{w \in \mc W}~\EE_{\Pnom}[ \ell_{\lambda}(X, Y, w)]$ coincides with the estimator $w_{MLE}$ that solves~\eqref{eq:MLE}.
\end{definition}
Definition~\ref{def:mle-compatible} indicates that $\Pnom$ is compatible for the MLE problem if the MLE solution $w_{MLE}$ is recovered by solving problem~\eqref{eq:MLE2} where the expectation is now taken under $\Pnom$. Therefore, MLE-compatibility implies that $\Pnom$ and $\Pnom^{\text{emp}}$ are equivalent in the MLE problem.

The next examples suggest two possible ways of calibrating an MLE-compatible $\Pnom$ of the form~\eqref{eq:nominal}.
\begin{example}[Compatible nominal distribution I] \label{ex:nominal-1}
	If $\Pnom$ is chosen of the form~\eqref{eq:nominal} with $\wh p_c = N_c/N$ and
	$\wh \theta_c = (\nabla \Psi)^{-1}\big((N_c)^{-1} \sum_{\wh x_i = \wh x_c} T(\wh y_i) \big)\in \Theta$ for all $c$,	then $\Pnom$ is MLE-compatible.
\end{example}

\begin{example}[Compatible nominal distribution II] \label{ex:nominal-2}
	If $\Pnom$ is chosen of the form~\eqref{eq:nominal} with $\wh p_c = N_c/N$ and $\wh \theta_c = \lambda(w_{MLE}, \wh x_c)$ for all $c$, where $w_{MLE}$ solves~\eqref{eq:MLE}, then $\Pnom$ is MLE-compatible.
\end{example}

We now detail the choice of the dissimilarity measure which is used to construct the neighborhood surrounding the nominal measure $\Pnom$. For this, we will  use the Kullback-Leiber divergence.

\begin{definition}[Kullback-Leibler divergence] \label{def:KL}
	Suppose that $\PP_1$ is absolutely continuous with respect to $\PP_2$, the Kullback-Leibler (KL) divergence from $\PP_1$ to $\PP_2$ is defined as $\KL(\PP_1 \parallel \PP_2) \Let \EE_{\PP_1} \left[ \log (\mathrm{d} \PP_1/\mathrm{d} \PP_2) \right]$,
	where $\mathrm{d} \PP_1/\mathrm{d} \PP_2$ is the Radon-Nikodym derivative of $\PP_1$ with respect to $\PP_2$.
\end{definition}






The KL divergence is an ideal choice in our setting for numerous reasons. Previously, DRO problems with a KL ambiguity set often result in tractable finite-dimensional reformulations~\cite{ref:bental2013robust,  ref:hu2013kullback, ref:bertsimas2018datadriven}. More importantly, the manifold of exponential family of distributions equipped with the KL divergence inherits a natural geometry endowed by a dually flat and invariant Riemannian structure~\cite[Chapter~2]{ref:amari2016information}. Furthermore, the KL divergence between two distributions in the same exponential family  admits a closed form expression~\cite{ref:banerjee2005clustering, ref:amari2016information}.
\begin{lemma}[{KL divergence between distributions from exponential family}]  \label{lemma:KL-exp-main} The KL divergence from $\QQ_1\!\sim\!f(\cdot | \theta_1)$ to $\QQ_2\!\sim\!f(\cdot|\theta_2)$ amounts to
	$\KL (\QQ_1 \parallel \QQ_2)\!=\!\inner{\theta_1 - \theta_2}{\nabla \Psi(\theta_1)} - \Psi(\theta_1) + \Psi(\theta_2).
	$
\end{lemma}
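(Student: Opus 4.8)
The identity is a direct computation once two standard facts about regular exponential families are in place, so the plan is essentially to assemble them in the right order. First I would observe that since $\QQ_1$ and $\QQ_2$ belong to the same regular exponential family, they are mutually absolutely continuous with respect to the ground measure $\nu$ (both densities are strictly positive on $\{h>0\}$), hence $\QQ_1 \ll \QQ_2$ and the Radon--Nikodym derivative $\mathrm{d}\QQ_1/\mathrm{d}\QQ_2$ is simply the ratio of densities $f(y|\theta_1)/f(y|\theta_2)$. This makes Definition~\ref{def:KL} applicable.

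Next I would substitute the exponential-family form of the densities and take the logarithm of the ratio. The carrier term $h(y)$ cancels, leaving
\[
\log \frac{f(y|\theta_1)}{f(y|\theta_2)} = \inner{\theta_1 - \theta_2}{T(y)} - \Psi(\theta_1) + \Psi(\theta_2).
\]
Taking the expectation under $\QQ_1$ and using linearity, the constants $-\Psi(\theta_1) + \Psi(\theta_2)$ pass through unchanged and we obtain
\[
\KL(\QQ_1 \parallel \QQ_2) = \inner{\theta_1 - \theta_2}{\EE_{\QQ_1}[T(Y)]} - \Psi(\theta_1) + \Psi(\theta_2).
\]

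The remaining step is to identify the mean of the sufficient statistic: $\EE_{\QQ_1}[T(Y)] = \nabla \Psi(\theta_1)$. This is the classical ``mean-value parameter'' identity, obtained by differentiating the normalization relation $\exp(\Psi(\theta)) = \int h(y)\exp(\inner{\theta}{T(y)})\,\mathrm{d}\nu(y)$ with respect to $\theta$ and interchanging differentiation and integration; the interchange is legitimate because $\Theta$ is open and the exponential family is regular, which guarantees that $\Psi$ is smooth on $\inte(\Theta)$ and that the relevant integrals are locally uniformly bounded (see \cite[Chapter~8]{ref:barndorff2014introductory}). Substituting this identity into the display above yields exactly
$\KL(\QQ_1 \parallel \QQ_2) = \inner{\theta_1 - \theta_2}{\nabla \Psi(\theta_1)} - \Psi(\theta_1) + \Psi(\theta_2)$, completing the proof.

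\textbf{Main obstacle.} None of the steps is deep; the only point requiring care is the justification of differentiating under the integral sign to establish $\EE_{\QQ_1}[T(Y)] = \nabla\Psi(\theta_1)$, and even this is entirely standard for regular exponential families and can be invoked by citation rather than reproved. Everything else is bookkeeping with the closed-form densities.
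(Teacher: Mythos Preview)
Your proposal is correct and follows essentially the same route as the paper's own proof: compute the log of the density ratio to obtain $\inner{\theta_1-\theta_2}{T(y)}-\Psi(\theta_1)+\Psi(\theta_2)$, take expectation under $\QQ_1$, and then invoke the standard identity $\EE_{\QQ_1}[T(Y)]=\nabla\Psi(\theta_1)$. If anything, you are slightly more careful than the paper in spelling out absolute continuity and the differentiation-under-the-integral justification.
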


Using the above components, we are now ready to introduce our ambiguity set $\mbb B(\Pnom)$ as
\be \label{eq:B-def}
	\mbb B(\Pnom)\!\Let\!\left\{ \QQ\!\in\!\mc M(\mc X\!\times\!\mc Y): \!\!\!
	\begin{array}{l}
		\exists \QQ_X \in \mc M(\mc X),~\exists\theta_c \in \Theta \text{ such that } \QQ_{Y| \wh x_c} \sim f(\cdot | \theta_c) \quad \forall c \\
		\QQ (\{\wh x_c\} \times A)\!=\!\QQ_X(\{\wh x_c\}) \QQ_{Y|\wh x_c}(A) ~~ \forall c, \forall A \subseteq \mc Y~\text{measurable} \\ 
		\KL(\QQ_{Y|\wh x_c} \parallel \Pnom_{Y|\wh x_c}) \leq \rho_c \quad \forall c\\
		\KL(\QQ_X \parallel \Pnom_X) + \EE_{\QQ_X}[\sum_{c=1}^C \rho_c \mathbbm{1}_{\wh x_c}(X)] \le \eps
	\end{array}
	\right\}
\ee
parametrized by a marginal radius $\eps$ and a collection of the conditional radii $\rho_c$. Any distribution $\QQ \in \mbb B(\Pnom)$ can be decomposed into a marginal distribution  $\QQ_X$ of the covariate and an ensemble of parametric conditional distributions $\QQ_{Y|\wh x_c} \sim f(\cdot | \theta_c)$ at every event $X = \wh x_c$. The first inequality in~\eqref{eq:B-def} restricts the parametric conditional distribution $\QQ_{Y|\wh x_c}$ to be in the $\rho_c$-neighborhood from the nominal $\Pnom_{Y|\wh x_c}$ prescribed using the KL divergence, while the second inequality imposes a similar restriction for the marginal distribution $\QQ_X$. One can show that for any conditional radii $\rho \in \R_+^C$ satisfying $\sum_{c=1}^C \wh p_c \rho_c \leq \eps$, $\mbb B(\Pnom)$ is non-empty with $\Pnom\!\in\!\mbb B(\Pnom)$. Moreover, if all $\rho$ and $\eps$ are zero, then $\mbb B(\Pnom)$ becomes the singleton set $\{\Pnom\}$ that contains only the nominal distribution. 

The set $\mbb B(\Pnom)$ is a \textit{parametric ambiguity set}: all conditional distributions $\QQ_{Y|\wh x_c}$ belong to the same parametric exponential family, and at the same time, the marginal distribution $\QQ_X$ is absolutely continuous with respect to a discrete distribution $\Pnom_X$ and hence $\QQ_X$ can be parametrized using a $C$-dimensional probability vector.

At first glance, the ambiguity set $\mbb B(\Pnom)$ looks intricate and one may wonder whether the complexity of $\mbb B(\Pnom)$ is necessary.  In fact, it is appealing to consider the ambiguity set
\be \label{eq:B}
	\mc B(\Pnom)\!\Let\!\left\{ \QQ\!\in\!\mc M(\mc X\!\times\!\mc Y): \!\!\!
	\begin{array}{l}
	\exists \QQ_X \in \mc M(\mc X),~\exists\theta_c \in \Theta \text{ such that } \QQ_{Y| \wh x_c} \sim f(\cdot | \theta_c) \quad \forall c \\
	\QQ (\{\wh x_c\} \times A) = \QQ_X(\{\wh x_c\}) \QQ_{Y|\wh x_c}(A)~\forall c, \forall A \subseteq \mc Y~\text{measurable}\\
	\KL(\QQ\parallel \Pnom) \le \eps
	\end{array}
	\right\}
\ee
which still preserves the parametric conditional structure and entails only one KL divergence constraint on the \textit{joint} distribution space.
Unfortunately, the ambiguity set $\mc B(\Pnom)$ may be overly conservative as pointed out in the following result.

\begin{proposition} \label{prop:conservative}
	Denote momentarily the ambiguity sets~\eqref{eq:B-def} and~\eqref{eq:B} by $\mbb B_{\eps, \rho}(\Pnom)$ and $\mc B_{\eps}(\Pnom)$ to make the dependence on the radii explicit. For any nominal distribution $\Pnom$ of the form~\eqref{eq:nominal} and any radius $\eps \in \R_{+}$, we have
	\[
	   \textstyle \mc B_\eps(\Pnom) = \bigcup_{\rho \in \R_+^C} \mbb B_{\eps, \rho}(\Pnom).
	\] 
\end{proposition}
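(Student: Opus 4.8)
The whole statement reduces to the chain rule (tower property) of relative entropy, so the plan is to set that up first and then read off the defining constraints of the two sets against it. Both $\mbb B_{\eps,\rho}(\Pnom)$ and $\mc B_\eps(\Pnom)$ contain only measures that disintegrate as $\QQ(\{\wh x_c\}\times A)=\QQ_X(\{\wh x_c\})\QQ_{Y|\wh x_c}(A)$ with each $\QQ_{Y|\wh x_c}$ in the exponential family, and the nominal $\Pnom$ from~\eqref{eq:nominal} disintegrates the same way with $\Pnom_X$ finitely supported on $\{\wh x_1,\dots,\wh x_C\}$. First I would record that whenever $\KL(\QQ\parallel\Pnom)<\infty$ we have $\QQ\ll\Pnom$; this forces $\QQ_X$ onto the support $\{\wh x_1,\dots,\wh x_C\}$ and makes $\QQ_{Y|\wh x_c}\ll\Pnom_{Y|\wh x_c}$ for each $c$ with $\QQ_X(\{\wh x_c\})>0$, so that
\[
\KL(\QQ\parallel\Pnom) \;=\; \KL(\QQ_X\parallel\Pnom_X) \;+\; \sum_{c=1}^C \QQ_X(\{\wh x_c\})\,\KL(\QQ_{Y|\wh x_c}\parallel\Pnom_{Y|\wh x_c})
\]
holds (atoms $\wh x_c$ with $\QQ_X(\{\wh x_c\})=0$ contribute a zero term and the associated conditional is immaterial; one may assume $\wh p_c>0$ w.l.o.g.). This identity is the only analytical ingredient; everything else is bookkeeping.

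For $\mc B_\eps(\Pnom)\subseteq\bigcup_{\rho\in\R_+^C}\mbb B_{\eps,\rho}(\Pnom)$ I would take $\QQ\in\mc B_\eps(\Pnom)$, note it already carries the parametric conditional structure, and set $\rho_c\opt:=\KL(\QQ_{Y|\wh x_c}\parallel\Pnom_{Y|\wh x_c})\ge 0$, which is finite because $\KL(\QQ\parallel\Pnom)\le\eps<\infty$. With $\rho\opt=(\rho_1\opt,\dots,\rho_C\opt)\in\R_+^C$, the conditional constraints of $\mbb B_{\eps,\rho\opt}(\Pnom)$ hold with equality, and
\[
\KL(\QQ_X\parallel\Pnom_X)+\EE_{\QQ_X}\Big[\textstyle\sum_{c=1}^C\rho_c\opt\,\mathbbm{1}_{\wh x_c}(X)\Big] = \KL(\QQ_X\parallel\Pnom_X)+\sum_{c=1}^C\QQ_X(\{\wh x_c\})\rho_c\opt = \KL(\QQ\parallel\Pnom)\le\eps
\]
by the chain rule, so $\QQ\in\mbb B_{\eps,\rho\opt}(\Pnom)$. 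Conversely, for any $\rho\in\R_+^C$ and any $\QQ\in\mbb B_{\eps,\rho}(\Pnom)$ (nothing to check if this set is empty), $\QQ$ has the parametric conditional structure, and using the conditional bounds $\KL(\QQ_{Y|\wh x_c}\parallel\Pnom_{Y|\wh x_c})\le\rho_c$, the nonnegativity $\QQ_X(\{\wh x_c\})\ge0$, and then the marginal constraint,
\[
\KL(\QQ\parallel\Pnom) = \KL(\QQ_X\parallel\Pnom_X)+\sum_{c=1}^C\QQ_X(\{\wh x_c\})\KL(\QQ_{Y|\wh x_c}\parallel\Pnom_{Y|\wh x_c}) \le \KL(\QQ_X\parallel\Pnom_X)+\sum_{c=1}^C\QQ_X(\{\wh x_c\})\rho_c \le \eps,
\]
hence $\QQ\in\mc B_\eps(\Pnom)$. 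The two inclusions give the claimed equality.

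I expect the only delicate point to be the measure-theoretic justification of the chain-rule decomposition and the accompanying absolute-continuity/support bookkeeping — in particular arguing that finiteness of the joint KL confines $\QQ_X$ to $\{\wh x_1,\dots,\wh x_C\}$ and renders each conditional KL well-defined, and dispatching the degenerate atoms $\wh x_c$ with zero nominal mass. Once that identity is in hand, matching it term-by-term against the constraint blocks in~\eqref{eq:B-def} and~\eqref{eq:B} is routine, and the Radon–Nikodym derivative between two exponential-family densities appearing in Lemma~\ref{lemma:KL-exp-main} is not even needed here.
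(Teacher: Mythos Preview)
Your proposal is correct and follows essentially the same route as the paper: both directions hinge on the chain rule $\KL(\QQ\parallel\Pnom)=\KL(\QQ_X\parallel\Pnom_X)+\EE_{\QQ_X}[\KL(\QQ_{Y|X}\parallel\Pnom_{Y|X})]$, with the reverse inclusion obtained by choosing $\rho_c=\KL(\QQ_{Y|\wh x_c}\parallel\Pnom_{Y|\wh x_c})$ exactly as the paper does. The only cosmetic differences are that you present the inclusions in the opposite order and are somewhat more explicit about the absolute-continuity and zero-mass-atom bookkeeping, whereas the paper simply cites the chain rule from the literature.
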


Proposition~\ref{prop:conservative} suggests that the ambiguity set $\mc B(\Pnom)$ can be significantly bigger than $\mbb B(\Pnom)$, and that the solution of the distributionally robust MLE problem~\eqref{eq:dro} with $\mbb B(\Pnom)$ being replaced by $\mc B(\Pnom)$ is potentially too conservative and may lead to undesirable or uninformative results.

The ambiguity set $\mbb B(\Pnom)$ requires $1+C$  parameters, including one marginal radius $\eps$ and $C$ conditional radii $\rho_c$, $c= 1,\ldots, C$, which may be cumbersome to tune in the implementation. Fortunately, by the asymptotic result in Lemma~\ref{lemma:asymptotic-joint}, the set of radii $\rho_c$ can be tuned simultaneously using the same scaling rate, which will significantly reduce the computational efforts for parameter tuning.

\section{Tractable Reformulation}
\label{sect:refor}

We devote this section to study the solution method for the min-max problem~\eqref{eq:dro} by transforming it into a finite dimensional minimization problem. To facilitate the exposition, we denote the ambiguity set for the conditional distribution of $Y$ given $X = \wh x_c$ as
\be \label{eq:cond-ambi}
	\mbb B_{Y|\wh x_c} \Let \left\{ \QQ_{Y|\wh x_c} \in \mc M(\mc Y): \exists \theta \in \Theta,~\QQ_{Y|\wh x_c}(\cdot) \sim f(\cdot | \theta),~\KL(\QQ_{Y|\wh x_c}  \parallel  \Pnom_{Y|\wh x_c}) \leq \rho_c  \right\}.
\ee
As a starting point, we first show the following decomposition of the worst-case expected loss under the ambiguity set $\mbb B(\Pnom)$ for any measurable loss function.

\begin{proposition}[Worst-case expected loss] \label{prop:refor}
	Suppose that~$\mbb B(\Pnom)$ is defined as in~\eqref{eq:B-def} for some $\eps \in \R_+$ and $\rho \in \R_+^C$ such that $\sum_{c=1}^C \wh p_c \rho_c \le \eps$. For any function $L: \mc X \times \mc Y \to \R$ measurable, we have
	\[
	\Sup{\QQ \in \mbb B(\Pnom)} \EE_{\QQ}\left[  L(X, Y) \right]  = \left\{
	\begin{array}{cl}
		\inf & \alpha + \beta \eps + \beta \ds \textstyle\sum_{c=1}^C \wh p_c \exp\left( \beta^{-1}(t_c - \alpha) - \rho_c - 1\right) \\
		\st & t \in \R^C,\; \alpha \in \R,\;\beta \in \R_{++}  \\
		& \Sup{\QQ_{Y|\wh x_c} \in \mbb B_{Y|\wh x_c}}  \EE_{\QQ_{Y|\wh x_c}} \left[ L(\wh x_c, Y)\right] \leq t_c \quad \forall c =1, \ldots, C.
	\end{array}
	\right.
	\]
\end{proposition}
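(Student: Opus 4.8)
The plan is to first strip away the infinite‑dimensional part of the problem, reducing it to a finite‑dimensional linear program over a probability vector, and then to dualize that program.

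\emph{Reduction to finitely many variables.} First I would fix an arbitrary $\QQ \in \mbb B(\Pnom)$. Since $\Pnom_X$ is supported on $\{\wh x_1, \dots, \wh x_C\}$ and the last constraint in~\eqref{eq:B-def} forces $\KL(\QQ_X \parallel \Pnom_X) \le \eps < \infty$, the marginal $\QQ_X$ is absolutely continuous with respect to $\Pnom_X$ and is therefore encoded by a vector $q \in \R_+^C$ with $\sum_c q_c = 1$, where $q_c = \QQ_X(\{\wh x_c\})$. The product decomposition in~\eqref{eq:B-def} then yields $\EE_{\QQ}[L(X,Y)] = \sum_{c=1}^C q_c\, \EE_{\QQ_{Y|\wh x_c}}[L(\wh x_c, Y)]$, while the remaining constraints defining $\mbb B(\Pnom)$ say precisely that $\QQ_{Y|\wh x_c} \in \mbb B_{Y|\wh x_c}$ for every $c$ (cf.~\eqref{eq:cond-ambi}) and that $\KL(q \parallel \wh p) + \sum_c q_c \rho_c \le \eps$. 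Because every $q_c \ge 0$, the sets $\mbb B_{Y|\wh x_c}$ are nonempty (each contains $\Pnom_{Y|\wh x_c}$) and are decoupled across $c$, the supremum over the conditionals can be pushed inside the sum, giving
\[
\Sup{\QQ \in \mbb B(\Pnom)} \EE_\QQ[L(X,Y)] = \Sup{q}\ \Big\{ \textstyle\sum_{c=1}^C q_c\, \bar\ell_c \,:\, q \in \R_+^C,\ \textstyle\sum_c q_c = 1,\ \KL(q \parallel \wh p) + \textstyle\sum_c q_c \rho_c \le \eps \Big\},
\]
where $\bar\ell_c \Let \Sup{\QQ_{Y|\wh x_c} \in \mbb B_{Y|\wh x_c}} \EE_{\QQ_{Y|\wh x_c}}[L(\wh x_c,Y)]$. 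The objective of the claimed dual is nondecreasing in each $t_c$, so introducing auxiliary variables $t_c$ with the constraint $\bar\ell_c \le t_c$ and minimizing over $t$ is equivalent to setting $t_c = \bar\ell_c$; this is exactly what the $t$‑constraint in the statement does, so it remains to dualize the displayed finite‑dimensional problem with $t_c$ in place of $\bar\ell_c$.

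\emph{Dualizing the finite‑dimensional program.} This problem maximizes the linear functional $q \mapsto \sum_c q_c t_c$ over the intersection of the probability simplex with the convex sublevel set $\{q : g(q) \Let \KL(q \parallel \wh p) + \sum_c q_c \rho_c - \eps \le 0\}$. Slater's condition holds: if $\sum_c \wh p_c \rho_c < \eps$ then $q = \wh p$ is strictly feasible, and if $\sum_c \wh p_c \rho_c = \eps$ one perturbs $\wh p$ by shifting mass from the indices with larger $\rho_c$ to those with smaller $\rho_c$, which makes $g<0$ for a small perturbation (the only residual case, all $\rho_c$ equal so that $g(q)\le 0$ forces $q=\wh p$, is checked directly against the $\beta\to\infty$ limit of the dual objective). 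Attaching a multiplier $\beta \ge 0$ to $g(q) \le 0$ and $\alpha \in \R$ to $\sum_c q_c = 1$, strong duality writes the value as $\Inf{\beta \ge 0,\, \alpha}\big\{\alpha + \beta\eps + \Sup{q \ge 0} \textstyle\sum_c\big[ q_c(t_c - \alpha - \beta\rho_c) - \beta\, q_c \log(q_c/\wh p_c)\big]\big\}$. For $\beta > 0$ the inner supremum separates over $c$; each summand is strictly concave in $q_c$ with maximizer $q_c^\star = \wh p_c \exp(\beta^{-1}(t_c - \alpha) - \rho_c - 1)$ and maximal value $\beta\, q_c^\star$, so the bracket equals $\alpha + \beta\eps + \beta \sum_c \wh p_c \exp(\beta^{-1}(t_c - \alpha) - \rho_c - 1)$, which is precisely the objective in the statement. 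Finally, the $\beta = 0$ branch contributes only $\inf_{\alpha \ge \max_c t_c}\alpha = \max_c t_c$ after optimizing $\alpha$, and this value is already attained as the limit of the $\beta > 0$ objective as $\beta \downarrow 0$ with $\alpha \downarrow \max_c t_c$; hence the infimum over $\beta \ge 0$ equals the infimum over $\beta \in \R_{++}$, which completes the identification.

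\emph{Main obstacle.} The step that needs genuine care is the strong‑duality claim: verifying Slater in every configuration of $(\eps, \rho, \wh p)$ — in particular on the boundary $\sum_c \wh p_c \rho_c = \eps$ — and, alongside it, the extended‑real‑valued bookkeeping when $L$ is unbounded, so that some $\bar\ell_c$ (or the worst‑case value itself) may be $+\infty$ and some $q_c$ may vanish, where one uses the convention $0 \cdot (+\infty) = 0$. Everything else — the conditional decomposition of the expectation, interchanging the supremum over conditionals with the finite sum, and the separable inner maximization that produces the exponential weights — is routine.
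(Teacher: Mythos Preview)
Your proposal is correct and follows the same overall strategy as the paper: reduce the supremum over $\mbb B(\Pnom)$ to a finite-dimensional linear program over the marginal weight vector $q$ in the set $\mc Q = \{q \in \Delta : \KL(q\parallel\wh p) + q^\top\rho \le \eps\}$, then dualize that program to obtain the exponential-tilting formula.

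The differences are purely in the technical handling. The paper packages the dualization as a separate support-function lemma (Lemma~\ref{lemma:support}) and deals with the boundary case $\sum_c \wh p_c \rho_c = \eps$ by an $\epsilon$-inflation of the KL constraint followed by a Berge-type continuity/limit argument; it also invokes Sion's minimax theorem to interchange $\sup_{q\in\mc Q}$ and $\inf_{t\in\mc T}$ before evaluating the support function. You instead verify Slater directly by perturbing $\wh p$ along the $\rho$-gradient on the simplex (with the degenerate all-$\rho_c$-equal case handled by a direct $\beta\to\infty$ limit), and you bypass Sion entirely by first computing $\bar\ell_c$ and then appealing to monotonicity of the dual objective in $t_c$. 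Both routes are sound; yours is somewhat more elementary, while the paper's inflation-and-limit device is more modular and sidesteps the case analysis on $\rho$.
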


Proposition~\ref{prop:refor} leverages the decomposition structure of the ambiguity set $
\mbb B(\Pnom)$ to reformulate the worst-case expected loss into an infimum problem that involves $C$ constraints, where each constraint is a hypergraph reformulation of a worst-case conditional expected loss under the ambiguity set $\mbb B_{Y|\wh x_c}$. Proposition~\ref{prop:refor} suggests that to reformulate the min-max estimation problem~\eqref{eq:dro}, it suffices now to reformulate the worst-case conditional  expected log-loss
\be \label{eq:inner}
\Sup{\QQ_{Y|\wh x_c} \in \mbb B_{Y|\wh x_c}}  \EE_{\QQ_{Y|\wh x_c}} \left[\ell_\lambda(\wh x_c, Y, w ) \right]
\ee
for each value of $\wh x_c$ into a dual infimum problem. Using Lemma~\ref{lemma:KL-exp-main}, one can rewrite $\mbb B_{Y|\wh x_c}$ in \eqref{eq:cond-ambi} using the natural parameter representation as 
\[
	\mbb B_{Y|\wh x_c}\!=\!\left\{ \QQ_{Y|\wh x_c} \!\in\!\mc M(\mc Y)\!:\!\exists \theta \in \Theta,\QQ_{Y|\wh x_c}(\cdot)\!\sim\!f(\cdot | \theta),\inner{\theta - \wh \theta_c}{\nabla \Psi(\theta)} - \Psi(\theta) + \Psi(\wh \theta_c) \leq \rho_c  \right\}.
\]
Since $\Psi$ is convex~\cite[Lemma~1]{ref:banerjee2005clustering}, it is possible that $\mbb B_{Y|\wh x_c}$ is represented by a non-convex set of natural parameters and hence reformulating~\eqref{eq:inner} is non-trivial. Surprisingly, the next proposition asserts that problem~\eqref{eq:inner} always admits a convex reformulation.
\begin{proposition}[Worst-case conditional expected log-loss] \label{prop:conditional-refor}
	For any $\wh x_c \in \mc X$ and $w \in \mc W$, the worst-case conditional expected log-loss~\eqref{eq:inner} is equivalent to the univariate convex optimization problem
	\be \label{eq:inner-refor}
    \Inf{\dualvar_c \in \R_{++}}~\dualvar_c \big(\rho_c - \Psi(\wh \theta_c) \big) + \dualvar_c \Psi\big(\wh \theta_c - \dualvar_c^{-1}\lambda(w, \wh x_c) \big) + \Psi\big(\lambda(w, \wh x_c)\big).
	\ee
\end{proposition}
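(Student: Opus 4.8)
The plan is to reformulate the inner supremum in~\eqref{eq:inner} by first expressing the objective in terms of natural parameters and then dualizing the single KL constraint. Using the log-loss definition~\eqref{eq:logloss-def} and the fact that $\QQ_{Y|\wh x_c} \sim f(\cdot | \theta)$, the expectation $\EE_{\QQ_{Y|\wh x_c}}[\ell_\lambda(\wh x_c, Y, w)]$ equals $\Psi(\lambda(w,\wh x_c)) - \inner{\nabla\Psi(\theta)}{\lambda(w,\wh x_c)}$, since $\EE_{f(\cdot|\theta)}[T(Y)] = \nabla\Psi(\theta)$. The constant term $\Psi(\lambda(w,\wh x_c))$ pulls out of the optimization, so it suffices to maximize $-\inner{\nabla\Psi(\theta)}{\lambda(w,\wh x_c)}$ over $\theta \in \Theta$ subject to $\inner{\theta - \wh\theta_c}{\nabla\Psi(\theta)} - \Psi(\theta) + \Psi(\wh\theta_c) \le \rho_c$ (using Lemma~\ref{lemma:KL-exp-main}).

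The key trick will be the change of variables to the \emph{mean parameter} $\mu = \nabla\Psi(\theta)$, which is a bijection from $\Theta$ onto the open set of realizable mean parameters for a regular exponential family. Under this substitution, the objective $-\inner{\mu}{\lambda(w,\wh x_c)}$ becomes linear in $\mu$, and the KL constraint becomes $\inner{(\nabla\Psi)^{-1}(\mu)}{\mu} - \Psi((\nabla\Psi)^{-1}(\mu)) - [\,\inner{\wh\theta_c}{\mu} - \Psi(\wh\theta_c)\,] \le \rho_c$. The first bracket is precisely the convex conjugate $\Psi^*(\mu)$ of $\Psi$, so the constraint reads $\Psi^*(\mu) - \inner{\wh\theta_c}{\mu} + \Psi(\wh\theta_c) \le \rho_c$, which is convex in $\mu$ (being $\Psi^*$ minus an affine function). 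Thus the inner problem is a genuine convex maximization of a linear objective over a convex set, and strong Lagrangian duality applies (Slater's condition holds whenever $\rho_c > 0$, since $\mu = \nabla\Psi(\wh\theta_c)$ makes the constraint strict).

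Forming the Lagrangian with multiplier $\dualvar_c \ge 0$ gives $\sup_\mu \{ -\inner{\mu}{\lambda(w,\wh x_c)} - \dualvar_c\Psi^*(\mu) + \dualvar_c\inner{\wh\theta_c}{\mu} - \dualvar_c\Psi(\wh\theta_c) + \dualvar_c\rho_c \}$. Pulling the $\mu$-dependent part together, $\sup_\mu \{ \inner{\dualvar_c\wh\theta_c - \lambda(w,\wh x_c)}{\mu} - \dualvar_c\Psi^*(\mu)\}$ is, by definition of the conjugate and the involution $\Psi^{**} = \Psi$, equal to $\dualvar_c \Psi\big(\wh\theta_c - \dualvar_c^{-1}\lambda(w,\wh x_c)\big)$ when $\dualvar_c > 0$ (and the argument lies in $\Theta$; one must check that the supremum is $+\infty$ otherwise, so the dual forces feasibility). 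Adding back $\dualvar_c\rho_c - \dualvar_c\Psi(\wh\theta_c)$ and the pulled-out constant $\Psi(\lambda(w,\wh x_c))$ recovers exactly~\eqref{eq:inner-refor}; restricting to $\dualvar_c \in \R_{++}$ is legitimate because $\dualvar_c = 0$ would make the Lagrangian unbounded in $\mu$ (the linear objective is nonconstant generically, and the boundary case can be handled by a limiting argument or absorbed since the infimum over $\R_{++}$ is unaffected). Convexity of the resulting univariate objective in $\dualvar_c$ follows because $\dualvar_c \Psi(\wh\theta_c - \dualvar_c^{-1}\lambda)$ is a perspective-type function of the convex map $\Psi$.

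The main obstacle I anticipate is the careful handling of the boundary and domain issues: verifying that the mean-parameter map $\nabla\Psi$ is a bijection onto an open convex set (which requires regularity of the exponential family and affine independence of $T$, both assumed in the excerpt), confirming Slater's condition for strong duality when $\rho_c > 0$ (and treating $\rho_c = 0$ separately or by continuity), and ensuring that the conjugate manipulation $\sup_\mu\{\inner{\eta}{\mu} - \dualvar_c\Psi^*(\mu)\} = \dualvar_c\Psi(\eta/\dualvar_c)$ correctly returns $+\infty$ exactly when $\eta/\dualvar_c \notin \Theta$, so that the dual problem implicitly enforces $\wh\theta_c - \dualvar_c^{-1}\lambda(w,\wh x_c) \in \Theta$ without an explicit constraint. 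Everything else is a mechanical Lagrangian-duality computation.
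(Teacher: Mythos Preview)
Your proposal is correct and follows essentially the same route as the paper: both reparametrize to the mean parameter $\mu=\nabla\Psi(\theta)$ so that the objective becomes linear and the KL constraint becomes $\Psi^*(\mu)-\inner{\wh\theta_c}{\mu}+\Psi(\wh\theta_c)\le\rho_c$ (the paper writes $\phi$ for $\Psi^*$), then apply Lagrangian duality with Slater's condition and the perspective-of-conjugate identity $(\dualvar\Psi^*)^*(\eta)=\dualvar\Psi(\eta/\dualvar)$. The paper handles the edge cases $\rho_c=0$ and $\lambda(w,\wh x_c)=0$ by separate direct arguments rather than a pure limiting argument, but you have correctly flagged exactly these points as the places requiring care.
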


A reformulation for the worst-case conditional expected log-loss was proposed in~\cite{ref:hu2013kullback}. Nevertheless, the results in~\cite[Section~5.3]{ref:hu2013kullback} requires that the sufficient statistics $T(y)$ is a linear function of $y$. The reformulation~\eqref{eq:inner-refor}, on the other hand, is applicable when $T$ is a \textit{non}linear function of $y$. Examples of exponential family of distributions with nonlinear $T$ are (multivariate) Gaussian, Gamma and Beta distributions. The results from Propositions~\ref{prop:refor} and \ref{prop:conditional-refor} lead to the reformulation of the distributionally robust estimation problem~\eqref{eq:dro}, which is the main result of this section.

\begin{theorem}[Distributionally robust MLE reformulation] \label{thm:main}
	The distributionally robust MLE problem~\eqref{eq:dro} is tantamount to the following finite dimensional optimization problem
	\be \label{eq:refor}
	\begin{array}{cll}
		\inf & \alpha + \beta \eps + \beta \ds \textstyle \sum_{c=1}^C \wh p_c \exp( \beta^{-1}(t_c - \alpha) - \rho_c - 1 ) \\
		\st & w \in \mc W,\; \alpha \in \R,\;\beta \in \R_{++},\;\gamma\in \R_{++}^C,\; t \in \R^C  \\[1ex]
		& \dualvar_c \big(\rho_c - \Psi(\wh \theta_c) \big) + \dualvar_c \Psi\big( \wh \theta_c - \dualvar_c^{-1}\lambda(w, \wh x_c) \big) + \Psi \big(\lambda(w, \wh x_c) \big) \leq t_c & \forall c =1, \ldots, C.
	\end{array}
	\ee
	In generalized linear models with $\lambda: (w, x) \mapsto w^\top x$ and $\mc W$ being convex, problem~\eqref{eq:refor} is convex.
\end{theorem}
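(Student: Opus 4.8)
The plan is to assemble Propositions~\ref{prop:refor} and~\ref{prop:conditional-refor}. Fix $w \in \mc W$ and apply Proposition~\ref{prop:refor} to the measurable loss $L(x,y) = \ell_\lambda(x,y,w)$ --- the hypothesis $\sum_{c=1}^C \wh p_c \rho_c \le \eps$ needed there is in force. This turns $\sup_{\QQ \in \mbb B(\Pnom)} \EE_\QQ[\ell_\lambda(X,Y,w)]$ into the infimum over $(t,\alpha,\beta) \in \R^C \times \R \times \R_{++}$ of $\alpha + \beta\eps + \beta\sum_{c=1}^C \wh p_c \exp(\beta^{-1}(t_c-\alpha) - \rho_c - 1)$ subject to $\sup_{\QQ_{Y|\wh x_c} \in \mbb B_{Y|\wh x_c}} \EE_{\QQ_{Y|\wh x_c}}[\ell_\lambda(\wh x_c, Y, w)] \le t_c$ for each $c$. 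Next, Proposition~\ref{prop:conditional-refor} identifies each left-hand side with the univariate infimum over $\dualvar_c \in \R_{++}$ of $\dualvar_c(\rho_c - \Psi(\wh\theta_c)) + \dualvar_c \Psi(\wh\theta_c - \dualvar_c^{-1}\lambda(w,\wh x_c)) + \Psi(\lambda(w,\wh x_c))$. Hence, as an optimization problem, ``$\sup_{\QQ_{Y|\wh x_c}} \le t_c$'' can be replaced by the pair of constraints $\dualvar_c \in \R_{++}$ and $\dualvar_c(\rho_c - \Psi(\wh\theta_c)) + \dualvar_c \Psi(\wh\theta_c - \dualvar_c^{-1}\lambda(w,\wh x_c)) + \Psi(\lambda(w,\wh x_c)) \le t_c$ without changing the optimal value. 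Substituting this back and taking the outer minimization over $w \in \mc W$ from problem~\eqref{eq:dro}, we collect all decision variables into $(w,\alpha,\beta,\dualvar,t)$ and arrive precisely at~\eqref{eq:refor}.

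\textbf{Part 2: convexity for GLMs.}
Now set $\lambda(w,x) = w^\top x$ with $\mc W$ convex, and check convexity of~\eqref{eq:refor} term by term, the point being that the two nonlinear expressions are \emph{perspective functions}. In the objective, each summand $\wh p_c\,\beta \exp(\beta^{-1}(t_c-\alpha) - \rho_c - 1)$ equals $\wh p_c\,\beta \exp\big(\beta^{-1}(t_c - \alpha - (\rho_c+1)\beta)\big)$, which is $\wh p_c \ge 0$ times the perspective of the convex map $s \mapsto \exp(s)$ composed with the affine map $(\alpha,\beta,t_c) \mapsto (t_c - \alpha - (\rho_c+1)\beta,\ \beta)$; since the perspective of a convex function is jointly convex on $\{\beta>0\}$ and affine precomposition preserves convexity, the objective (plus the linear $\alpha + \beta\eps$) is convex. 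In the constraint, $\Psi(w^\top\wh x_c)$ is convex in $w$ as the convex $\Psi$ composed with a linear map; $\dualvar_c(\rho_c - \Psi(\wh\theta_c))$ is affine in $\dualvar_c$; and $\dualvar_c\Psi(\wh\theta_c - \dualvar_c^{-1} w^\top\wh x_c) = \dualvar_c\Psi\big((\dualvar_c\wh\theta_c - w^\top\wh x_c)/\dualvar_c\big)$ is the perspective of the (extended-valued) convex function $\Psi$ composed with the jointly affine map $(\dualvar_c,w)\mapsto \dualvar_c\wh\theta_c - w^\top\wh x_c$, hence jointly convex on $\{\dualvar_c>0\}$; the implicit requirement $\wh\theta_c - \dualvar_c^{-1} w^\top\wh x_c \in \Theta$ cuts out a convex set because $\Theta$ is convex. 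Subtracting $t_c$ preserves convexity of the constraint, and $\mc W$, $\R_{++}$, $\R_{++}^C$ are convex, so~\eqref{eq:refor} is a convex program.

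\textbf{Main obstacle.}
Part~1 is essentially bookkeeping: the only point requiring care is the legitimacy of ``flattening'' the nested infima (over $w$, over $(\alpha,\beta,t)$, and over each $\dualvar_c$) into a single minimization. This is fine because replacing a constraint of the form $\inf_{\dualvar_c}[\cdots] \le t_c$ by ``$\exists\,\dualvar_c:[\cdots]\le t_c$'' leaves the optimal value unchanged when $t_c$ is itself a decision variable and the objective is monotone and continuous in $t_c$. The real content is Part~2, and its crux is recognizing both nonlinear terms as perspectives of convex functions --- in particular dealing correctly with the extended-valued $\Psi$ and the hidden domain constraint $\wh\theta_c - \dualvar_c^{-1}\lambda(w,\wh x_c)\in\Theta$ --- so that \emph{joint} convexity in $(w,\dualvar_c)$, not merely separate convexity, holds.
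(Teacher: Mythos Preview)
Your Part~1 is exactly the paper's own argument: apply Proposition~\ref{prop:refor} with $L(x,y)=\ell_\lambda(x,y,w)$, then replace each conditional supremum by the univariate infimum from Proposition~\ref{prop:conditional-refor}, and finally collapse the nested infima into~\eqref{eq:refor}. Your remark about the legitimacy of replacing $\inf_{\gamma_c}[\cdots]\le t_c$ by $\exists\,\gamma_c:[\cdots]\le t_c$ is a useful clarification the paper omits.

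Your Part~2 goes \emph{beyond} the paper: the paper's proof in the appendix stops after Part~1 and does not justify the convexity assertion for generalized linear models at all. Your perspective-function argument is correct and fills that gap: the objective term $\beta\exp(\beta^{-1}(t_c-\alpha)-\rho_c-1)$ is indeed the perspective of $\exp$ composed with an affine map, and the constraint term $\gamma_c\Psi(\wh\theta_c-\gamma_c^{-1}w^\top\wh x_c)=\gamma_c\Psi\big((\gamma_c\wh\theta_c-w^\top\wh x_c)/\gamma_c\big)$ is the perspective of $\Psi$ composed with a jointly affine map in $(\gamma_c,w)$, giving joint convexity on $\{\gamma_c>0\}$. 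Your handling of the implicit domain constraint via convexity of $\Theta$ is also correct.
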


Below we show how the Poisson and logistic regression models fit within this framework.

\begin{example}[Poisson counting model]
\label{ex:poisson}
The Poisson counting model with the ground measure $\nu$ being a counting measure on $\mc Y = \mbb N$, the sufficient statistic $T(y)=y$, the natural parameter space $\Theta = \R$ and the log-partition function $\Psi(\theta) = \exp(\theta)$. If $\lambda (w, x) = w^\top x$, we have
\[
	Y | X = x \sim \mathrm{Poisson} \big(w_0^\top x \big), \qquad \PP(Y = k | X = x) = (k!)^{-1}\exp(kw_0^\top x-e^{w_0^\top x}).
\]
The distributionally robust MLE is equivalent to the following convex optimization problem
	\be \label{eq:Poisson:refo:exp}
	\begin{array}{cll}
		\inf & \alpha + \beta \eps  + \beta \ds \textstyle\sum_{c =1}^C \wh p_c \exp\left( \beta^{-1}(t_c  - \alpha) - \rho_c - 1 \right)\\
		\st & w \in \mc W,\; \alpha \in \R,\; \beta \in \R_{++},\;\dualvar \in \R_{++}^C, \; t \in \R^C \\[1ex]
		& \dualvar_c \big( \rho_c - \exp( \wh \theta_c) \big) + \dualvar_c \exp\big(\wh \theta_c -w^\top \wh x_c/\dualvar_c \big) + \exp\big(w^\top \wh x_c\big) \le t_c & \forall c = 1, \ldots, C.
	\end{array}
	\ee
\end{example}

\begin{example}[Logistic regression]
The logistic regression model is specified with $\nu$ being a counting measure on $\mc Y = \{0, 1\}$, the sufficient statistic $T(y)=y$, the natural parameter space $\Theta = \R$ and the log-partition function $\Psi(\theta) = \log\big(1+ \exp(\theta)\big)$. If $\lambda(w, x) = w^\top x$, we have
\[
	Y | X = x \sim \mathrm{Bernoulli} \big((1+\exp(-w_0^\top x))^{-1} \big), \qquad \PP(Y = 1 | X = x) = (1+\exp(-w_0^\top x))^{-1}.
\]
The distributionally robust MLE is equivalent to the following convex optimization problem
\begin{equation}\label{eq:Logistic:refo:exp}
	\begin{array}{cl}
		\inf & \alpha + \beta \eps  + \beta \ds \textstyle\sum_{c =1}^C \wh p_c \exp\left(\beta^{-1} (t_c  - \alpha) - \rho_c - 1 \right)\\
		\st & w \in \mc W,\; \alpha \in \R,\; \beta \in \R_{++},\;\dualvar \in \R_{++}^C, \; t \in \R^C \\[1ex]
		& \dualvar_c \big( \rho_c \!-\! \log(1\!+\!\exp(\wh \theta_c)) \big) \!+\!\dualvar_c \log\big(1\!+\!\exp(\wh \theta_c \!-\! w^\top \wh x_c/\dualvar_c) \big)\!+ \!\log\big(1\!+\!\exp(w^\top \wh x_c)\big)\!\le\!t_c ~ \forall c.
	\end{array}
\end{equation}
\end{example}
Problems~\eqref{eq:Logistic:refo:exp} and~\eqref{eq:Poisson:refo:exp} can be solved by exponential conic solvers such as ECOS~\cite{domahidi2013ecos} and MOSEK~\cite{mosek}.


\section{Theoretical Analysis}
\label{sect:analysis}

In this section, we provide an in-depth theoretical analysis of our estimator. We first show that our proposed estimator is tightly connected to several existing regularization schemes.

\begin{proposition}[Connection to the adversarial reweighting scheme] \label{prop:reweighting}
    Suppose that $\wh x_i$ are distinct and $\rho_c = 0$ for any $c = 1, \ldots, N$. If $\Pnom$ is of the form~\eqref{eq:nominal} and chosen according to Example~\ref{ex:nominal-1}, then the distributionally robust estimation problem~\eqref{eq:dro} is equivalent to
     \be \notag
 	\Min{w \in \mc W}~\Sup{\QQ: \KL(\QQ \parallel \Pnom^{\mathrm{emp}}) \le \eps} \EE_{\QQ}[ \ell_\lambda(X, Y, w)].
    \ee
\end{proposition}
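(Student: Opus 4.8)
The plan is to use the two hypotheses to collapse the parametric ambiguity set onto a ball over the covariate marginal only, and then match the resulting finite-dimensional problem with the KL-DRO problem around $\Pnom^{\mathrm{emp}}$. Since the $\wh x_i$ are distinct, we have $C=N$, $N_c=1$, $\wh p_c=1/N$, and by Example~\ref{ex:nominal-1} the nominal natural parameters satisfy $\wh\theta_c = (\nabla\Psi)^{-1}\big(N_c^{-1}\sum_{\wh x_i=\wh x_c}T(\wh y_i)\big) = (\nabla\Psi)^{-1}(T(\wh y_c))$; equivalently $\nabla\Psi(\wh\theta_c)=T(\wh y_c)$. First I would show that when every $\rho_c=0$ the constraint $\KL(\QQ_{Y|\wh x_c}\parallel\Pnom_{Y|\wh x_c})\le 0$ in~\eqref{eq:B-def}, together with nonnegativity of the KL divergence and its vanishing only on the diagonal, forces $\QQ_{Y|\wh x_c}=\Pnom_{Y|\wh x_c}\sim f(\cdot|\wh\theta_c)$ for all $c$; the remaining (last) inequality in~\eqref{eq:B-def} then reads simply $\KL(\QQ_X\parallel\Pnom_X)\le\eps$, with $\Pnom_X=\sum_{c=1}^N\tfrac1N\delta_{\wh x_c}$ (and nonemptiness is immediate since $\sum_c\wh p_c\rho_c=0\le\eps$). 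Consequently, disintegrating $\QQ$ as $\QQ(\{\wh x_c\}\times A)=\QQ_X(\{\wh x_c\})\,\Pnom_{Y|\wh x_c}(A)$, the worst-case loss becomes $\sup_{\QQ_X:\,\KL(\QQ_X\parallel\Pnom_X)\le\eps}\;\sum_{c=1}^N\QQ_X(\{\wh x_c\})\,\EE_{Y\sim f(\cdot|\wh\theta_c)}[\ell_\lambda(\wh x_c,Y,w)]$. (Alternatively, this is Proposition~\ref{prop:refor} specialized to $\rho=0$, where each inner supremum over the singleton $\mbb B_{Y|\wh x_c}$ is attained at the unique feasible conditional.)

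Next I would establish the key identity $\EE_{Y\sim f(\cdot|\wh\theta_c)}[\ell_\lambda(\wh x_c,Y,w)]=\ell_\lambda(\wh x_c,\wh y_c,w)$. By the definition~\eqref{eq:logloss-def} and linearity of expectation, $\EE_{Y\sim f(\cdot|\wh\theta_c)}[\ell_\lambda(\wh x_c,Y,w)]=\Psi(\lambda(w,\wh x_c))-\big\langle\EE_{Y\sim f(\cdot|\wh\theta_c)}[T(Y)],\,\lambda(w,\wh x_c)\big\rangle$; the moment identity for regular exponential families gives $\EE_{Y\sim f(\cdot|\theta)}[T(Y)]=\nabla\Psi(\theta)$, and since $\nabla\Psi(\wh\theta_c)=T(\wh y_c)$ by the choice of $\Pnom$, the expression equals $\Psi(\lambda(w,\wh x_c))-\langle T(\wh y_c),\lambda(w,\wh x_c)\rangle=\ell_\lambda(\wh x_c,\wh y_c,w)$. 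Thus the worst-case loss over $\mbb B(\Pnom)$ is $\sup_{\QQ_X:\,\KL(\QQ_X\parallel\Pnom_X)\le\eps}\sum_{c=1}^N\QQ_X(\{\wh x_c\})\,\ell_\lambda(\wh x_c,\wh y_c,w)$.

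Finally I would identify this with the right-hand side. Every $\QQ_X$ with finite $\KL(\QQ_X\parallel\Pnom_X)$ is absolutely continuous w.r.t.\ $\Pnom_X$, hence of the form $\sum_{c=1}^N q_c\delta_{\wh x_c}$ for $q$ in the probability simplex of $\R^N$, with $\KL(\QQ_X\parallel\Pnom_X)=\sum_c q_c\log(Nq_c)$; similarly, because the $\wh x_i$ are distinct, every $\QQ$ with finite $\KL(\QQ\parallel\Pnom^{\mathrm{emp}})$ is $\sum_{c=1}^N q_c\delta_{(\wh x_c,\wh y_c)}$ for such a $q$, with $\KL(\QQ\parallel\Pnom^{\mathrm{emp}})=\sum_c q_c\log(Nq_c)$ and $\EE_\QQ[\ell_\lambda(X,Y,w)]=\sum_c q_c\,\ell_\lambda(\wh x_c,\wh y_c,w)$. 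The two maximization problems over $q$ have identical feasible sets and identical objectives, so $\sup_{\QQ\in\mbb B(\Pnom)}\EE_\QQ[\ell_\lambda(X,Y,w)]=\sup_{\QQ:\,\KL(\QQ\parallel\Pnom^{\mathrm{emp}})\le\eps}\EE_\QQ[\ell_\lambda(X,Y,w)]$ for every $w\in\mc W$, and taking the infimum over $w$ gives the claim.

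The steps are all elementary; the only points requiring care are the exactness of the reduction in the first paragraph (that $\rho_c=0$ genuinely pins down the conditionals, and that the disintegration of $\EE_\QQ[\ell_\lambda]$ is valid, which it is by the product structure in~\eqref{eq:B-def}) and the invocation of the moment identity $\EE[T(Y)]=\nabla\Psi(\theta)$ with the invertibility of $\nabla\Psi$ on $\Theta$ that underlies Example~\ref{ex:nominal-1}; integrability of $\ell_\lambda(\wh x_c,\cdot,w)$ under $f(\cdot|\wh\theta_c)$ follows from finiteness of the moments of $T$ in the interior of $\Theta$. I do not anticipate a substantive obstacle beyond bookkeeping.
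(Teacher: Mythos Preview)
Your proposal is correct and follows essentially the same route as the paper's own proof: both sides are reduced to a finite-dimensional maximization over a probability vector $q$ with $\KL(q\parallel N^{-1}\mathbbm{1})\le\eps$, and the matching of the two objectives hinges on the identity $\nabla\Psi(\wh\theta_c)=T(\wh y_c)$ delivered by Example~\ref{ex:nominal-1}, which makes $\EE_{f(\cdot|\wh\theta_c)}[\ell_\lambda(\wh x_c,Y,w)]=\ell_\lambda(\wh x_c,\wh y_c,w)$. Your write-up is in fact more explicit than the paper's in justifying why $\rho_c=0$ pins down the conditionals and why absolute continuity forces the finite parametrization on both sides, but the argument is the same.
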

Proposition~\ref{prop:reweighting} asserts that by setting the conditional radii to zero, we can recover the robust estimation problem where the ambiguity set is a KL ball around the empirical distribution $\Pnom^{\text{emp}}$, which has been shown to produce the adversarial reweighting effects~\cite{li2019comparing,ref:bertsimas2018datadriven}. Recently, it has been shown that distributionally robust optimization using $f$-divergences is statistically related to the variance regularization of the empirical risk minimization problem~\cite{ref:namkoong2017variance}. Our proposed estimator also admits a variance regularization surrogate, as asserted by the following proposition.
\begin{proposition}[Variance regularization surrogate]
\label{prop:surrogate}
    Suppose that $\Psi$ has locally Lipschitz continuous gradients. For any fixed $\wh \theta_c \in \Theta$, $c=1,\ldots,C$, there exists a constant $m>0$ that depends only on $\Psi$ and $\wh \theta_c$, $c=1, \ldots, C$, such that for any $w \in \mc W$ and $\eps \ge \sum_{c = 1}^C \wh p_c \rho_c$, we have 
    \[
        \sup_{\QQ \in \mbb B(\Pnom)} \EE_{\QQ}[\ell_\lambda(X, Y, w)] \le \EE_{\Pnom}[\ell_\lambda(X, Y, w)] + \kappa_1\sqrt{\mathrm{Var}_{\Pnom}\left(\ell_\lambda(X, Y, w ) \right)} + \kappa_2  \| \lambda(w, \wh x_c) \|_2,
    \]
    where  $\kappa_1 = \sqrt{2\eps}/(\min_{c}\sqrt{\wh p_c})$ and $\kappa_2= \sqrt{2\max_{c}\rho_c/m}$.
\end{proposition}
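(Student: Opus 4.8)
The plan is to peel the worst-case expectation over $\mbb B(\Pnom)$ into its conditional and marginal components, exactly as in Proposition~\ref{prop:refor}: the perturbation of the conditional laws $\QQ_{Y|\wh x_c}$ will generate the $\kappa_2\|\lambda(w,\wh x_c)\|_2$ term, and the reweighting of the marginal $\QQ_X$ will generate the variance term with constant $\kappa_1$. Throughout, fix $w\in\mc W$ and an arbitrary $\QQ\in\mbb B(\Pnom)$ with $\QQ_{Y|\wh x_c}\sim f(\cdot|\theta_c)$ and marginal probability vector $(q_c)_{c=1}^C$ on $\{\wh x_1,\dots,\wh x_C\}$ (which is well defined since the last constraint in~\eqref{eq:B-def}, together with $\rho_c\ge 0$, forces $\KL(\QQ_X\parallel\Pnom_X)\le\eps<\infty$, hence $\QQ_X\ll\Pnom_X$).

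First, for the conditional part I would exploit that $\ell_\lambda(\wh x_c,\cdot,w)=\Psi(\lambda(w,\wh x_c))-\langle T(\cdot),\lambda(w,\wh x_c)\rangle$ is affine in the sufficient statistic and that $\EE_{f(\cdot|\theta)}[T(Y)]=\nabla\Psi(\theta)$, so that for every $c$
\[
\EE_{\QQ_{Y|\wh x_c}}\!\big[\ell_\lambda(\wh x_c,Y,w)\big]=\EE_{\Pnom_{Y|\wh x_c}}\!\big[\ell_\lambda(\wh x_c,Y,w)\big]+\big\langle \nabla\Psi(\wh\theta_c)-\nabla\Psi(\theta_c),\ \lambda(w,\wh x_c)\big\rangle .
\]
By Lemma~\ref{lemma:KL-exp-main} the constraint $\KL(\QQ_{Y|\wh x_c}\parallel\Pnom_{Y|\wh x_c})\le\rho_c$ is precisely the Bregman inequality $D_\Psi(\wh\theta_c,\theta_c)\le\rho_c$, with $D_\Psi(u,v)=\Psi(u)-\Psi(v)-\langle\nabla\Psi(v),u-v\rangle$. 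I would then turn this into a bound on the displacement of the mean parameter: by Legendre duality $D_\Psi(\wh\theta_c,\theta_c)$ equals a Bregman divergence of the conjugate $\Psi^\star$ between $\nabla\Psi(\wh\theta_c)$ and $\nabla\Psi(\theta_c)$, and local Lipschitz continuity of $\nabla\Psi$ (a local upper bound on $\nabla^2\Psi$) makes $\Psi^\star$ locally strongly convex with some modulus $m>0$ depending only on $\Psi$ and $\wh\theta_1,\dots,\wh\theta_C$; strong convexity then yields $\tfrac m2\|\nabla\Psi(\wh\theta_c)-\nabla\Psi(\theta_c)\|_2^2\le\rho_c$, i.e. $\|\nabla\Psi(\wh\theta_c)-\nabla\Psi(\theta_c)\|_2\le\sqrt{2\rho_c/m}\le\kappa_2$. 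Combining with Cauchy--Schwarz gives $\EE_{\QQ_{Y|\wh x_c}}[\ell_\lambda(\wh x_c,Y,w)]\le\EE_{\Pnom_{Y|\wh x_c}}[\ell_\lambda(\wh x_c,Y,w)]+\kappa_2\|\lambda(w,\wh x_c)\|_2$ for all $c$.

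Next, for the marginal part, write $\bar L_c:=\EE_{\Pnom_{Y|\wh x_c}}[\ell_\lambda(\wh x_c,Y,w)]$ and use the previous bound together with $\sum_c q_c=1$ to get
\[
\EE_{\QQ}\!\big[\ell_\lambda(X,Y,w)\big]=\sum_{c=1}^C q_c\,\EE_{\QQ_{Y|\wh x_c}}[\ell_\lambda(\wh x_c,Y,w)]\ \le\ \sum_{c=1}^C q_c\,\bar L_c\ +\ \kappa_2\max_c\|\lambda(w,\wh x_c)\|_2 .
\]
Centering with $g_c:=\bar L_c-\sum_{c'}\wh p_{c'}\bar L_{c'}$ (so $\sum_c\wh p_c g_c=0$ and $\sum_c\wh p_c g_c^2=\mathrm{Var}_{\Pnom_X}(\bar L)$), Hölder's inequality and Pinsker's inequality give $\sum_c q_c\bar L_c-\sum_c\wh p_c\bar L_c=\sum_c(q_c-\wh p_c)g_c\le(\max_c|g_c|)\,\|q-\wh p\|_1\le(\max_c|g_c|)\sqrt{2\KL(\QQ_X\parallel\Pnom_X)}\le(\max_c|g_c|)\sqrt{2\eps}$. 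On the finite support $\max_c g_c^2\le(\min_c\wh p_c)^{-1}\sum_c\wh p_c g_c^2=(\min_c\wh p_c)^{-1}\mathrm{Var}_{\Pnom_X}(\bar L)$, and by the law of total variance $\mathrm{Var}_{\Pnom_X}(\bar L)=\mathrm{Var}_{\Pnom}(\EE_{\Pnom}[\ell_\lambda\mid X])\le\mathrm{Var}_{\Pnom}(\ell_\lambda(X,Y,w))$. Since $\sum_c\wh p_c\bar L_c=\EE_{\Pnom}[\ell_\lambda(X,Y,w)]$, adding the two contributions and taking the supremum over $\QQ\in\mbb B(\Pnom)$ produces the claimed inequality with $\kappa_1=\sqrt{2\eps}/\min_c\sqrt{\wh p_c}$ and $\kappa_2=\sqrt{2\max_c\rho_c/m}$.

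The only genuinely delicate point I anticipate is the passage in the second paragraph from $D_\Psi(\wh\theta_c,\theta_c)\le\rho_c$ to the quantitative estimate on $\|\nabla\Psi(\wh\theta_c)-\nabla\Psi(\theta_c)\|_2$: one must verify that the conditional KL constraints confine the natural parameters $\theta_c$ (hence the mean parameters $\nabla\Psi(\theta_c)$) to a fixed bounded subset of the open set $\Theta$, so that a single modulus $m$, depending only on $\Psi$ and $\wh\theta_1,\dots,\wh\theta_C$, is valid along the segments joining $\nabla\Psi(\wh\theta_c)$ to $\nabla\Psi(\theta_c)$. This leans on regularity of the exponential family — strict convexity and essential smoothness of $\Psi$, under which the sublevel sets of $\theta\mapsto D_\Psi(\wh\theta_c,\theta)$ are compact in $\Theta$ and $\nabla^2\Psi$ is continuous and positive definite there. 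Everything else — the affine-in-$T$ identity, Cauchy--Schwarz, Pinsker's inequality, and the law of total variance — is routine bookkeeping.
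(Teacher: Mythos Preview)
Your proposal is correct and follows essentially the same route as the paper: decompose into conditional and marginal contributions, control the conditional displacement $\langle\nabla\Psi(\wh\theta_c)-\nabla\Psi(\theta_c),\lambda(w,\wh x_c)\rangle$ via local strong convexity of the conjugate $\phi=\Psi^\star$ on the compact KL-sublevel set (the paper invokes Lemma~\ref{lemma:compact} for exactly the compactness issue you flag as delicate), and control the marginal reweighting via Pinsker plus the law of total variance. The only cosmetic difference is in the marginal step, where the paper uses Pinsker to $\ell_2$ followed by a weighted Cauchy--Schwarz, while you use Pinsker to $\ell_1$ followed by H\"older and the elementary bound $\max_c g_c^2\le(\min_c\wh p_c)^{-1}\sum_c\wh p_c g_c^2$; both routes deliver the same constant $\kappa_1$.
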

One can further show that for sufficiently small $\rho_c$, the value of $m$ is proportional to the inverse of the local Lipschitz constant of $\nabla \Psi$ at $\wh \theta_c$, in which case $\kappa_2$ admits an explicit expression (see Appendix~\ref{sec:app-aux}). Next, we show that our robust estimator is also consistent, which is a highly desirable statistical property.
\begin{theorem}[Consistency] \label{thm:consistency}
    Assume that $w_0$ is the unique solution of the problem $\min_{w\in\mc W}\EE_{\PP}\left[\ell_{\lambda}(X,Y,w)\right]$, where $\PP$ denotes the true distribution. Assume that $\mathcal{X}$ has finite cardinality, $\Theta=\mathbb{R}^p$, $\Psi$ has locally Lipschitz continuous gradients, and $\ell_\lambda(x,y,w)$ is convex in $w$ for each $x$ and $y$. If $\wh \theta_c \xrightarrow{p.} \lambda(w_0, \wh x_c)$ for each $c$, $\eps\to0,\rho_c\to0$ and $\eps\geq \sum_{c=1}^C\wh p_c\rho_c$ with probability going to $1$, then the distributionally robust estimator $w\opt$ that solves~\eqref{eq:dro} exists with probability going to $1$, and $w\opt\xrightarrow{p.} w_0$.
\end{theorem}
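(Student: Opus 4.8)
The plan is a classical M-estimation consistency argument organized around the convexity lemma for argmin convergence, with the real work concentrated in identifying the limit of the robust objective. Write $J_N(w) \Let \sup_{\QQ \in \mbb B(\Pnom)} \EE_\QQ[\ell_\lambda(X,Y,w)]$ for the objective of~\eqref{eq:dro} and $J_0(w) \Let \EE_\PP[\ell_\lambda(X,Y,w)]$ for the population risk. Since $\ell_\lambda(x,y,\cdot)$ is convex and $\QQ \mapsto \EE_\QQ[\ell_\lambda(X,Y,w)]$ is linear, $J_N$ is a pointwise supremum of convex functions, hence convex in $w$; $J_0$ is convex with unique minimizer $w_0$ by assumption. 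Both functions are finite on $\mc W$ because $\mc X$ is finite, $\lambda$ is continuous, and every $f(\cdot\,|\,\theta)$ with $\theta \in \Theta = \R^p$ has finite moments; in particular $\EE_\Pnom[\ell_\lambda(X,Y,w)] = \sum_{c=1}^C \wh p_c\big(\Psi(\lambda(w,\wh x_c)) - \inner{\nabla\Psi(\wh\theta_c)}{\lambda(w,\wh x_c)}\big)$ using $\EE_{f(\cdot|\theta)}[T(Y)] = \nabla\Psi(\theta)$.

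\emph{Step 1: pointwise convergence $J_N(w) \xrightarrow{p.} J_0(w)$.} Fix $w$. Since $\Pnom \in \mbb B(\Pnom)$ whenever $\sum_c \wh p_c \rho_c \le \eps$ — which holds with probability going to $1$ — we obtain the lower bound $J_N(w) \ge \EE_\Pnom[\ell_\lambda(X,Y,w)]$, while Proposition~\ref{prop:surrogate} supplies the matching upper bound $J_N(w) \le \EE_\Pnom[\ell_\lambda(X,Y,w)] + \ka_1 \sqrt{\mathrm{Var}_\Pnom(\ell_\lambda(X,Y,w))} + \ka_2 \max_c \|\lambda(w,\wh x_c)\|_2$ with $\ka_1 = \sqrt{2\eps}/\min_c \sqrt{\wh p_c}$ and $\ka_2 = \sqrt{2 \max_c \rho_c / m}$. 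As $\eps \to 0$, $\rho_c \to 0$ and $\wh p_c \xrightarrow{p.} \PP_X(\wh x_c) > 0$ by the law of large numbers (using $|\mc X| < \infty$), both $\ka_1$ and $\ka_2$ tend to $0$ in probability; the variance term and the $\max_c\|\lambda(w,\wh x_c)\|_2$ term stay bounded in probability because $\wh\theta_c \xrightarrow{p.} \lambda(w_0,\wh x_c)$ eventually confines $\wh\theta_c$ to a compact subset of $\Theta = \R^p$ on which the first two moments of $f(\cdot\,|\,\theta)$ are bounded, and $\mc X$ is finite. Hence $J_N(w) - \EE_\Pnom[\ell_\lambda(X,Y,w)] \xrightarrow{p.} 0$. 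Plugging $\wh p_c \xrightarrow{p.} \PP_X(\wh x_c)$, $\wh\theta_c \xrightarrow{p.} \lambda(w_0,\wh x_c)$ and continuity of $\Psi,\nabla\Psi$ into the closed form for $\EE_\Pnom[\ell_\lambda(X,Y,w)]$, and using that under the generative model~\eqref{eq:model} one has $\EE_\PP[T(Y)\mid X = \wh x_c] = \nabla\Psi(\lambda(w_0,\wh x_c))$, the nominal expectation converges in probability to $\sum_c \PP_X(\wh x_c)\big(\Psi(\lambda(w,\wh x_c)) - \inner{\nabla\Psi(\lambda(w_0,\wh x_c))}{\lambda(w,\wh x_c)}\big) = J_0(w)$. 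Combining the two displays yields $J_N(w) \xrightarrow{p.} J_0(w)$.

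\emph{Step 2: from pointwise convergence to the estimator.} Since $J_N$ and $J_0$ are finite convex functions on the convex set $\mc W$ and $J_N \to J_0$ in probability pointwise, the standard convexity lemma applies: pointwise convergence of convex functions upgrades to uniform convergence on compact subsets, which combined with uniqueness of the minimizer of $J_0$ gives argmin consistency. Concretely, fixing a small closed ball $\bar B$ around $w_0$ (assuming, as is standard, $w_0 \in \inte \mc W$), uniqueness forces $\inf_{\partial B} J_0 > J_0(w_0)$, so uniform convergence on $\bar B$ gives, with probability going to $1$, $\inf_{\partial B} J_N > J_N(w_0)$; convexity of $J_N$ then propagates this strict inequality to all of $\mc W \setminus \bar B$, so every minimizer of the continuous convex function $J_N$ lies in the compact set $\bar B \cap \mc W$, where the infimum is attained. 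Hence $w\opt$ exists with probability going to $1$, and since $\bar B$ was an arbitrarily small ball, $w\opt \xrightarrow{p.} w_0$.

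\emph{Main obstacle.} The crux is Step 1: showing that the two correction terms of the variance-regularization surrogate vanish in probability \emph{and} — more importantly — that the nominal expectation $\EE_\Pnom[\ell_\lambda(X,Y,w)]$ converges to the \emph{true} population risk $J_0(w)$ rather than to some model-dependent surrogate. This is exactly where the hypotheses enter: $\Theta = \R^p$ keeps $\wh\theta_c$ inside the natural parameter space and makes the moment maps globally well behaved, local Lipschitzness of $\nabla\Psi$ feeds Proposition~\ref{prop:surrogate}, finiteness of $\mc X$ turns the sums over $c$ into finite sums and supplies the LLN for $\wh p_c$, and $\wh\theta_c \xrightarrow{p.} \lambda(w_0,\wh x_c)$ together with~\eqref{eq:model} pins down the limit. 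Step 2 is then routine, the only mild subtlety being that existence of $w\opt$ is obtained jointly with consistency from the convexity argument rather than assumed a priori.
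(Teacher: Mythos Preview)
Your proposal is correct and follows essentially the same route as the paper: sandwich $J_N$ between $\EE_{\Pnom}[\ell_\lambda]$ and the variance-regularization surrogate of Proposition~\ref{prop:surrogate}, show the gap vanishes and the nominal risk converges to $J_0$ pointwise, then invoke the convexity lemma to upgrade to uniform convergence on compacts and conclude argmin consistency via the standard ball argument. The only cosmetic difference is that the paper re-derives a slight variant of the surrogate bound (using $\|t^\star-\wh t\|_1$ and Lemma~\ref{lemma:regularization-inner}) and establishes uniform convergence of the robust-versus-nominal gap directly, whereas you work purely pointwise and apply the convexity upgrade once at the end.

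One small point to tighten: when you claim $\kappa_2=\sqrt{2\max_c\rho_c/m}\to 0$, remember that the constant $m$ in Proposition~\ref{prop:surrogate} depends on $\wh\theta_c$, which is itself random. Your compactness remark about $\wh\theta_c$ being eventually confined to a fixed compact set is exactly what is needed to pin $m$ uniformly away from zero (this is what the paper's Lemma~\ref{lemma:regularization-inner} does explicitly, and also where $\Theta=\R^p$ is used), but you should state that this is what the compactness is buying for $m$, not only for the variance and $\|\lambda(w,\wh x_c)\|_2$ factors.
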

One can verify that choosing $\wh \theta_c$ using Examples~\ref{ex:nominal-1} and~\ref{ex:nominal-2} will satisfy the condition $\wh \theta_c \xrightarrow{p.} \lambda(w_0, \wh x_c)$, and as a direct consequence, choosing $\Pnom$ following these two examples will result in a consistent estimator under the conditions of Theorem~\ref{thm:consistency}. 

We now consider the asymptotic scaling rate of $\rho_c$ as the number $N_c$ of samples with the same covariate $\wh x_c$ tends to infinity. Lemma~\ref{lemma:asymptotic-joint} below asserts that $\rho_c$ should scale at the rate $N_c^{-1}$. Based on this result, we can set $\rho_c = a N_c^{-1}$ for all $c$, where $a > 0$ is a tuning parameter. This reduces significantly the burden of tuning $\rho_c$ down to tuning a single parameter $a$.

\begin{lemma}[Joint asymptotic convergence] \label{lemma:asymptotic-joint}
   Suppose that $| \mathcal X | = C$ with $\mathbb P(X = \widehat x_c) > 0$. Let $\theta_c = \lambda(w_0, \widehat x_c)$ and $\widehat{\mathbb P}$ be defined as in Example~2.2. Let $V_c=D_c  \mathrm{Cov}_{f(\cdot|\theta_c)}(T(Y))D_c^\top$, where
   $
    D_c= J (\nabla \Psi)^{-1}( \mathds E_{f(\cdot|\theta_c)}[T(Y)])
$ and $J$ denotes the Jacobian operator.
Then the following joint convergence holds 
    \vspace{-1.3mm}
    \begin{equation}\label{eq:jointcon}
    \big(N_1\times\mathrm{KL}(f(\cdot |\theta_1)\parallel f(\cdot |\widehat\theta_1)), \ldots, N_C\times \mathrm{KL}(f(\cdot|\theta_C)\parallel f(\cdot|\widehat\theta_C))\big)^\top \xrightarrow{d.} Z \qquad \text{as}\qquad N\to\infty,
    \end{equation}
    where $Z = (Z_1,\ldots, Z_C)^\top$ with $Z_c = \frac{1}{2}R_c^\top\nabla^2\Psi(\theta_c)R_c$, $R_c$ are independent and $R_c\sim\mathcal N(0,V_c)$.
\end{lemma}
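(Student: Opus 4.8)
The plan is to rewrite the Kullback--Leibler divergences appearing in~\eqref{eq:jointcon} as Bregman divergences, Taylor-expand them to second order around the true parameters, and feed in a joint central limit theorem for the moment-matching estimators $\widehat\theta_c$. Concretely, I would first invoke Lemma~\ref{lemma:KL-exp-main} to write, for each $c$,
\[
\KL\big(f(\cdot|\theta_c)\parallel f(\cdot|\widehat\theta_c)\big) = \Psi(\widehat\theta_c) - \Psi(\theta_c) - \big\langle \nabla\Psi(\theta_c),\, \widehat\theta_c - \theta_c\big\rangle =: B_\Psi(\widehat\theta_c, \theta_c),
\]
the Bregman divergence generated by $\Psi$. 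Since the exponential family is regular ($\Theta=\R^p$ open, affine independence of the sufficient statistics), $\Psi$ is smooth and strictly convex, so $\nabla^2\Psi(\theta_c)\succ0$ and $B_\Psi(\cdot,\theta_c)$ attains its minimum value $0$ at $\theta_c$ with vanishing gradient there; a second-order Taylor expansion then gives $B_\Psi(\widehat\theta_c,\theta_c) = \tfrac12(\widehat\theta_c-\theta_c)^\top\nabla^2\Psi(\theta_c)(\widehat\theta_c-\theta_c) + o(\|\widehat\theta_c-\theta_c\|^2)$. The upshot is that it suffices to establish the joint convergence $\big(\sqrt{N_c}(\widehat\theta_c-\theta_c)\big)_{c=1}^C \xrightarrow{d.} (R_c)_{c=1}^C$ with independent $R_c\sim\mathcal N(0,V_c)$: multiplying the $c$-th expansion by $N_c$, using $\sqrt{N_c}(\widehat\theta_c-\theta_c)=O_P(1)$ to absorb the remainder into $o_P(1)$, and applying the continuous mapping theorem to the block-separable quadratic map $(r_1,\ldots,r_C)\mapsto\big(\tfrac12 r_c^\top\nabla^2\Psi(\theta_c)r_c\big)_{c=1}^C$ together with Slutsky's lemma then yields~\eqref{eq:jointcon} with $Z_c=\tfrac12 R_c^\top\nabla^2\Psi(\theta_c)R_c$.

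Next I would prove this joint CLT. Under Example~\ref{ex:nominal-1}, $\widehat\theta_c=(\nabla\Psi)^{-1}(\bar T_c)$ with $\bar T_c=N_c^{-1}\sum_{i:\widehat x_i=\widehat x_c}T(\widehat y_i)$, and the standard exponential-family identities give $\nabla\Psi(\theta_c)=\EE_{f(\cdot|\theta_c)}[T(Y)]$ and $\nabla^2\Psi(\theta_c)=\mathrm{Cov}_{f(\cdot|\theta_c)}(T(Y))$. Writing $U_i^{(c)}=\mathbbm{1}\{X_i=\widehat x_c\}\big(T(Y_i)-\nabla\Psi(\theta_c)\big)$, the vectors $U_i=(U_i^{(1)},\ldots,U_i^{(C)})$ are i.i.d.\ mean-zero with block-diagonal covariance (the cross term $U_i^{(c)}(U_i^{(c')})^\top$ is identically zero for $c\neq c'$, since a given observation falls in exactly one class), so the multivariate CLT gives $N^{-1/2}\sum_{i=1}^N U_i\xrightarrow{d.}(G_1,\ldots,G_C)$ with the $G_c$ \emph{independent}, $G_c\sim\mathcal N\big(0,\mathbb P(X=\widehat x_c)\mathrm{Cov}_{f(\cdot|\theta_c)}(T(Y))\big)$. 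Using $N_c/N\xrightarrow{p.}\mathbb P(X=\widehat x_c)>0$ (which also forces $N_c\to\infty$ almost surely, hence $\widehat\theta_c\xrightarrow{p.}\theta_c$ by the law of large numbers and continuity of $(\nabla\Psi)^{-1}$), Slutsky's lemma converts this into $\sqrt{N_c}(\bar T_c-\nabla\Psi(\theta_c))=\sqrt{N/N_c}\cdot N^{-1/2}\sum_i U_i^{(c)}\xrightarrow{d.}\mathcal N\big(0,\mathrm{Cov}_{f(\cdot|\theta_c)}(T(Y))\big)$, jointly over $c$ with independent blocks. Finally I would apply the vector delta method to $g=(\nabla\Psi)^{-1}$ at the point $\nabla\Psi(\theta_c)=\EE_{f(\cdot|\theta_c)}[T(Y)]$, whose Jacobian $D_c=J(\nabla\Psi)^{-1}\big(\EE_{f(\cdot|\theta_c)}[T(Y)]\big)$ is well-defined and continuous by the inverse function theorem (since $\nabla^2\Psi(\theta_c)$ is invertible), obtaining $\sqrt{N_c}(\widehat\theta_c-\theta_c)\xrightarrow{d.}R_c\sim\mathcal N(0,V_c)$ with $V_c=D_c\mathrm{Cov}_{f(\cdot|\theta_c)}(T(Y))D_c^\top$; independence across $c$ is retained because $g$ acts block-by-block. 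Plugging this into the first step completes the argument.

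\textbf{The main obstacle.} I expect the genuinely delicate part to be the \emph{independence} of the limiting blocks $R_c$. Because the sample is not stratified by covariate value, the class sizes $N_c$ are random, so one cannot simply paste together $C$ separate one-dimensional limit theorems; the argument has to pass through a single $N^{-1/2}$-normalized multivariate CLT with block-diagonal limiting covariance (equivalently, condition on $(N_1,\ldots,N_C)$ and use that each $N_c\to\infty$ a.s., so the conditional Gaussian limits do not depend on the conditioning). A secondary nuisance is that the Taylor remainder in the first step is only negligible \emph{after} multiplication by $N_c$; controlling it cleanly relies on both the $N_c^{-1/2}$ convergence rate and the consistency $\widehat\theta_c\xrightarrow{p.}\theta_c$, which is why I would establish that consistency explicitly along the way.
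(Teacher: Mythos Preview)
Your proposal is correct and follows the same high-level architecture as the paper: (i) obtain a joint CLT for $\big(\sqrt{N_c}(\widehat\theta_c-\theta_c)\big)_{c=1}^C$ with independent Gaussian limit blocks, (ii) Taylor-expand the KL divergence (written via Lemma~\ref{lemma:KL-exp-main}) to second order, and (iii) apply the continuous mapping theorem and Slutsky. The one substantive difference is in how you handle the random class sizes $N_c$ in step (i). The paper replaces $N_c$ by the deterministic count $\lfloor r_cN\rfloor$, builds auxiliary partial sums $\bar W_c$ from the first $\lfloor r_cN\rfloor$ observations in class $c$ (augmented by independent copies if needed), applies the i.i.d.\ CLT to these, and then controls $\tilde W_c-\bar W_c$ by a Chebyshev bound conditional on $N_c$. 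Your route is more direct: you embed each class into a single i.i.d.\ array $U_i^{(c)}=\mathbbm{1}\{X_i=\widehat x_c\}(T(Y_i)-\nabla\Psi(\theta_c))$, obtain one multivariate CLT at $\sqrt{N}$ scale with an exactly block-diagonal limiting covariance (because the indicators for distinct $c$ annihilate each other), and then rescale by $\sqrt{N/N_c}\xrightarrow{p.}r_c^{-1/2}$ via Slutsky. This indicator device is shorter and avoids the auxiliary random-index construction; the paper's version, in turn, makes the Anscombe-type nature of the argument explicit. For step (ii) the paper uses the exact Lagrange-remainder form $\tfrac12(\widehat\theta_c-\theta_c)^\top\nabla^2\Psi\big(\theta_c+\eta(\widehat\theta_c-\theta_c)\big)(\widehat\theta_c-\theta_c)$ and then invokes continuity of $\nabla^2\Psi$, which is equivalent to your $o(\|\widehat\theta_c-\theta_c\|^2)$ expansion once you have $\widehat\theta_c\xrightarrow{p.}\theta_c$.
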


Assuming $w_{MLE}$ that solves~\eqref{eq:MLE} is asymptotically normal with square-root convergence rate, we remark that the asymptotic joint convergence~\eqref{eq:jointcon} also holds for $\Pnom$ in Example~\ref{ex:nominal-2}, though in this case the limiting distribution $Z$ takes a more complex form that can be obtained by the delta method.



Finally, we study the structure of the worst-case distribution $\QQ\opt = \arg\max_{\QQ \in \mbb B(\Pnom)} \EE_{\QQ} \big[ \ell_\lambda(X, Y, w )\big] 
$
for any value of input $w$. This result explicitly quantifies how the adversary will generate the adversarial distribution adapted to any estimate $w$ provided by the statistician.

\begin{theorem}[Worst-case joint distribution] \label{thm:extreme}
	Given $\rho \in \R_+^C$ and $\eps \in \R_+$ such that $\sum_{c=1}^C \wh p_c \rho_c \le \eps$. For any $w$ and $c=1,\ldots,C$, let
	$\QQ_{Y|\wh x_c}\opt \sim f(\cdot | \theta_c\opt)$ with $\theta_c\opt = \wh \theta_c - \lambda(w, \wh x_c)/\dualvar_c\opt$, where $\dualvar_c\opt > 0$ is the solution of the nonlinear equation
    \be \notag
        \Psi\big( \wh \theta_c - \dualvar^{-1}\lambda(w, \wh x_c) \big) + \dualvar^{-1} \inner{\nabla \Psi\big(\wh \theta_c - \dualvar^{-1} \lambda(w, \wh x_c) \big)}{\lambda(w, \wh x_c)} = \Psi(\wh \theta_c) - \rho_c,
    \ee
   and let $t_c\opt = \Psi(\lambda(w, \wh x_c)) - \inner{\nabla \Psi(\theta_c\opt)}{\lambda(w, \wh x_c)}$.
	Let $\alpha\opt \in \R$ and $\beta\opt \in \R_{++}$ be the solution of the following system of nonlinear equations
	\begin{subequations}
	\begin{align*}
		\textstyle\sum_{c=1}^C \wh p_c \exp \big( \beta^{-1}(t_c\opt - \alpha) - \rho_c - 1 \big) - 1 &= 0 \\
		\textstyle\sum_{c=1}^C \wh p_c (t_c\opt - \alpha) \exp \big( \beta^{-1}(t_c\opt - \alpha) - \rho_c - 1 \big) - (\eps + 1) \beta &= 0, 
	\end{align*}
	then the worst-case distribution is
	$\QQ\opt = \sum_{c=1}^C \wh p_c \exp\big( (\beta\opt)^{-1}(t_c\opt - \alpha\opt) - \rho_c -1 \big) \delta_{\wh x_c} \otimes \QQ_{Y|\wh x_c}\opt$.
	\end{subequations}
\end{theorem}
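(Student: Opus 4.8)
The plan is to assemble the worst-case distribution by tracing back, in reverse, the chain of reformulations used to establish Proposition 3.2, Proposition 3.3 and Theorem 3.5, identifying the primal optimizers that attain each dual infimum. First I would address the inner (conditional) layer: for fixed $w$ and fixed $\wh x_c$, the supremum in~\eqref{eq:inner} over $\QQ_{Y|\wh x_c} \in \mbb B_{Y|\wh x_c}$ is a KL-constrained problem whose dual is the univariate convex program~\eqref{eq:inner-refor}. The Lagrangian stationarity condition for the primal problem forces the worst-case conditional density to be an exponential tilt of $\Pnom_{Y|\wh x_c} \sim f(\cdot|\wh\theta_c)$ by the loss $\ell_\lambda(\wh x_c, \cdot, w)$; since $T(y)$ enters $\ell_\lambda$ linearly and $f(\cdot|\wh\theta_c)$ is itself exponential in $T(y)$, this tilt stays inside the same exponential family, giving $\QQ_{Y|\wh x_c}\opt \sim f(\cdot|\theta_c\opt)$ with $\theta_c\opt = \wh\theta_c - \lambda(w,\wh x_c)/\dualvar_c\opt$, where $\dualvar_c\opt$ is precisely the dual multiplier on the KL constraint. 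Enforcing $\KL(\QQ_{Y|\wh x_c}\opt \parallel \Pnom_{Y|\wh x_c}) = \rho_c$ at optimality (the constraint is active when $\rho_c>0$) and expanding via Lemma~\ref{lemma:KL-exp-main} yields exactly the stated nonlinear equation for $\dualvar_c\opt$; the optimal value $t_c\opt = \EE_{\QQ_{Y|\wh x_c}\opt}[\ell_\lambda(\wh x_c,Y,w)] = \Psi(\lambda(w,\wh x_c)) - \inner{\nabla\Psi(\theta_c\opt)}{\lambda(w,\wh x_c)}$ follows from $\EE_{f(\cdot|\theta_c\opt)}[T(Y)] = \nabla\Psi(\theta_c\opt)$.

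Next I would handle the outer (marginal) layer. With the conditional optimizers fixed and the inner worst-case values equal to $t_c\opt$, Proposition~\ref{prop:refor} reduces the problem to maximizing $\sum_c \QQ_X(\{\wh x_c\}) t_c\opt$ over $\QQ_X$ subject to the single constraint $\KL(\QQ_X \parallel \Pnom_X) + \sum_c \rho_c \QQ_X(\{\wh x_c\}) \le \eps$. This is a finite-dimensional KL-constrained linear program over the simplex; its Lagrangian dual (in multipliers $\alpha$ for the normalization and $\beta$ for the KL-plus-penalty budget) is exactly the infimum appearing in Proposition~\ref{prop:refor} and Theorem~\ref{thm:main}. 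Writing the KKT stationarity condition in the weights $q_c \Let \QQ_X(\{\wh x_c\})$ gives $q_c = \wh p_c \exp\!\big((\beta\opt)^{-1}(t_c\opt - \alpha\opt) - \rho_c - 1\big)$, and imposing primal feasibility — the normalization $\sum_c q_c = 1$ and activeness of the budget constraint $\sum_c q_c(\log(q_c/\wh p_c) + \rho_c) = \eps$, which after substituting the expression for $q_c$ simplifies to $\sum_c (t_c\opt - \alpha\opt) q_c = (\eps+1)\beta\opt$ — yields precisely the stated pair of equations for $(\alpha\opt,\beta\opt)$. Reassembling, $\QQ\opt = \sum_c q_c\, \delta_{\wh x_c} \otimes \QQ_{Y|\wh x_c}\opt$, which is the claimed formula.

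The main obstacle, and the place where care is needed, is justifying that the primal optima are actually \emph{attained} and that strong duality holds at each layer, so that the KKT conditions I invoke are valid and the constructed $\QQ\opt$ is genuinely a maximizer rather than merely a candidate. For the conditional layer one must check a Slater-type condition for $\mbb B_{Y|\wh x_c}$ (nonemptiness of its interior, guaranteed when $\rho_c > 0$, with the boundary case $\rho_c = 0$ handled separately by $\QQ_{Y|\wh x_c}\opt = \Pnom_{Y|\wh x_c}$) and verify that the exponentially tilted measure indeed lies in $\mc M(\mc Y)$, i.e. that $\wh\theta_c - \lambda(w,\wh x_c)/\dualvar_c\opt \in \Theta$ for the relevant $\dualvar_c\opt$; regularity of the exponential family ($\Theta$ open) plus the fact that $\dualvar_c\opt$ is bounded away from zero along the optimal path makes this manageable. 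For the marginal layer, attainment is automatic by compactness of the simplex and continuity, and strong duality follows from the convexity of $\KL(\cdot\parallel\Pnom_X)$ together with a Slater point (e.g. $\Pnom_X$ itself, feasible because $\sum_c \wh p_c\rho_c \le \eps$). Finally, one must confirm that the two-layer decomposition is exact — that optimizing the conditionals first and the marginal second recovers the joint optimum — which is exactly the content of Proposition~\ref{prop:refor} and can be cited directly. I would close by noting that uniqueness of $\theta_c\opt$ (hence of $\QQ\opt$) follows from strict convexity of the univariate reformulation~\eqref{eq:inner-refor} whenever $\lambda(w,\wh x_c) \ne 0$.
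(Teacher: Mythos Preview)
Your proposal is correct and follows essentially the same two-layer strategy as the paper: the paper isolates the worst-case conditional in a separate Proposition (showing $\QQ_{Y|\wh x_c}\opt$ is feasible with $\KL=\rho_c$ and that its objective value matches the dual optimum~\eqref{eq:inner-refor} at $\dualvar_c\opt$), then proves a separate Lemma identifying the support point $q\opt$ of the marginal set $\mc Q$ via the same two KKT equations you wrote, and finally assembles $\QQ\opt$ and verifies feasibility and optimality by a chain of equalities through~\eqref{eq:proof:1}--\eqref{eq:proof:3}. Your exponential-tilting motivation and explicit discussion of Slater/attainment are a bit more expository than the paper's verification-style proof, but the underlying argument is the same.
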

Notice that $\QQ\opt$ is decomposed into a worst-case marginal distribution of $X$ supported on $\wh x_c$ and a collection of worst-case conditional distributions $\QQ\opt_{Y|\wh x_c}$.

\section{Numerical Experiments}
\begin{figure}[t]
    \centering
    \scalebox{0.44}{ \includegraphics{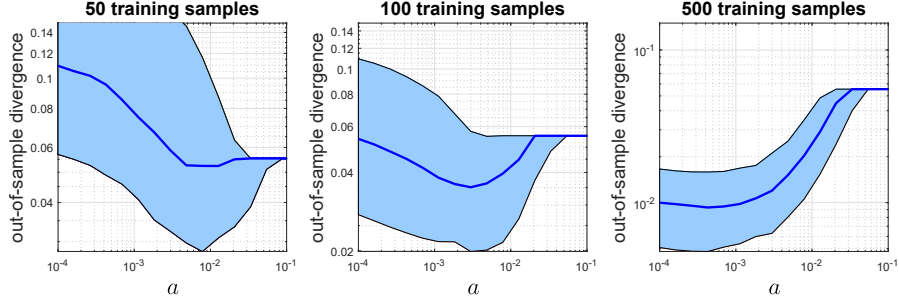}}
    \caption{Median (solid blue line) and the 10th-90th percentile region (shaded) of out-of-sample divergence loss collected from 100 independent runs.}
    \label{fig:1}
\end{figure}
\begin{table}[t]
	\centering
	\def\arraystretch{1}
	\begin{tabular}{|c|c|c|c|c|}
	\hline
	     && $N= 50$ & $N = 100$ & $N= 500$\\
		\hline
        &100(\text{DRO}-\text{MLE})/\text{MLE} & $-69.84\pm3.33\%$& $-52.66\pm3.98\%$ & $-22.25\pm4.11\%$\\
        $\text{CI}_{95\%}$ &$100(\text{DRO}-L_1)/L_1$ & $-22.59\pm4.02\%$ & $-19.90\pm4.16\%$& $-13.20\pm3.38\%$ \\
        &$100(\text{DRO}-L_2)/L_2$  & $-14.98\pm4.08\%$ & $-9.98\pm4.14\%$& $-5.62\pm 2.83\%$\\
        \hline
        \multirow{4}{3.5em}{$\text{CVaR}_{5\%}$} & MLE & 0.4906 & 0.1651&0.0246\\
        & {$L_1$} & 0.0967 &0.0742&0.0195\\
        & {$L_2$} & 0.0894 &0.0692&0.0176\\
        & {DRO} & 0.0547 &0.0518&0.0172\\
        \hline
	\end{tabular}
	\caption{Comparison between the DRO estimator with the other methods. Lower values are better.
	}
	\label{table2} \vspace{-6mm}
\end{table}
We now showcase the abilities of the proposed framework in the distributionally robust Poisson and logistic regression settings using a combination of simulated and empirical experiments. All optimization problems are modeled in MATLAB using CVX~\cite{cvx} and solved by the exponential conic solver MOSEK~\cite{mosek} on an Intel i7 CPU (1.90GHz) computer. Optimization problems \eqref{eq:Poisson:refo:exp} and \eqref{eq:Logistic:refo:exp} are solved in under 3 seconds for all instances both in the simulated and empirical experiments. The MATLAB code is available at \url{https://github.com/angelosgeorghiou/DR-Parametric-MLE}.

\subsection{Poisson Regression}
We will use simulated experiments to demonstrate the behavior of the tuning parameters and to compare the performance of our estimator with regard to other established methods. We assume that the true distribution $\mathbb{P}$ is discrete, the $10$-dimensional covariate $X$ is supported on $K = 100$ points and their locations $\wh x_k$ are generated i.i.d.~using a standard normal distribution. We then generate a $K$-dimensional vector whose components are i.i.d.~uniform over $M_k\in[0, 10000]$, then normalize it to get the probability vector $p_k = M_k/M$ of the true marginal distribution of $X$. The value $w_0$ that determines the true conditional distribution $\PP_{Y|X}$ via the generative model~\eqref{eq:model} is assigned to $w_0 = \tilde w/ \| \tilde w\|_1$, where $\tilde w$ is drawn randomly from a 10-dimensional standard normal distribution.

Our experiment comprises 100 simulation runs. In each run we generate $N\in \{50,100,500\}$ training samples i.i.d.~from $\PP$ and use the MLE-compatible nominal distribution $\Pnom$ of the form~\eqref{eq:nominal} as in Example~\ref{ex:nominal-2}. We calibrate  the regression model  \eqref{eq:Poisson:refo:exp}  by tuning $\rho_c = a N_c^{-1}$  with $a\in[10^{-4},1]$ and $\eps\in [\sum_{c=1}^C \wh p_c \rho_c,1]$, both using a logarithmic scale with 20 discrete points.  The quality of an estimate $w\opt$ with respect to the true distribution $\PP$ is evaluated by the out-of-sample divergence loss
    \be \notag
        \notag \EE_{\PP_X}[\KL(\PP_{Y|X} \parallel \QQ_{w\opt, Y|X})] = \textstyle\sum_{k=1}^{K} p_k \big( \exp(w_0^\top \wh x_k) \big( (w_0-w\opt)^\top \wh x_k  - 1   \big) + \exp(\wh x_k^\top w\opt) \big).
    \ee  
    
In the first numerical experiment, we fix the marginal radius $\eps=1$ and examine how tuning the conditional radii $\rho_c$ can improve the quality of the estimator. Figure~\ref{fig:1} shows the 10th, 50th and 90th percentile of the out-of-sample divergence for different samples sizes. If the constant $a$ is chosen judiciously, incorporating the uncertainty in the conditional distribution can reduce the out-of-sample divergence loss by $17.65\%$, $10.55\%$ and $1.82\%$ for $N = 50, 100$ and $500$, respectively. 

Next, we compare the performance of our proposed estimator to the $w_{MLE}$ that solves~\eqref{eq:MLE} and the 1-norm ($L_1$) and 2-norm ($L_2$) MLE regularization, where the regularization weight takes values in $[10^{-4},1]$  on the logarithmic scale with 20 discrete points. In each run, we choose the optimal parameters that give the lowest of out-of-sample divergence for each method, and construct the empirical distribution of the out-of-sample divergence collected from 100 runs. Table~\ref{table2} reports the 95\% confidence intervals of $100(\text{DRO}-\text{MLE})/\text{MLE}$,   $100(\text{DRO}-L_1)/L_1$  and $100(\text{DRO}-L_2)/L_2$, as well as the 5\% Conditional Value-at-Risk (CVaR). Our approach delivers lower out-of-sample divergence loss compared to the other methods, and additionally ensures a lower value of CVaR for all sample sizes. This improvement is particularly evident in small sample sizes.


\begin{table}[h]
	\centering
	\resizebox{1\columnwidth}{!}{
	\begin{tabular}{|l|c|c|c|c|c||c|c|c|c|c|}
	\hline
	&\multicolumn{5}{c||}{AUC}&\multicolumn{5}{|c|}{CCR}\\
	\hline
	Dataset &	DRO & KL	&	$L_1$	&	$L_2$ &	MLE	&	DRO &  KL	&	$L_1$	&	$L_2$ & MLE	\\
\hline
australian ($N=690,\,n = 14$) 	&	\textbf{92.74}	&	92.62	&	92.73	&	92.71	&	92.61	&	\textbf{85.75}	&	85.72	&	85.52	&	85.60	&	85.72	\\
banknote ($N=1372,\,n = 4$)	&	\textbf{98.46}	&	\textbf{98.46}	&	98.43	&	98.45	&	98.45	&	94.31	&	94.32	&	94.16	& \textbf{94.35}	&	94.32	\\
climate ($N=540,\,n=18$)	&	94.30	&	82.77	&	\textbf{94.85}	&	94.13	&	82.76	&	\textbf{95.04}	&	93.89	&	94.85	&	94.83	&	93.89	\\
german ($N=1000,n=19$)	&	\textbf{75.75}	&	75.68	&	75.74	&	75.74	&	75.67	&	73.86	&	\textbf{74.05}	&	73.82	&	73.70	&	\textbf{74.05}	\\
haberman ($N=306,n=3$)	&	66.86	&	67.21	&	\textbf{69.19}	&	68.17	&	67.20	&	\textbf{73.83}	&	73.80	&	73.20	&	73.18	&	73.80	\\
housing ($N=506,\,n=13$)	&	\textbf{76.24}	&	75.73	&	75.37	&	75.57	&	75.73	&	91.65	&	91.70	&	\textbf{92.68}	&	92.65	&	91.70	\\
ILPD ($N=583,\,n=10$)	&	\textbf{74.01}	&	73.66	&	73.56	&	73.77	&	73.66	&	71.11	&	71.07	&	71.68	&	\textbf{71.79}	&	71.07	\\
mammo. ($N=830\,n = 5$)	&	\textbf{87.73}	&	87.72	&	87.70	&	87.68	&	87.71	&	81.00	&	\textbf{81.20}	&	80.99	&	80.94	&	\textbf{81.20}	\\
\hline
	\end{tabular}
	}
	\caption{Average area under the curve (AUC) and correct classification rates (CCR) on UCI datasets ($m = 1$). }
	\label{table:datasets}
\end{table}

\vspace{-2mm}
\subsection{Logistic Regression}

We now study the performance of our proposed estimation in a classification setting using data sets from the  UCI repository~\cite{Dua:2019}. We compare four different models: our proposed DRO estimator~\eqref{eq:Logistic:refo:exp}, the $w_{MLE}$ that solves~\eqref{eq:MLE}, the 1-norm ($L_1$) and 2-norm ($L_2$) MLE regularization. In each independent trial, we randomly split the data into train-validation-test set with proportion 50\%-25\%-25\%. For our estimator, we calibrate  the regression model  \eqref{eq:Logistic:refo:exp} by tuning $\rho_c = a N_c^{-1}$  with $a\in[10^{-4},10]$ using a logarithmic scale with 10 discrete points and setting $\eps=2\sum_{c=1}^C \wh p_c \rho_c$.
Similarly, for the $L_1$ and $L_2$ regularization, we calibrate the regularization weight from $[10^{-4},1]$  on the logarithmic scale with 10 discrete points. Leveraging Proposition~\ref{prop:reweighting}, we also compare our approach versus the DRO \textit{non}parametric Kullback-Leibler (KL) MLE by setting $\rho_c = 0$ and tune only with $\varepsilon \in [10^{-4}, 10]$ with 10 logarithmic scale points. The performance of the methods was evaluated on the testing data using two popular metrics: the correct classification rate (CCR) with a threshold level of 0.5, and the area under the receiver operating characteristics curve (AUC). Table~\ref{table:datasets} reports the performance of each method averaged over 100 runs. One can observe that our estimator performs reasonably well compared to other regularization techniques in both performance metrics.

\begin{remark}[Uncertainty in $\wh x_c$]
    The absolute continuity condition of the KL divergence implies that our proposed model cannot hedge against the error in the covariate $\wh x_c$. It is natural to ask which model can effectively cover this covariate error. Unfortunately, answering this question needs to overcome to technical difficulties: first, the log-partition function $\Psi$ is convex; second, the there are multiplicative terms between $X$ and $Y$ in the objective function. Maximizing over the $X$ space to find the worst-case covariate is thus difficult. Alternatively, one can think of perturbing each $\widehat x_c$ in a finite set but this approach will lead to trivial modifications of the constraints of problem~\eqref{eq:refor}.
\end{remark}

\paragraph{\bf Acknowledgments.}
Material in this paper is based upon work supported by the Air Force Office of Scientific Research under award number FA9550-20-1-0397. Additional support is gratefully acknowledged from NSF grants 1915967, 1820942, 1838676 and from the China Merchant Bank.

\newpage
\appendix

This appendix is organized as follows. Section~\ref{sec:app1}-\ref{sec:app3} provide the detailed proofs for all the technical results in the main paper. Section~\ref{sec:app-aux} provides further discussion on the variance regularization surrogate result in Proposition~\ref{prop:surrogate}.

\section{Proofs of Section~\ref{sect:ambiguity-set}}
\label{sec:app1}

\begin{proof}[Proof of Example~\ref{ex:nominal-1}]
We note that 
\begin{align*}
    \Min{w \in \mc W}~\EE_{\Pnom}[ \ell_{\lambda}(X, Y, w)]& =\Min{w \in \mc W}\sum_{c=1}^C\wh p_c\left(\Psi(\lambda(w,\wh x_c))-\inner{\EE_{\Pnom_{Y|\wh x_c}}[T(Y)]}{\lambda(w,\wh x_c)}\right)\\
    & = \Min{w \in \mc W}\sum_{c=1}^C\wh p_c\left(\Psi(\lambda(w,\wh x_c))-\inner{\nabla \Psi(\wh\theta_c)}{\lambda(w,\wh x_c)}\right).
\end{align*}
If  $\wh \theta_c = (\nabla \Psi)^{-1}\big((N_c)^{-1} \sum_{\wh x_i = \wh x_c} T(\wh y_i) \big)$, then we have
\begin{align*}
\Min{w \in \mc W}~\EE_{\Pnom}[ \ell_{\lambda}(X, Y, w)]&=\Min{w \in \mc W}\sum_{c=1}^C\wh p_c\left(\Psi(\lambda(w,\wh x_c))-\inner{\frac{\sum_{\wh x_i = \wh x_c} T(\wh y_i)}{N_c}}{\lambda(w,\wh x_c})\right)\\
&=\Min{w \in \mc W} \frac{1}{N} \sum_{i=1}^{N} \left( \Psi( \lambda(w, \wh x_i)) - \inner{T(\wh y_i)}{ \lambda(w, \wh x_i)} \right),
\end{align*}
where we used $\wh p_c = N_c/N$. Therefore $w_{MLE}$ solves $\min_{w \in \mc W}~\EE_{\Pnom}[ \ell_{\lambda}(X, Y, w)]$.
\end{proof}

\begin{proof}[Proof of Example~\ref{ex:nominal-2}]
We find
\begin{align*}
\Min{w \in \mc W}~\EE_{\Pnom}[ \ell_{\lambda}(X, Y, w)]&= \Min{w \in \mc W} \sum_{c=1}^C\wh p_c\left(\Psi\big(\lambda(w,\wh x_c) \big)-\inner{\EE_{\Pnom_{Y|\wh x_c}}[T(Y)]}{\lambda(w,\wh x_c)}\right) \\
&\ge \sum_{c=1}^C\wh  p_c \Min{w_c \in \mc W} \left(\Psi\big(\lambda(w_c,\wh x_c) \big)-\inner{\EE_{\Pnom_{Y|\wh x_c}}[T(Y)]}{\lambda(w_c,\wh x_c)}\right) \\
&= \sum_{c=1}^C\wh  p_c  \left(\Psi\big(\lambda(w_{MLE},\wh x_c) \big)-\inner{\EE_{\Pnom_{Y|\wh x_c}}[T(Y)]}{\lambda(w_{MLE},\wh x_c)}\right),
\end{align*}
where the first equality follows from the definition of the log-loss function $\ell_\lambda$, the inequality follows because $\wh p_c > 0$, and the last equality follows because of the convex conjugate relationship that implies the optimal solution $w_c\opt$ should satisfy
\[
    \nabla \Psi\big(\lambda(w_c\opt, \wh x_c)\big) = \EE_{\Pnom_{Y|\wh x_c}}[T(Y)] = \nabla \Psi\big(\lambda(w_{MLE}, \wh x_c) \big) \implies w_c\opt = w_{MLE}.
\]
This implies that $w_{MLE}$ solves $\min_{w \in \mc W}~\EE_{\Pnom}[ \ell_{\lambda}(X, Y, w)]$ and completes the proof.
\end{proof}

\begin{proof}[Proof of Proposition~\ref{prop:conservative}]
	Fix any set of conditional radii $\rho \in \R_+^C$. If $\mbb B_{\eps, \rho}(\Pnom)$ is empty then it is trivial that $\mbb B_{\eps, \rho}(\Pnom) \subset \mc B_{\eps}(\Pnom)$. Suppose that $\mbb B_{\eps, \rho}(\Pnom)$ is non-empty and pick any $\QQ \in \mbb B_{\eps, \rho}(\Pnom)$. By definition of the set $\mbb B_{\eps, \rho}(\Pnom)$, $\QQ$ can be decomposed into a marginal $\QQ_X$ and a collection of conditional measures $\QQ_{Y|\wh x_c}$. Furthermore, because $\eps$ is finite, the marginal $\QQ_X$ should be absolutely continuous with respect to $\Pnom_X$. We have
	\begin{align*}
		\KL(\QQ \parallel \Pnom) &=  \KL(\QQ_X \parallel \Pnom_X) + \EE_{\QQ_X}[\KL(\QQ_{Y|X} \parallel \Pnom_{Y|X})] \\
		&\le \KL(\QQ_X \parallel \Pnom_X) + \EE_{\QQ_X}[\sum_{c=1}^C \rho_c \mathbbm{1}_{\wh x_c}(X)] \le \eps,
	\end{align*}
	where the equality is from the chain rule of the conditional relative entropy~\cite[Lemma~7.9]{ref:gray2011entropy}. The first inequality follows from the fact that $\KL(\QQ_{Y|\wh x_c} \parallel \Pnom_{Y|\wh x_c}) \le \rho_c$ for every $c$. The second inequality follows from the last constraint defining the set $\mbb B_{\eps, \rho}(\Pnom)$. This implies that $\QQ \in \mc B_{\eps}(\Pnom)$, and because $\QQ$ was chosen arbitrarily, we have $\mbb B_{\eps, \rho} \subseteq \mc B_{\eps}(\Pnom)$. As a consequence, $\bigcup_{\rho \in \R_+^C} \mbb B_{\eps, \rho}(\Pnom) \subseteq \mc B_{\eps}(\Pnom)$.
	
	Regarding the reverse relation, pick an arbitrary $\QQ \in \mc B_\eps(\Pnom)$ which admits the decomposition into a marginal $\QQ_X$ and conditional measures $\QQ_{Y|\wh x_c}$. By setting the conditional radii $\rho \in \R_+^C$ with $\rho_c = \KL(\QQ_{Y|\wh x_c} \parallel \Pnom_{Y|\wh x_c})$ for every $c$, one can verify using the chain rule of the conditional relative entropy that $\QQ \in \mbb B_{\eps, \rho}(\Pnom)$. This implies that $\mbb B_{\eps, \rho}(\Pnom) \subseteq \bigcup_{\rho \in \R_+^C} \mbb B_{\eps, \rho}(\Pnom)$.
	
	Concerning the last statement, notice that the condition~$\sum_{c=1}^C \wh p_c \rho_c \leq \eps$ implies that $\Pnom \in \mbb B_{\eps, \rho}(\Pnom)$ and thus $\mbb B_{\eps, \rho}(\Pnom)$ is non-empty. The proof is complete.
\end{proof}

\section{Proofs of Section~\ref{sect:refor}}

The proof of Proposition~\ref{prop:refor} relies on the following preliminary result.

\begin{lemma} \label{lemma:support}    Let $\wh p \in \R_{++}^C$ be a probability vector summing up to one. For any $\eps \in \R_+$ and $\rho \in \R_+^C$ satisfying $\sum_{c=1}^C \wh p_c \rho_c \leq \eps$, the finite dimensional set
\be \label{eq:Q-def}
    \mc Q \Let \left\{q \in \R_+^C: \sum_{c =1}^C q_c = 1, ~\ds \sum_{c =1}^C q_c (\log q_c - \log \wh p_c + \rho_c) \leq \eps \right\}
\ee
is compact and convex. Moreover, the support function $h_{\mc Q}$ of $\mc Q$ satisfies
\[
    \forall t \in \R^C: \quad h_{\mc Q}(t) \Let \Sup{q \in \mc Q}~q^\top t = \Inf{\alpha \in \R,~\beta \in \R_{++}}~\left\{ \alpha + \beta\eps + \beta \sum_{c=1}^C \wh p_c \exp \Big( \frac{t_c - \alpha}{\beta} - \rho_c - 1 \Big) \right\}.
\]
\end{lemma}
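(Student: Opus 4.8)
The plan is to establish compactness and convexity of $\mc Q$ directly, and then obtain the support-function formula via Lagrangian duality. First I would observe that $\mc Q$ is the intersection of the probability simplex in $\R^C$ (which is compact and convex) with the sublevel set $\{q : \phi(q) \le \eps\}$, where $\phi(q) \Let \sum_{c=1}^C q_c(\log q_c - \log\wh p_c + \rho_c)$; using the convention $0\log 0 = 0$, the map $\phi$ is continuous on the simplex and convex (it is a sum of the relative-entropy term $\sum q_c\log(q_c/\wh p_c)$, which is jointly convex, and the linear term $\sum q_c\rho_c$). Hence $\mc Q$ is closed, bounded and convex. Non-emptiness follows because Jensen's inequality gives $\phi(\wh p) = \sum_c \wh p_c\rho_c \le \eps$, so $\wh p\in\mc Q$; this also shows Slater-type feasibility is available if $\sum_c\wh p_c\rho_c < \eps$, and the boundary case $\sum_c\wh p_c\rho_c=\eps$ will need a small separate argument (see below).

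Next I would compute the support function $h_{\mc Q}(t) = \sup\{q^\top t : q\in\mc Q\}$ by writing the Lagrangian. Dualizing the constraint $\phi(q)\le\eps$ with multiplier $\beta\ge 0$ and the equality $\mathbf 1^\top q = 1$ with multiplier $\alpha\in\R$ (and keeping $q\ge 0$ explicit), strong duality gives
\[
    h_{\mc Q}(t) = \Inf{\alpha\in\R,\ \beta\in\R_+}\ \Sup{q\in\R_+^C}\ \Big\{ q^\top t + \alpha(1 - \mathbf 1^\top q) + \beta\big(\eps - \phi(q)\big) \Big\}.
\]
For $\beta>0$ the inner maximization over $q_c\ge 0$ separates coordinatewise into $\max_{q_c\ge 0}\{ (t_c-\alpha)q_c - \beta q_c\log(q_c/\wh p_c) - \beta\rho_c q_c\}$, whose first-order condition yields the maximizer $q_c^\star = \wh p_c\exp\big((t_c-\alpha)/\beta - \rho_c - 1\big)$, and substituting back produces exactly the term $\beta\wh p_c\exp\big((t_c-\alpha)/\beta - \rho_c - 1\big)$. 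Summing over $c$ and adding $\alpha + \beta\eps$ gives the claimed expression; the case $\beta = 0$ yields $+\infty$ unless $t$ is constant, so it never improves the infimum and can be absorbed into the closure, justifying the restriction to $\beta\in\R_{++}$.

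The main obstacle is justifying strong duality (absence of a duality gap and attainment on the dual side) when Slater's condition may fail, namely when $\eps = \sum_c\wh p_c\rho_c$, in which case $\mc Q = \{\wh p\}$ is a singleton with empty relative interior of the feasible region relative to the constraint. I would handle this either by a direct check — when $\mc Q=\{\wh p\}$ one has $h_{\mc Q}(t) = \wh p^\top t$, and the right-hand side can be shown to equal $\wh p^\top t$ by taking $\beta\to\infty$ along $\alpha = \alpha(\beta)$ chosen so the exponential sum stays normalized (a Laplace/large-deviations-type expansion of $\beta\sum_c\wh p_c\exp((t_c-\alpha)/\beta - \rho_c - 1)$ as $\beta\to\infty$) — or, more cleanly, by a perturbation argument: prove the identity for $\eps' > \sum_c\wh p_c\rho_c$ where Slater holds, and pass to the limit $\eps'\downarrow\eps$ using continuity of both sides in $\eps$ (the left side by upper semicontinuity of the value of a concave maximization over a continuously shrinking compact set, the right side by monotone/continuous dependence of the infimand on $\eps$). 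The remaining steps — convexity of the relative entropy, the coordinatewise optimization, and verifying the exponential-cone-friendly closed form — are routine.
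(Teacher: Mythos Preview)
Your proposal is correct and follows essentially the same route as the paper: compactness and convexity via the sublevel-set argument, the support function via Lagrangian duality with multipliers $(\alpha,\beta)$ for the simplex and entropy constraints, and a perturbation in $\eps$ to secure Slater's condition (the paper does this uniformly by inflating to $\eps+\epsilon$ and then passing to the limit $\epsilon\downarrow 0$, using a continuity/calmness argument). One small correction: at $\beta=0$ the dual value, after minimizing over $\alpha$, is $\max_c t_c$, not $+\infty$ (take $\alpha=\max_c t_c$); the paper sidesteps this by first dualizing only the entropy constraint while keeping $q$ in the compact simplex, invoking Berge's maximum theorem to obtain continuity of the inner supremum in $\beta$ (hence $\inf_{\beta>0}=\inf_{\beta\ge 0}$), and only afterwards dualizing the simplex constraint with $\alpha$.
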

\begin{proof}[Proof of Lemma~\ref{lemma:support}]
    The function $\R_+^C \ni q \mapsto \sum_{c=1}^C q_c (\log q_c - \log \wh p_c + \rho_c) \in \R_+$ is continuous and convex, hence, the set $\{q \in \R_+^C: \sum_{c=1}^C q_c (\log q_c - \log \wh p_c + \rho_c) \le \eps\}$ is closed and convex. Consequentially, $\mc Q$ can be written as the intersection between a simplex (thus compact and convex) and a closed, convex set, so $\mc Q$ is compact and convex. 
    
    The proof of the support function of $\mc Q$ proceeds in 2 steps. First, we prove the support function for the $\epsilon$-inflated set
    \[
        \mc Q_\epsilon = \left\{q \in \R_+^C: \sum_{c =1}^C q_c = 1,~\ds \sum_{c =1}^C q_c (\log q_c - \log \wh p_c + \rho_c) \leq \eps + \epsilon \right\}
    \]
    with the right-hand side of the last constraint being inflated with $\epsilon \in \R_{++}$. In the second step, we use a limit argument to show that the support function of $\mc Q$ is attained as the limit of the support function of $\mc Q_\epsilon$ as $\epsilon$ tends to 0.
    
    Reminding that $\Delta$ is the $C$-dimensional simplex. For any $t \in \R^C$ and any $\epsilon \in \R_{++}$, by the definition of the support function, we have for every $t \in \R^C$
    \begin{subequations}
    \begin{align}
        h_{\mc Q_\epsilon}(t) &= \left\{
        \begin{array}{cl}
            \sup & q^\top t \\
            \st  & q \in \Delta,~\ds \sum_{c =1}^C q_c (\log q_c - \log \wh p_c + \rho_c) \leq \eps + \epsilon
        \end{array}
        \right. \label{eq:support-1} \\
        &= \Sup{q \in \Delta}~\Inf{\beta\in \R_+}~q^\top t + \beta(\eps + \epsilon - \sum_{c =1}^C q_c \big(\log q_c - \log \wh p_c + \rho_c \big) \notag \\
        &= \Inf{\beta\in \R_+}~\Sup{q \in \Delta}~q^\top t + \beta(\eps + \epsilon - \sum_{c =1}^C q_c \big(\log q_c - \log \wh p_c + \rho_c \big), \label{eq:support-2}
    \end{align}
    \end{subequations}
    where the interchange of the sup-inf operators in~\eqref{eq:support-2} is justified by strong duality~\cite[Proposition~5.3.1]{ref:bertsekas2009convex} because $\wh p$ constitutes a Slater point of the set $\mc Q_\epsilon$. By Berge's maximum theorem~\cite{ref:berge1963topological}, the optimal value of the inner supremum problem is a continuous function in $\beta$ because the simplex $\Delta$ is compact and the objective function is continuous in the decision variable $q$. As a consequence, we can restrict $\beta \in \R_{++}$ without any loss of optimality. Because $\Delta$ is prescribed using linear constraints, strong duality implies that 
    \begin{align*}
         h_{\mc Q_\epsilon}(t)&= \Inf{\alpha \in \R,~\beta \in \R_{++}}\left\{ \alpha + \beta(\eps + \epsilon) + \Sup{q \in \R_+^C} ~\sum_{c=1}^C q_c (t_c - \alpha + \beta \log \wh p_c - \beta \rho_c - \beta\log q_c) \right\} \notag \\
        &= \Inf{\alpha \in \R,~\beta \in \R_{++}}\left\{ \alpha + \beta(\eps + \epsilon) + ~\sum_{c=1}^C \Sup{q_c \in \R_+}  q_c (t_c - \alpha + \beta \log \wh p_c - \beta \rho_c - \beta \log q_c) \right\}, \label{eq:support-2}
    \end{align*}
    where the last equality holds because the supremum problem is separable in each decision variable $q_c$. It now follows from the first-order optimality condition that the maximizer $q_c\opt$ is
    \[
        q_c\opt = \exp\Big( \frac{t_c - \alpha + \beta \log \wh p_c - \beta \rho_c -\beta}{\beta} \Big) > 0,
    \]
    and by substituting this maximizer into the objective function, the value of the support function $h_{\mc Q_\epsilon}(t)$ is then equal to the optimal value of the below optimization problem
    \[
        \Inf{\alpha \in \R,~\beta \in \R_{++}}~\alpha + \beta(\eps + \epsilon) + \beta \sum_{c=1}^C \wh p_c \exp \Big( \frac{t_c - \alpha}{\beta} - \rho_c - 1 \Big).
    \] 
    
    We now proceed to the second step. Denote temporarily the objective function of the above problem as $G(\epsilon,\gamma)$, where $\gamma = [\alpha; \beta]$ combines both dual variables $\alpha$ and $\beta$. Define the function
    \[
        g(\epsilon) = \Inf{\gamma \in \Gamma}~ G(\epsilon, \gamma), \qquad \text{with } \Gamma \Let \R\times \R_{++}.
    \]
    Because $G$ is continuous, \cite[Lemma~2.7]{ref:nguyen2018distributionally} implies that $g$ is upper-semicontinuous at 0. Furthermore, $G$ is calm from below at $\epsilon = 0$ because $G(\epsilon, \gamma) - G(0, \gamma) = \beta \epsilon \ge 0$,
    thus \cite[Lemma~2.7]{ref:nguyen2018distributionally} implies that $g$ is lower-semicontinuous at 0. These two facts lead to the continuity of $g$ at $0$. From the first part of the proof, we have $g(\epsilon) = h_{\mc Q_\epsilon}(t)$ for any $\epsilon \in \R_+$. Moreover, by applying Berge's maximum theorem~\cite{ref:berge1963topological} to~\eqref{eq:support-1}, $h_{\mc Q_\epsilon}(t)$ is a continuous function of $\epsilon$ over $\R_+$. Thus we find
    \[
        h_{\mc Q}(t) = h_{\mc Q_0}(t) = \lim_{\epsilon \downarrow 0} h_{\mc Q_\epsilon}(t) = \lim_{\epsilon \downarrow 0} g(\epsilon) = g(0),
    \]
    where the chain of equalities follows from the definition of $\mc Q_\epsilon$, the continuity of $h_{\mc Q_\epsilon}(t)$ in $\epsilon$, the fact that $g(\epsilon) = h_{\mc Q_\epsilon}(t)$ for $\epsilon > 0$, and the continuity of $g$ at 0 established previously. The proof is now completed.
\end{proof}

\begin{proof}[Proof of Proposition~\ref{prop:refor}]
To facilitate the proof, we define the following ambiguity set over the marginal distribution of the covariate $X$ as
\[
	\mbb B_X \Let \left\{ \QQ_{X} \in \mc M (\mc X): \KL(\QQ_X \parallel \Pnom_{X} ) + \EE_{\QQ_X}[\sum_{c=1}^C \rho_c \mathbbm{1}_{\wh x_c}(X)] \leq \eps \right\}.
\]
Given a nominal marginal distribution $\Pnom_X$ supported on a finite set $\{\wh x_c\}_{c \in \C}$, the absolute continuity requirement suggests that $\KL(\QQ_X \parallel \Pnom_X)$ is finite if and only if $\QQ_X$ is absolutely continuous with respect to $\Pnom_X$. Thus, any $\QQ_X$ of interest should be supported on the same set  $\{\wh x_c\}_{c = 1, \ldots, C}$, and $\QQ_X$ and be finitely parametrized by a $C$-dimensional vector $\{q_c\}_{c = 1, \ldots, C}$. Let $\mc Q$ denote the convex compact feasible set in $\R^C$, that is, 
\[
\mc Q \Let \left\{q \in \R_+^C: \sum_{c =1}^C q_c = 1, \ds \sum_{c =1}^C q_c (\log q_c - \log \wh p_c + \rho_c) \leq \eps \right\},
\]
and the ambiguity set $\mbb B_X$ can now be finitely parametrized as
\[
    \mbb B_X = \left\{ \QQ_{X} \in \mc M (\mc X): \exists q \in \mc Q,~\QQ_{X} = \sum_{i=1}^C q_c \delta_{\wh x_c}
    \right\}.
\]
By coupling $\mbb B_X$ with the conditional ambiguity sets $\mbb B_{Y|\wh x_c}$, $\mbb B(\Pnom)$ can be re-written as
\[
	\mbb B(\Pnom) = \left\{ \QQ\in \mc M(\mc X \times \mc Y): 
	\begin{array}{l}
		\exists \QQ_X \in \mbb B_X,~\QQ_{Y|\wh x_c} \in \mbb B_{Y| \wh x_c} \quad \forall c = 1, \ldots, C \\
		\QQ (\{\wh x_c\} \times A) = \QQ_X(\{\wh x_c\}) \QQ_{Y|\wh x_c}(A) ~ \forall A \in \mc F(\mc Y) \quad \forall c = 1, \ldots, C
	\end{array}
	\right\}
\]
The worst-case expected loss becomes
\begin{align*}
    \Sup{\QQ \in \mbb B(\Pnom)}~\EE_{\QQ}[L(X, Y)] &=	\Sup{\QQ_{X} \in \mbb B_{X}} \EE_{\QQ_{X}}\left[  \Sup{\QQ_{Y|X} \in \mbb B_{Y|X}} \EE_{\QQ_{Y|X}} \left[ L(X, Y) \right] \right] \\
    &= \Sup{q \in \mc Q}~\sum_{c = 1}^C q_c \Sup{\QQ_{Y|\wh x_c} \in \mbb B_{Y|\wh x_c}} \EE_{\QQ_{Y|\wh x_c}} \left[ L(\wh x_c, Y) \right],
\end{align*}
where the first equality follows from the law of total expectation, and the second equality follows from the finite reparametrization of $\mbb B_X$. If we denote by $\mc T$ the epigraph reformulation of the worst-case conditional expectations
\[
\mc T \Let \left\{ t \in \R^C:
\begin{array}{ll}
\Sup{\QQ_{Y|\wh x_c} \in \mbb B_{Y|\wh x_c}} \EE_{\QQ_{Y|\wh x_c}} \left[ L(\wh x_c, Y) \right] \leq t_c &\forall c =1, \ldots, C
\end{array}
\right\},
\]
then the worst-case expected loss can be further re-expressed as
\begin{subequations}
\begin{align}
\Sup{\QQ \in \mbb B(\Pnom)}~\EE_{\QQ}[ L(X, Y)]&= \Sup{q\in \mc Q}  \Inf{t \in \mc T} ~q^\top t \label{eq:proof:1}\\
&= \Inf{t \in \mc T} \Sup{q\in \mc Q}  ~q^\top t \label{eq:proof:2}\\
&= \left\{ \begin{array}{cl}
	\inf & \alpha + \beta \eps + \beta \ds \sum_{c=1}^C \wh p_c \exp\left( \frac{t_c - \alpha}{\beta} - \rho_c - 1 \right)\\
	\st & t \in \mc T, \; \alpha \in \R,\; \beta \in \R_{++},
	\end{array} \right. \label{eq:proof:3}
\end{align}
\end{subequations}
where the sup-inf formulation~\eqref{eq:proof:1} is justified because $q$ is non-negative and we can resort to the epigraph formulations of the worst-case conditional expected loss. In~\eqref{eq:proof:2} we applied Sion's minimax theorem~\cite{ref:sion1958minimax}, which is valid because the sup-inf program~\eqref{eq:proof:1} is a concave-convex saddle problem, and $\mc Q$ is convex and compact and $\mc T$ is convex. In~\eqref{eq:proof:3} we have used Lemma~\ref{lemma:support} to reformulate the supremum over $q$. The claim then follows.
\end{proof}

Instead of solving the problem in the natural parameters $\theta$ coupled with its log-partition function $\Psi$, we will use the reparametrization to the mean parameters using the conjugate function of $\Psi$. More specifically, let $\phi$ be the convex conjugate of $\Psi$, that is,
\[
	\phi : \mu \mapsto \Sup{\theta \in \Theta} \left\{ \inner{\mu}{\theta} - \Psi(\theta) \right\}
\]


Before proceeding to the technical proofs, the below lemma collects from the existing literature the necessary background knowledge about the log-partition function $\Psi$ and its conjugate $\phi$, along with the relationship between the natural parameter $\theta$ and its corresponding expectation parameter $\mu$.

\begin{lemma}[Relevant facts] \label{lemma:facts} The following assertions hold for regular exponential family.
\begin{enumerate}[label=(\roman*), leftmargin=6mm]
    \item \label{fact:0}
    The function $\phi$ is closed, convex and proper on $\mathbb R^p$.
    \item \label{fact:1}
    $(\Theta,\Psi)$ and $\left(\inte(\dom(\phi)),\phi\right)$ are convex functions of Legendre type, and they are Legendre duals of each other. 
    \item \label{fact:2} The gradient function $\nabla \Psi$ is a  one-to-one function from the open convex set $\Theta$ onto the open convex set $\inte(\dom(\phi))$.
    \item \label{fact:3}
    The gradient functions $\nabla \Psi$ and $\nabla \phi$ are continuous, and $\nabla \phi=(\nabla\Psi)^{-1}$.
    \item \label{fact:4} The function $\phi$ is essentially smooth over $\inte(\dom(\phi))$.
    \end{enumerate}
\end{lemma}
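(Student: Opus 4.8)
The plan is to deduce all five assertions from two structural facts about regular exponential families, both of which I would quote rather than reprove: (a) on the open set $\Theta$ the log-partition function $\Psi$ is $C^\infty$ (indeed real-analytic), convex, and in fact \emph{strictly} convex with $\nabla^2\Psi(\theta)=\mathrm{Cov}_{f(\cdot\mid\theta)}(T(Y))\succ 0$ for every $\theta\in\Theta$, the positive definiteness being exactly where the affine independence of $T_1,\dots,T_p$ enters; and (b) $\Psi$ is \emph{steep}, i.e.\ $\|\nabla\Psi(\theta_k)\|\to\infty$ whenever $\theta_k\to\theta\in\partial\Theta$, a classical property of regular exponential families \cite[Chapter~9]{ref:barndorff2014introductory}. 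Facts (a) and (b) together say precisely that $(\Theta,\Psi)$ is a convex function of \emph{Legendre type} in the sense of Rockafellar's \emph{Convex Analysis}, and everything else follows from the Legendre duality machinery.

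First I would settle \ref{fact:0}: $\phi=\Psi^*$ is the Legendre--Fenchel conjugate of the proper, lower semicontinuous convex function that equals $\Psi$ on $\Theta$ and $+\infty$ elsewhere, hence $\phi$ is automatically closed and convex, and it is proper because the biconjugate of such a $\Psi$ returns $\Psi$, forcing $\phi\not\equiv+\infty$. Next, applying Rockafellar's theorem on Legendre duality (\emph{Convex Analysis}, Thm.~26.5) to the Legendre-type pair $(\Theta,\Psi)$ yields \ref{fact:1}: the conjugate pair $(\inte(\dom\phi),\phi)$ is again of Legendre type, the two are Legendre duals of each other, and $\inte(\dom\phi)$ is a nonempty open convex set. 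Assertion \ref{fact:4} is then immediate, since a convex function of Legendre type is by definition essentially smooth on the (open) interior of its domain.

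For \ref{fact:2} and \ref{fact:3} I would again read off the same theorem: the gradient $\nabla\Psi$ is a bijection of the open convex set $\Theta$ onto the open convex set $\inte(\dom\phi)$, with inverse $\nabla\phi$. Injectivity of $\nabla\Psi$ also follows directly from the strict convexity in (a); surjectivity onto $\inte(\dom\phi)$ and the identity $\nabla\phi=(\nabla\Psi)^{-1}$ are precisely the content of Legendre duality, since for $\mu\in\inte(\dom\phi)$ the supremum defining $\phi(\mu)$ is attained at the unique $\theta$ with $\nabla\Psi(\theta)=\mu$. Continuity of $\nabla\Psi$ is clear from smoothness, and continuity of $\nabla\phi$ follows from the general fact that a finite convex function differentiable on an open set has continuous gradient there (\emph{Convex Analysis}, Thm.~25.5), applied on $\inte(\dom\phi)$.

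The only genuinely non-bookkeeping step is checking (a) and (b), and within that the delicate ingredient is steepness; positive definiteness of $\nabla^2\Psi$ is quick, for if $a^\top\nabla^2\Psi(\theta)a=\mathrm{Var}_{f(\cdot\mid\theta)}(\inner{a}{T(Y)})=0$ then $\inner{a}{T(Y)}$ is $\nu$-a.s.\ constant, contradicting affine independence of the $T_i$ unless $a=0$. Steepness, by contrast, requires the standard exponential-family argument that the mean parameter $\nabla\Psi(\theta)$ eventually leaves every compact subset of $\dom\phi$ as $\theta\to\partial\Theta$; rather than reproduce it I would cite \cite[Chapter~9]{ref:barndorff2014introductory} (see also \cite[Chapter~2]{ref:amari2016information}). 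With these two facts in hand, assertions \ref{fact:0}--\ref{fact:4} are exactly the classical consequences of Legendre duality sketched above.
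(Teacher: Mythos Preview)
Your proposal is correct and follows essentially the same route as the paper: both derive all five assertions by invoking standard Legendre-duality results from Rockafellar's \emph{Convex Analysis} once $(\Theta,\Psi)$ is known to be of Legendre type. The only difference is cosmetic: the paper simply cites \cite[Lemma~1 and Theorem~2]{ref:banerjee2005clustering} and \cite[Theorem~26.3]{ref:rockafellar1970convex} for assertions~\ref{fact:1}--\ref{fact:4}, whereas you unpack the Legendre-type hypothesis into its two ingredients (strict convexity from affine independence of $T$, steepness from regularity) and cite the primary sources directly.
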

\begin{proof}[Proof of Lemma~\ref{lemma:facts}] 
Assertion~\ref{fact:0} holds since $\inner{\mu}{\theta}-\Psi(\theta)$ is convex and closed for each $\theta$, thus taking supremum, $\phi$ is convex and closed. $\phi$ is proper since $\dom(\phi)$ is non-empty. Assertions~\ref{fact:1} to~\ref{fact:3} follows from~\cite[Lemma~1]{ref:banerjee2005clustering} and~\cite[Theorem~2]{ref:banerjee2005clustering}. Assertion~\ref{fact:4} follows from~\cite[Lemma~1]{ref:banerjee2005clustering} and~\cite[Theorem~26.3]{ref:rockafellar1970convex}, and the fact that $\Psi$ and $\phi$ is a convex conjugate pair.
\end{proof}

From Assertion~\ref{fact:1}, we have the mappings between the dual spaces $\inte(\dom(\phi))$ and $\Theta$ are given by the Legendre transformation
\[
\mu(\theta)=\nabla\Psi(\theta)\quad\text{and}\quad\theta(\mu)=\nabla\phi(\mu).
\]
For any $\mu\in\inte(\dom(\phi))$, the conjugate function $\phi$ can be expressed as \[
\phi(\mu)=\inner{\mu}{\theta(\mu)}-\Psi(\theta(\mu)).
\]
\begin{lemma}[KL divergence between distributions from exponential family] \label{lemma:KL-exp} Suppose that $\QQ_1$ and $\QQ_2$ belong to the exponential family of distributions with the same log-partition function $\Psi$ and with natural parameters $\theta_1$ and $\theta_2$ respectively. The KL divergence from $\QQ_1$ to $\QQ_2$ amounts to
	\begin{align*}
		\KL (\QQ_1 \parallel \QQ_2) & = \inner{\theta_1 - \theta_2}{\mu_1} - \Psi(\theta_1) + \Psi(\theta_2) 
		= \phi(\mu_1) - \phi(\mu_2) - \inner{\mu_1 - \mu_2}{\theta_2},
	\end{align*}
	where $\phi$ is the convex conjugate of $\Psi$, and $\mu_j = \nabla \Psi(\theta_j)$ for any $j \in \{1, 2\}$.
\end{lemma}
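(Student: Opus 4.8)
The plan is to compute the KL divergence directly from Definition~\ref{def:KL} using the explicit exponential-family density, obtain the first (natural-parameter) expression via the exponential-family moment identity, and then convert it into the second (mean-parameter) expression using the Legendre conjugacy relations collected in Lemma~\ref{lemma:facts}.

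First I would note that $\QQ_1$ and $\QQ_2$ share the ground measure $\nu$, the carrier $h$, the sufficient statistic $T$ and the log-partition function $\Psi$, so their Radon--Nikodym derivative is
\[
\frac{\dd \QQ_1}{\dd \QQ_2}(y) = \exp\big( \inner{\theta_1 - \theta_2}{T(y)} - \Psi(\theta_1) + \Psi(\theta_2) \big),
\]
which is well defined and strictly positive wherever $h(y)>0$; hence $\QQ_1 \ll \QQ_2$ and Definition~\ref{def:KL} applies. Taking the logarithm and the expectation under $\QQ_1$ gives
\[
\KL(\QQ_1 \parallel \QQ_2) = \inner{\theta_1 - \theta_2}{\EE_{\QQ_1}[T(Y)]} - \Psi(\theta_1) + \Psi(\theta_2).
\]
Next I would invoke the standard moment identity for regular exponential families, $\EE_{\QQ_1}[T(Y)] = \nabla\Psi(\theta_1) = \mu_1$, obtained by differentiating the normalization identity $\int h(y)\exp(\inner{\theta}{T(y)})\,\nu(\dd y) = \exp(\Psi(\theta))$ under the integral sign, which is legitimate since the family is regular and $\Theta$ is open. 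Substituting yields the first claimed expression $\inner{\theta_1 - \theta_2}{\mu_1} - \Psi(\theta_1) + \Psi(\theta_2)$; this also establishes Lemma~\ref{lemma:KL-exp-main}.

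For the second expression I would use Lemma~\ref{lemma:facts}\ref{fact:1}--\ref{fact:3}, which guarantees $\mu_j \in \inte(\dom(\phi))$ and gives the Legendre relation $\Psi(\theta_j) = \inner{\mu_j}{\theta_j} - \phi(\mu_j)$ for $j \in \{1,2\}$. Substituting these two identities into the first expression and cancelling the term $\inner{\theta_1}{\mu_1}$ collapses it to $\phi(\mu_1) - \phi(\mu_2) - \inner{\mu_1 - \mu_2}{\theta_2}$, completing the proof. The only genuinely delicate step is the differentiation-under-the-integral-sign argument establishing $\EE_{\QQ_1}[T(Y)] = \nabla\Psi(\theta_1)$; everything else is elementary bookkeeping with the conjugacy identities already available in Lemma~\ref{lemma:facts}.
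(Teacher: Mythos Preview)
Your proposal is correct. The derivation of the first (natural-parameter) expression is essentially identical to the paper's: both compute the log Radon--Nikodym derivative of the two exponential-family densities and invoke $\EE_{\QQ_1}[T(Y)]=\nabla\Psi(\theta_1)=\mu_1$.

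For the second (mean-parameter) expression the two arguments diverge slightly. The paper rewrites the density in its dual form $f(y\mid\mu)=h(y)\exp\big(\phi(\mu)+\inner{T(y)-\mu}{\nabla\phi(\mu)}\big)$ (citing \cite[Theorem~4]{ref:banerjee2005clustering}) and recomputes the KL divergence from scratch in that parameterization, finally identifying $\nabla\phi(\mu_2)=\theta_2$ via Lemma~\ref{lemma:facts}\ref{fact:3}. You instead take the already-established natural-parameter expression and substitute the Legendre identity $\Psi(\theta_j)=\inner{\mu_j}{\theta_j}-\phi(\mu_j)$ from Lemma~\ref{lemma:facts}\ref{fact:1} directly, which collapses algebraically to the desired Bregman form. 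Your route is shorter and avoids the second expectation computation; the paper's route has the minor expository benefit of exhibiting the dual density explicitly, but both are equally valid and rely on the same Legendre-duality facts already recorded in Lemma~\ref{lemma:facts}.
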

The result of Lemma~\ref{lemma:KL-exp} can be found in~\cite[Appendix~A]{ref:banerjee2005clustering}, but the explicit proof is included here for completeness.
\begin{proof}[Proof of Lemma~\ref{lemma:KL-exp}]
	One finds
	\begin{subequations}
	\begin{align}
		\KL (\QQ_1 \parallel \QQ_2) &= \EE_{\QQ_1}[ \log (\mathrm{d} \QQ_1 / \mathrm{d} \QQ_2)] \notag \\
		&= \EE_{\QQ_1}[ \inner{T(Y)}{\theta_1 - \theta_2} - \Psi(\theta_1) + \Psi(\theta_2)] \label{eq:KL-exp1} \\
		&=  \inner{ \mu_1 }{\theta_1 - \theta_2} - \Psi(\theta_1) + \Psi(\theta_2), \label{eq:KL-exp2}
	\end{align}
	\end{subequations}
	where equality~\eqref{eq:KL-exp1} follows by calculating the logarithm of the Radon-Nikodym derivatives between two distributions, and equality~\eqref{eq:KL-exp2} follows by noting that $\mu_1 = \EE_{\QQ_1}[T(Y)]$.
	
	By \cite[Theorem~4]{ref:banerjee2005clustering}, one can also rewrite the density using the mean parameter $\mu = \mu(\theta)$ as
	\begin{align*}
	f(y|\mu)&=h(y)\exp\left(\inner{\theta}{T(y)}-\Psi(\theta)\right)\\
	& = h(y) \exp\left(\phi(\mu)+\inner{T(y)-\mu}{\nabla\phi(\mu)}\right)
	\end{align*}
	The KL divergence from $\QQ_1$ to $\QQ_2$ amounts to
	\begin{subequations}
		\begin{align}
		\KL (\QQ_1 \parallel \QQ_2) &= \EE_{\QQ_1}[ \log (\mathrm{d} \QQ_1 / \mathrm{d} \QQ_2)] \notag \\
		&= \EE_{\QQ_1}[ \phi(\mu_1) - \phi(\mu_2) + \inner{T(Y)}{\nabla \phi(\mu_1) - \nabla \phi(\mu_2) } - \inner{\mu_1}{\nabla \phi(\mu_1)} + \inner{\mu_2}{\nabla \phi(\mu_2)}] \label{eq:KL-exp3} \\
		&=  \inner{\mu_2 - \mu_1}{\theta_2} + \phi(\mu_1) - \phi(\mu_2).\label{eq:KL-exp4}
		\end{align}
	\end{subequations}
	From Assertion~\ref{fact:3} in Lemma~\ref{lemma:facts}, we notice that $\theta_2=\nabla \phi(\mu_2)$, which completes the proof.
\end{proof}

Recall that the conditional ambiguity set defined in~\eqref{eq:cond-ambi} is
\[
	\mbb B_{Y|\wh x_c} \Let \left\{ \QQ_{Y|\wh x_c} \in \mc M(\mc Y): \exists \theta \in \Theta,~\QQ_{Y|\wh x_c}(\cdot) \sim f(\cdot | \theta),~\KL(\QQ_{Y|\wh x_c}  \parallel  \Pnom_{Y|\wh x_c}) \leq \rho_c  \right\}
\]
for a parametric, nominal conditional measure $\Pnom_{Y|\wh x_c} \sim f(\cdot | \wh \theta_c)$, $\wh \theta_c \in \Theta$ and a radius $\rho_c \in \R_+$.
The uncertainty set $\mc S_c$ of expectation parameters induced by the ambiguity set $\mbb B_{Y|\wh x_c}$ is defined as
\[
\mc S_c \Let \left\{ \mu \in \dom(\phi): \exists \QQ_{Y | \wh x_c} \in \mbb B_{Y | \wh x_c},~\mu = \EE_{\QQ_{Y|\wh x_c}} [T(Y)] \right\}.
\]
\begin{lemma}[Compactness of expectation parameter uncertainty set] \label{lemma:compact}
   The set $\mc S_c$ is compact, and it has an interior point whenever $\rho_c > 0$.
\end{lemma}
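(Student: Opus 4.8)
The plan is to work entirely in the expectation-parameter domain and exploit the closed-form of the KL divergence from Lemma~\ref{lemma:KL-exp}. First I would rewrite the conditional ambiguity set $\mbb B_{Y|\wh x_c}$ in terms of the mean parameter $\mu = \nabla\Psi(\theta)$. Since the map $\nabla\Psi$ is a bijection from $\Theta$ onto $\inte(\dom\phi)$ (Lemma~\ref{lemma:facts}\ref{fact:2}--\ref{fact:3}) and every member of $\mbb B_{Y|\wh x_c}$ is a distribution in the same exponential family with $\EE_{\QQ_{Y|\wh x_c}}[T(Y)] = \mu(\theta)$, the uncertainty set of expectation parameters is exactly
\[
    \mc S_c = \left\{ \mu \in \inte(\dom\phi) : \phi(\mu) - \phi(\wh\mu_c) - \inner{\mu - \wh\mu_c}{\wh\theta_c} \le \rho_c \right\},
\]
where $\wh\mu_c = \nabla\Psi(\wh\theta_c)$ and $\wh\theta_c = \nabla\phi(\wh\mu_c)$. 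The function $D_c(\mu) \Let \phi(\mu) - \phi(\wh\mu_c) - \inner{\mu - \wh\mu_c}{\nabla\phi(\wh\mu_c)}$ is the Bregman divergence generated by $\phi$; it is nonnegative, convex in $\mu$, equals $0$ at $\mu = \wh\mu_c$, and is lower semicontinuous on all of $\R^p$ because $\phi$ is closed, convex and proper (Lemma~\ref{lemma:facts}\ref{fact:0}). Hence the sublevel set $\{\mu \in \R^p : D_c(\mu) \le \rho_c\}$ is closed and convex, and $\mc S_c$ is its intersection with $\inte(\dom\phi)$.

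The two substantive claims are closedness and boundedness (convexity being immediate from the above). For \textbf{boundedness} I would argue that since $\wh\mu_c$ lies in the interior of $\dom\phi$ and $\phi$ is of Legendre type (hence essentially smooth and strictly convex on $\inte(\dom\phi)$, Lemma~\ref{lemma:facts}\ref{fact:1},\ref{fact:4}), the Bregman divergence $D_c(\cdot)$ has bounded sublevel sets: essential smoothness forces $\|\nabla\phi(\mu)\| \to \infty$ as $\mu$ approaches the boundary of $\dom\phi$, and strict convexity plus coercivity of $\phi - \inner{\cdot}{\wh\theta_c}$ (its gradient $\nabla\phi(\mu) - \wh\theta_c$ vanishes only at $\mu=\wh\mu_c$ and the function is closed) gives that its sublevel sets are bounded — this is the standard fact that a closed proper convex function of Legendre type has bounded sublevel sets when it attains its minimum in the interior of its domain. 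Therefore $\{\mu : D_c(\mu)\le\rho_c\}$ is bounded, so $\mc S_c$ is bounded.

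For \textbf{closedness}, the only danger is that a limit point $\mu^\ast$ of $\mc S_c$ escapes to $\partial(\dom\phi)$. But boundedness of the sublevel set $\{D_c \le \rho_c\}$ combined with essential smoothness of $\phi$ rules this out: near the boundary of $\dom\phi$ the gradient blows up, which (via convexity, $\phi(\mu) \ge \phi(\wh\mu_c) + \inner{\nabla\phi(\mu_k)}{\mu - \wh\mu_c}$ evaluated appropriately, or directly via steepness) forces $D_c(\mu) \to \infty$, contradicting $D_c(\mu_k)\le\rho_c$. Hence every limit point stays in $\inte(\dom\phi)$, where $D_c$ is continuous, so $D_c(\mu^\ast)\le\rho_c$ and $\mu^\ast \in \mc S_c$; thus $\mc S_c$ is closed, and being closed and bounded in $\R^p$, compact. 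Finally, for the \textbf{interior point} when $\rho_c > 0$: $D_c$ is continuous at $\wh\mu_c \in \inte(\dom\phi)$ with $D_c(\wh\mu_c) = 0 < \rho_c$, so a whole neighborhood of $\wh\mu_c$ satisfies the defining inequality and lies in $\inte(\dom\phi)$, giving $\wh\mu_c \in \inte\mc S_c$.

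I expect the \emph{main obstacle} to be the boundedness/closedness argument at the boundary of $\dom\phi$ — i.e., making rigorous that essential smoothness (steepness) of the Legendre-type function $\phi$ forces the Bregman sublevel sets to be bounded and to not touch $\partial(\dom\phi)$. The cleanest route is to invoke the classical result (e.g.\ Rockafellar, Theorem~26.3 and the surrounding theory, already cited in the paper) that for a closed convex function of Legendre type, the conjugate is finite on all of $\R^p$ iff the function is co-finite, and to relate $D_c(\mu)\le\rho_c$ to a compact set via the conjugate $\Psi$; alternatively one shows directly that $\mu_k \to \mu^\ast \in \partial(\dom\phi)$ forces $\phi(\mu_k) - \inner{\mu_k}{\wh\theta_c} \to \infty$ using lower semicontinuity and steepness. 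If $\dom\phi = \R^p$ (as happens, e.g., for the Poisson and Bernoulli examples), boundedness is even simpler since then $\phi$ is automatically co-finite and closedness is trivial.
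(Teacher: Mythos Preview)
Your proposal is correct and follows essentially the same route as the paper's proof: rewrite $\mc S_c$ as the Bregman sublevel set $\{\mu:\phi(\mu)-\phi(\wh\mu_c)-\inner{\mu-\wh\mu_c}{\wh\theta_c}\le\rho_c\}$, obtain boundedness from coercivity of $\phi(\cdot)-\inner{\cdot}{\wh\theta_c}$ (the paper invokes \cite[Corollary~14.2.2]{ref:rockafellar1970convex} and \cite[Fact~2.11]{ref:bauschke1997legendre} for this), obtain closedness by using essential smoothness of $\phi$ to rule out limit points on $\partial(\dom\phi)$ (the paper cites \cite[Theorem~3.8]{ref:bauschke1997legendre}), and get the interior point from continuity of $\phi$ at $\wh\mu_c$. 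Your anticipated ``main obstacle'' and its resolution via steepness match the paper exactly.
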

\begin{proof}[Proof of Lemma~\ref{lemma:compact}]
    By Lemma~\ref{lemma:KL-exp} and the definition of the set $\mc S_c$, we can write $\mc S_c$ as
    \[
        \mc S_c = \left\{ \mu \in \dom(\phi): \phi(\mu) - \phi(\wh \mu_c) - \inner{\mu - \wh \mu_c}{\wh \theta_c} \leq \rho_c \right\}.
    \]
    Because $\phi$ is closed, convex, proper, and that $\wh \theta_c \in \inte(\Theta) = \Theta$, the function $\phi(\cdot) - \inner{\cdot}{\wh \theta_c}$ is coercive by~\cite[Corollary~14.2.2]{ref:rockafellar1970convex} and \cite[Fact~2.11]{ref:bauschke1997legendre}. As a consequence, $\mc S_c$ is bounded.

    Because $\Psi$ is essentially strictly convex on $\Theta$, $\phi$ is essentially smooth on $\inte(\dom(\phi))$ by~\cite[Theorem~26.3]{ref:rockafellar1970convex}. \cite[Theorem~3.8]{ref:bauschke1997legendre} now implies that if $\mu'$ is a boundary point of $\inte(\dom(\phi))$ then as $\inte(\dom(\phi)) \ni \mu_k \xrightarrow{k \to \infty} \mu'$ then $\phi(\mu_k) - \inner{\mu_k}{\wh \theta_c} \xrightarrow{k \to \infty} + \infty$. Moreover, because $\phi$ is continuous over $\inte(\dom(\phi))$, the set $\mc S_c$ is closed. This implies that $\mc S_c$, being a closed and bounded set of finite dimension, is compact.
    
    The continuity of $\phi$ leads a straightforward manner to the non-empty interior of $\mc S_c$ when $\rho_c > 0$. This observation completes the proof.
\end{proof}
\begin{proof}[Proof of Proposition~\ref{prop:conditional-refor}]
	Because $\lambda$ is a mapping onto the space $\Theta$ of natural parameters, we use the shorthand $\lambda_c = \lambda(w, \wh x_c) \in \Theta$. Moreover, let $\wh \mu_c = \nabla \Psi(\wh \theta_c)$. The worst-case conditional expectation of the log-loss function becomes
	\begin{subequations}
		\begin{align*}
			\Sup{\QQ_{Y|\wh x_c} \in \mbb B_{Y|\wh x_c}}  \EE_{\QQ_{Y|\wh x_c}} \left[ \ell_\lambda(\wh x_c, Y, w) \right] &= 	\Sup{\QQ_{Y|\wh x_c} \in \mbb B_{Y|\wh x_c}}  \EE_{\QQ_{Y|\wh x_c}} \left[ \Psi( \lambda(w, \wh x_c)) - \inner{T(Y)}{\lambda(w, \wh x_c)} \right]\\
			 &= 	\Sup{\QQ_{Y|\wh x_c} \in \mbb B_{Y|\wh x_c}} ~ \Psi( \lambda(w, \wh x_c)) - \inner{\EE_{\QQ_{Y|\wh x_c}} [T(Y)]}{\lambda(w, \wh x_c)} \\
			&= \left\{
				\begin{array}{cl}
					\sup &  \Psi(\lambda_c) -\inner{\mu}{\lambda_c} \\
					\st & \phi(\mu) - \phi(\wh \mu_c) - \inner{\mu - \wh \mu_c}{\wh \theta_c} \le \rho_c,
				\end{array}
			\right. 
		\end{align*}
	\end{subequations}
	where the first equality is from the definition of $\ell_{\lambda}$ and the second equality follows from the linearity of the expectation operator. The last equality follows from the definition of the ambiguity set $\mbb B_{Y|\wh x_c}$ using the $\phi$ function by Lemma~\ref{lemma:KL-exp}.
	Because the term $\Psi(\lambda_c)$ does not involve the decision variable $\mu$, it suffices now to consider the optimization problem
	\be \label{eq:inner1}
		\sup \left\{\inner{-\lambda_c}{\mu}~ : ~\phi(\mu)  - \inner{\mu }{\wh \theta_c} \le \rho_c + \phi(\wh \mu_c) - \inner{\wh \mu_c}{\wh \theta_c} \right\}.
	\ee	
	Suppose at this moment that $\lambda_c \neq 0$ and $\rho_c > 0$. When $\rho_c > 0$, the feasible set of~\eqref{eq:inner1} satisfies the Slater condition because $\phi$ is a continuous function. Hence, by a strong duality argument, the convex optimization problem~\eqref{eq:inner1} is equivalent to
	\begin{align*}
		\Sup{\mu} \Inf{\dualvar \ge 0}~ \inner{-\lambda_c}{\mu} + \dualvar (\bar \rho_c -  \phi(\mu)  + \inner{\mu }{\wh \theta_c} ) =  \Inf{\dualvar \ge 0}~ \left\{\dualvar \bar \rho_c + \Sup{\mu} \inner{\mu}{ \dualvar \wh \theta_c - \lambda_c} -  \dualvar \phi(\mu) \right\},
	\end{align*}
	where $\bar \rho_c \Let \rho_c + \phi(\wh \mu_c) - \inner{\wh \mu_c}{\wh \theta_c} \in \R$ and the interchange of the supremum and the infimum operators is justified thanks to~\cite[Proposition~5.3.1]{ref:bertsekas2009convex}. Consider now the infimum problem on the right hand side of the above equation. If $\dualvar = 0$, then the inner supremum subproblem on the right hand side is unbounded because $\lambda_c \neq 0$, thus $\dualvar = 0$ is never an optimal solution to the infimum problem. By utilizing the definition of the conjugate function, one thus deduce that problem~\eqref{eq:inner1} is equivalent to
	\be \label{eq:inner-equiv}
		\Inf{\dualvar > 0}~\dualvar \bar \rho_c + (\dualvar \phi)^*(\dualvar \wh \theta_c - \lambda_c) = \Inf{\dualvar > 0}~\dualvar \bar \rho_c + \dualvar \phi^*\Big(\wh \theta_c - \frac{\lambda_c}{\dualvar} \Big),
	\ee
	where the equality exploits the fact that $(\dualvar \phi)^*(\theta) = \dualvar \phi^*(\theta/\dualvar)$ for any $\dualvar > 0$ \cite[Table~3.2]{ref:borwein2006convex}. 
	
	We now show that the reformulation problem~\eqref{eq:inner-equiv} is valid when $\rho_c = 0$. Indeed, when $\rho_c = 0$, problem~\eqref{eq:inner1} has a unique feasible solution $\wh \mu_c$, thus its optimal value is $\inner{-\lambda_c}{\wh \mu_c}$. Moreover, in this case, problem~\eqref{eq:inner-equiv} becomes
	\begin{align*}
		&\Inf{\dualvar > 0}~ \dualvar \left[ \phi(\wh \mu_c) - \inner{\wh \mu_c}{\wh \theta_c} + \phi^*\left( \wh \theta_c - \frac{\lambda_c}{\dualvar} \right) \right] \\
		= &
	\inner{-\lambda_c}{\wh \mu_c} + \Inf{\dualvar > 0}~ \dualvar \left[ \phi(\wh \mu_c) - \inner{\wh \mu_c}{\wh \theta_c - \frac{\lambda_c}{\dualvar}} + \phi^*\left( \wh \theta_c - \frac{\lambda_c}{\dualvar} \right) \right].
	\end{align*}
	Notice that the term in the square bracket of the optimization problem on the right hand side is non-negative by the definition of the conjugate function. Thus, the infimum problem over $\dualvar$ admits the optimal value of 0 as $\dualvar$ tends to $+\infty$. As a consequence, when $\rho_c = 0$, both problem~\eqref{eq:inner1} and \eqref{eq:inner-equiv} have the same optimal value and they are equivalent.
	
	Consider now the situation where $\lambda_c = 0$. In this case, problem~\eqref{eq:inner-equiv} becomes
	\[
		\Inf{\dualvar > 0}~ \dualvar \rho_c + \dualvar \left( \phi(\wh \mu_c) - \inner{\wh \mu_c}{\wh \theta_c} + \phi^*(\wh \theta_c) \right).
	\]
	By definition of the conjugate function, we have $\phi^*(\wh \theta_c) \ge \inner{\wh \mu_c}{\wh \theta_c} - \phi(\wh \mu_c)$, and thus, by combining with the fact that $\rho_c \ge 0$, this infimum problem will admit the optimal value of 0. Notice that when $\lambda_c = 0$, the optimal value of problem~\eqref{eq:inner1} is also 0. This shows that~\eqref{eq:inner-equiv} is equivalent to~\eqref{eq:inner1} for any possible value of $\lambda_c$.
	Replacing $\phi\opt$ in~\eqref{eq:inner-equiv} by its equivalence $\Psi$ and substituting $\inner{\wh \mu_c}{\wh \theta_c} - \phi(\wh \mu_c)$ by its equivalence $\Psi(\wh \theta_c)$ complete the reformulation~\eqref{eq:inner-refor}.
\end{proof}

\begin{proof}[Proof of Theorem~\ref{thm:main}]
    By applying Proposition~\ref{prop:refor}, the distributionally robust MLE problem~\eqref{eq:dro} can be reformulated as
    \begin{align*}
    \Min{w \in \mc W} \Max{\QQ \in \mbb B(\Pnom)} \EE_{\QQ} \Big[ \ell_\lambda(X, Y, w )\Big] = 
    \left\{
    \begin{array}{cl}
		\inf & \alpha + \beta \eps + \beta \ds \sum_{c=1}^C \wh p_c \exp\left( \frac{t_c - \alpha}{\beta} - \rho_c - 1\right) \\
		\st & w \in \mc W, \; t \in \R^C,\; \alpha \in \R,\;\beta \in \R_{++}  \\
		& \Sup{\QQ_{Y|\wh x_c} \in \mbb B_{Y|\wh x_c}}  \EE_{\QQ_{Y|\wh x_c}} \left[ \ell_\lambda(\wh x_c, Y, w)\right] \leq t_c ~~~ \forall c =1, \ldots, C.
	\end{array}
	\right.
    \end{align*}
    Using Proposition~\ref{prop:conditional-refor} to reformulate each constraint of the above optimization problem leads to the desired result.
\end{proof}

\section{Proofs of Section~\ref{sect:analysis}}
\label{sec:app3}
\begin{proof}[Proof of Proposition~\ref{prop:reweighting}]
Let $\mathbbm{1}$ denote the $N$ dimensional vector of all $1$'s. Let $\KL(q\parallel p)=\sum_{i=1}^N q_i\log(q_i/p_i)$, we have
\begin{align*}
\sup_{\QQ:\KL(\QQ\parallel\Pnom^{\text{emp}})\le \eps}\EE_{\QQ}[\ell_\lambda(X,Y,w)]& = \sup_{q:\KL(q\parallel\frac{1}{N}\mathbbm{1})\le\eps}\sum_{i=1}^N q_i\ell_{\lambda}(\wh x_i,\wh y_i,w)\\
& = \sup_{q:\KL(q\parallel\frac{1}{N}\mathbbm{1})\le\eps}\sum_{i=1}^N q_i \left(\Psi(\lambda(w,\wh x_i))-\inner{T(\wh y_i)}{\lambda(w,\wh x_i)}\right).
\end{align*}
On the other hand, we note
\begin{align*}
  \sup_{\QQ \in \mbb B(\Pnom)} \EE_{\QQ} \big[ \ell_\lambda(X, Y, w )\big] &=\sup_{q:\KL(q\parallel\frac{1}{N}\mathbbm{1})\le\eps}\sum_{i=1}^Nq_i\EE_{\QQ_{Y|\wh x_i}}\left[\ell_{\lambda}(\wh x_i,Y,w)\right]\\
  &=\sup_{q:\KL(q\parallel\frac{1}{N}\mathbbm{1})\le\eps}\sum_{i=1}^Nq_i\left(\Psi(\lambda(w,\wh x_i))-\inner{\nabla\Psi(\wh\theta_i)}{\lambda(w,\wh x_i)}\right)\\
  &=\sup_{q:\KL(q\parallel\frac{1}{N}\mathbbm{1})\le\eps}\sum_{i=1}^Nq_i\left(\Psi(\lambda(w,\wh x_i))-\inner{T(\wh y_i)}{\lambda(w,\wh x_i)}\right).
\end{align*}
Therefore the objective functions are the same and the two problems are equivalent.
\end{proof}

The proof of Proposition~\ref{prop:surrogate} relies on the following result.

\begin{lemma}
\label{lemma:regularization-out-nice}
Let $\Delta \subset \R^C$ be a  simplex and $\wh p \in \inte(\Delta)$ be a probability vector. For any two vectors $\wh t, t\opt \in \R^C$, any vector $\rho \in \R_+^C$ and any scalar $\eps \ge \wh p^\top \rho$, we have
    \begin{align*}
        &\sup\left\{
             q^\top t\opt -\wh p^\top \wh t:
            q \in \Delta,\ds \sum_{c =1}^C q_c (\log q_c - \log \wh p_c + \rho_c) \leq \eps  \right\} \\
        & \hspace{6cm} \le 
        \| t\opt -\wh t\|_{\infty}+\frac{\sqrt{2\eps}}{\min_{c}\sqrt{\wh p_c}}\sqrt{\sum_{c=1}^C \wh p_c (\wh t_c-\bar t)^2},
    \end{align*}
    where $\bar t=\wh p^\top\wh t$.
\end{lemma}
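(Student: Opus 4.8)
The plan is to establish the inequality pointwise: for every $q$ feasible in the supremum I will bound $q^\top t\opt - \wh p^\top \wh t$ above by the right-hand side, which does not depend on $q$; taking the supremum over $q$ then yields the claim (and when the feasible set is empty, which cannot happen under $\eps \ge \wh p^\top \rho$ since then $q = \wh p$ is feasible, the bound is vacuous).

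First I would split
\[
  q^\top t\opt - \wh p^\top \wh t = q^\top (t\opt - \wh t) + (q - \wh p)^\top \wh t.
\]
The first summand is handled by H\"older's inequality: since $q \in \Delta$ is a probability vector, $\|q\|_1 = 1$, so $q^\top(t\opt - \wh t) \le \|q\|_1 \,\|t\opt - \wh t\|_\infty = \|t\opt - \wh t\|_\infty$, which produces the first term on the right-hand side.

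For the second summand the key observation is that $(q - \wh p)^\top \mathbbm 1 = 0$, so I may recenter $\wh t$ by its $\wh p$-weighted mean $\bar t = \wh p^\top \wh t$ and write $(q - \wh p)^\top \wh t = \sum_{c=1}^C (q_c - \wh p_c)(\wh t_c - \bar t)$. Factoring $q_c - \wh p_c = \sqrt{\wh p_c}\cdot (q_c - \wh p_c)/\sqrt{\wh p_c}$ and applying Cauchy--Schwarz gives
\[
  (q - \wh p)^\top \wh t \le \sqrt{\chi^2(q \parallel \wh p)}\; \sqrt{\textstyle\sum_{c=1}^C \wh p_c(\wh t_c - \bar t)^2}, \qquad \chi^2(q \parallel \wh p) \Let \sum_{c=1}^C \frac{(q_c - \wh p_c)^2}{\wh p_c}.
\]
It then remains to bound the chi-square divergence by feasibility. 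Since $\rho \ge 0$ and $q \ge 0$, the constraint $\sum_c q_c(\log q_c - \log \wh p_c + \rho_c) \le \eps$ gives $\KL(q \parallel \wh p) \le \eps$, and hence
\[
  \chi^2(q \parallel \wh p) \le \frac{1}{\min_c \wh p_c}\sum_{c=1}^C (q_c - \wh p_c)^2 = \frac{\|q - \wh p\|_2^2}{\min_c \wh p_c} \le \frac{\|q - \wh p\|_1^2}{\min_c \wh p_c} \le \frac{2\eps}{\min_c \wh p_c},
\]
the last inequality being Pinsker's inequality $\|q - \wh p\|_1^2 \le 2\,\KL(q \parallel \wh p)$. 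Substituting back and combining with the bound on the first summand gives precisely the claimed estimate.

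None of the steps is a genuine obstacle; the only points requiring care are to recenter by $\bar t$ before invoking Cauchy--Schwarz — this is what upgrades a crude bound into one that scales with the $\wh p$-weighted variance of $\wh t$ — and to route the chi-square estimate through the $\ell_1/\ell_2$ comparison so that Pinsker's inequality becomes applicable, at the cost of the $1/\min_c\wh p_c$ factor.
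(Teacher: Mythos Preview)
Your proof is correct and follows essentially the same route as the paper: the same additive split $q^\top t\opt - \wh p^\top \wh t = q^\top(t\opt - \wh t) + (q-\wh p)^\top \wh t$, H\"older for the first summand, recentering by $\bar t$ via $(q-\wh p)^\top\mathbbm 1 = 0$, the same $\sqrt{\wh p_c}$ Cauchy--Schwarz factoring, and Pinsker combined with $\|\cdot\|_2 \le \|\cdot\|_1$ to bound the chi-square term. The only cosmetic difference is that the paper frames the argument as a chain of relaxed suprema while you argue pointwise for a fixed feasible $q$ and take the supremum at the end.
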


\begin{proof}[Proof of Lemma~\ref{lemma:regularization-out-nice}]
Let $\mathbbm{1}$ denote the $C$ dimensional vector of $1$'s, we have
    \begin{align*}
        &\left\{
        \begin{array}{cl}
            \sup &  q^\top t\opt -\wh p^\top \wh t\\
            \st  & q \in \Delta,~\ds \sum_{c =1}^C q_c (\log q_c - \log \wh p_c + \rho_c) \leq \eps 
        \end{array} \right. \\
        =& \left\{
        \begin{array}{cl}
            \sup &  q^\top (t\opt -\wh t) +(q-\wh p)^\top \wh t \\
            \st  & q \in \Delta,~\ds \sum_{c =1}^C q_c (\log q_c - \log \wh p_c + \rho_c) \leq \eps 
        \end{array} \right. \\
        \le& \left\{
        \begin{array}{cl}
            \sup & q^\top (t\opt -\wh t) + (q-\wh p)^\top\wh t\\
            \st  & q \in \Delta,~\ds \sum_{c =1}^C (q_c - \wh p_c)^2 \leq 2\eps 
        \end{array} \right. \\
        \le&  \Sup{ \|q\|_1 = 1}q^\top (t\opt - \wh t) + \sup\left\{
            (q - \wh p)^\top (\wh t-\bar t\mathbbm{1}) :~ \| q - \wh p \|_2^2 \leq 2\eps \right\} \\
        \le &  \Sup{ \|q\|_1 = 1}q^\top(t\opt -\wh t) + \sup\left\{\sum_{c=1}^C \frac{q_c-\wh p_c}{\sqrt{\wh p_c}}\sqrt{\wh p_c}(\wh t_c-\bar t):~ \| q - \wh p \|_2^2 \leq 2\eps \right\} \\
        \le &  \Sup{ \|q\|_1 = 1}q^\top (t\opt - \wh t) + \frac{\sqrt{2\eps}}{\min_{c}\sqrt{\wh p_c}}\sqrt{\sum_{c=1}^C \wh p_c (\wh t_c-\bar t)^2},
    \end{align*}
    where the first inequality follows from Pinsker's inequality~\cite[Theorem 4.19]{ref:boucheron2013concentration} and the fact that $\|q-\wh p\|^2_2\leq\|q-\wh p\|^2_1=4\|q-\wh p\|^2_{TV}$, the second inequality follows from the fact that $(q-\wh p)^\top \mathbbm{1}=0$ and dropping the constraint $q \in \Delta$, and the last inequality is from Cauchy-Schwarz.
    
    In the last step, we have
    \begin{align*}
        \Sup{ \|q\|_1 = 1} q^\top (t\opt - \wh t) = \| t\opt - \wh t \|_{\infty},
    \end{align*}
    which completes the proof.
\end{proof}

We now ready to prove Proposition~\ref{prop:surrogate}.

\begin{proof}[Proof of Proposition~\ref{prop:surrogate}]
    Let $t\opt$ and $\wh t$ be two $C$-dimensional vectors whose elements are defined as
    \[
        t_c\opt = \Sup{\QQ_{Y|\wh x_c} \in \mbb B_{Y|\wh x_c}}  \EE_{\QQ_{Y|\wh x_c}} \left[\ell_\lambda(\wh x_c, Y, w ) \right], \quad 
        \wh t_c =   \EE_{\Pnom_{Y|\wh x_c}} \left[\ell_\lambda(\wh x_c, Y, w ) \right] \qquad \forall c.
    \]
    By Lemma~\ref{lemma:regularization-out-nice}, we find
    \begin{align*}
        \Sup{\QQ \in \mbb B(\Pnom)} \EE_{\QQ}[\ell_\lambda(X, Y, w)] - \EE_{\Pnom}[\ell_\lambda(X, Y, w)] &= \left\{
            \begin{array}{cl}
            \sup & q^\top t\opt -\wh p^\top \wh t \\
            \st & q \in \Delta,~\ds \sum_{c =1}^C q_c (\log q_c - \log \wh p_c + \rho_c) \leq \eps  
            \end{array}
            \right. \\
        \le & \| t\opt -\wh t\|_{\infty}+\frac{\sqrt{2\eps}}{\min_{c}\sqrt{\wh p_c}}\sqrt{\sum_{c=1}^C \wh p_c (\wh t_c-\bar t)^2},
    \end{align*}
    where $\bar t=\wh p^\top\wh t$.
    In the last step, notice that
    \[
        \sum_{c=1}^C\wh p_c(\wh t_c-\bar t)^2=\mathrm{Var}_{\Pnom_{X}}\left(\EE_{ \Pnom_{Y|X}}\left[\ell_{\lambda}(X,Y,w)\right]\right)\le \mathrm{Var}_{\Pnom}\left(\ell_\lambda(X, Y, w ) \right).
    \]
    It now remains to provide the bounds for $\|t\opt - \wh t\|_{\infty}$. For any $c$, let $\lambda_c = \lambda(w, \wh x_c)$, we have
    \begin{align*}
        t_c\opt - \wh t_c &= \left\{
            \begin{array}{cl}
                \sup & \inner{\mu - \wh \mu_c}{\lambda_c} \\
                \st & \phi(\mu) - \phi(\wh \mu_c) - \inner{\mu - \wh \mu_c}{\wh \theta_c} \le \rho_c.
            \end{array}
        \right. 
    \end{align*}
    Because $\Psi$ has locally Lipschitz continuous gradients, $\phi$ is locally strongly convex~\cite[Theorem~4.1]{ref:goebel2008local}. Moreover, the feasible set $\mc S_c$ of the above problem is compact by Lemma~\ref{lemma:compact}, hence there exists a constant $0 < m_c$ such that
    \[
    \frac{m_c}{2} \| \mu - \wh \mu_c\|_2^2 \le \phi(\mu) - \phi(\wh \mu_c) - \inner{\mu - \wh \mu_c}{\wh \theta_c} \quad \forall \mu \in \mc S_c.
    \]
    Notice that the constants $m_c$ depends only on $\Psi$ and $\wh \theta_c$. Thus, we find
     \begin{align*}
        t_c\opt - \wh t_c \le \sup \left\{
             \inner{\mu - \wh \mu_c}{\lambda_c} : m_c\| \mu - \wh \mu_c\|_2^2 \le 2\rho_c
        \right\}
       = \sqrt{2\rho_c/m_c}\| \lambda(w, \wh x_c) \|_2.
    \end{align*}
    By setting $m = \min_{c} m_c$, we have
    \[
        \| t\opt - \wh t\|_{\infty} \leq  \sqrt{\frac{2\max_{c}\rho_c}{m}} \| \lambda(w, \wh x_c) \|_2.
    \]
    Combining terms leads to the postulated results.
\end{proof}

For any $\wh \theta_c \in \Theta$, $\rho_c \in \R_+$, let $\mc R_{\wh \theta_c, \rho_c}(w)$ denote the value of the worst-case expected log-loss
\[
\mc R_{\wh \theta_c, \rho_c}(w) = \Sup{\QQ_{Y|\wh x_c} \in \mbb B_{Y|\wh x_c}}  \EE_{\QQ_{Y|\wh x_c}} \left[\ell_\lambda(\wh x_c, Y, w ) \right].
\]

\begin{lemma} \label{lemma:regularization-inner}
    Suppose that the log-partition function $\Psi$ has locally Lipschitz continuous gradients, that $\Theta = \R^p$ and that $\Theta_c \subset \Theta$ is a compact set. For any fixed $\overline{\rho}_c \in \R_{++}$, there exist  constants $0 < m < M < +\infty$ that depend only on $\Psi$, $\Theta_c$ and $\overline{\rho}_c$ such that for any value $\lambda(w, \wh x_c) \in \R^p$ and any radius $\overline{\rho}_c \ge \rho_c \ge 0$
    \[
        \sqrt{2\rho_c/M}\| \lambda(w, \wh x_c) \|_2 \le \mc R_{\wh \theta_c, \rho_c}(w) - \mc R_{\wh \theta_c, 0}(w) \le \sqrt{2\rho_c/m}\| \lambda(w, \wh x_c) \|_2 \qquad \forall \wh \theta_c \in \Theta_c.
    \]
\end{lemma}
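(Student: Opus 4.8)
The plan is to reduce the two-sided estimate to a Euclidean-ball sandwich of the Bregman ball that defines the inner maximization, and then read the bounds off by Cauchy--Schwarz. First I would pass to expectation-parameter coordinates exactly as in the proof of Proposition~\ref{prop:conditional-refor}: writing $\wh\mu_c=\nabla\Psi(\wh\theta_c)$ and $\lambda_c=\lambda(w,\wh x_c)$, and letting $D_\phi(\mu,\wh\mu_c)\Let\phi(\mu)-\phi(\wh\mu_c)-\inner{\mu-\wh\mu_c}{\wh\theta_c}$ be the Bregman divergence of $\phi$ (note $\nabla\phi(\wh\mu_c)=\wh\theta_c$ by Lemma~\ref{lemma:facts}), Lemma~\ref{lemma:KL-exp} yields
\[
\mc R_{\wh\theta_c,\rho_c}(w)-\mc R_{\wh\theta_c,0}(w)=\Sup{\mu\in\mc S_c(\wh\theta_c,\rho_c)}\inner{\wh\mu_c-\mu}{\lambda_c},\qquad \mc S_c(\wh\theta_c,\rho_c)\Let\big\{\mu\in\dom(\phi):D_\phi(\mu,\wh\mu_c)\le\rho_c\big\}.
\]
When $\rho_c=0$ all three quantities in the lemma vanish, so I may assume $\rho_c>0$. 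It then suffices to produce constants $0<m<M<\infty$ depending only on $\Psi,\Theta_c,\overline{\rho}_c$ with
\[
\big\{\mu:\norm{\mu-\wh\mu_c}\le\sqrt{2\rho_c/M}\big\}\ \subseteq\ \mc S_c(\wh\theta_c,\rho_c)\ \subseteq\ \big\{\mu:\norm{\mu-\wh\mu_c}\le\sqrt{2\rho_c/m}\big\}
\]
for all $\wh\theta_c\in\Theta_c$ and $0<\rho_c\le\overline{\rho}_c$: since $\sup_{\norm{\mu-\wh\mu_c}\le r}\inner{\wh\mu_c-\mu}{\lambda_c}=r\norm{\lambda_c}$, the right-hand containment gives the upper inequality of the lemma and the inscribed ball gives the lower one (both trivial when $\lambda_c=0$). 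Equivalently, I must prove the uniform quadratic bound $\tfrac m2\norm{\mu-\wh\mu_c}^2\le D_\phi(\mu,\wh\mu_c)\le\tfrac M2\norm{\mu-\wh\mu_c}^2$ on $\mc S_c(\wh\theta_c,\rho_c)$.

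I would establish this bound by a local/global split. For the local part: $\nabla\Psi(\Theta_c)$ is a compact subset of the open set $\inte(\dom(\phi))$ (Lemma~\ref{lemma:facts}), so I can fix $r_0>0$ whose closed $r_0$-neighborhood $N$ is still a compact subset of $\inte(\dom(\phi))$; on $N$ the conjugate $\phi$ is twice continuously differentiable with $\nabla^2\phi=(\nabla^2\Psi\circ\nabla\phi)^{-1}\succ 0$ — here the hypothesis that $\Psi$ has locally Lipschitz gradients and the affine independence of the sufficient statistics (which forces $\nabla^2\Psi\succ0$) are used; cf.\ \cite[Theorem~4.1]{ref:goebel2008local} — so $m_1 I\preceq\nabla^2\phi\preceq M_1 I$ on $N$ for some $0<m_1\le M_1<\infty$, and a second-order Taylor expansion along the segment $[\wh\mu_c,\mu]\subseteq N$ gives $\tfrac{m_1}{2}\norm{\mu-\wh\mu_c}^2\le D_\phi(\mu,\wh\mu_c)\le\tfrac{M_1}{2}\norm{\mu-\wh\mu_c}^2$ whenever $\norm{\mu-\wh\mu_c}\le r_0$. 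For the global part: I claim $\bigcup_{\wh\theta_c\in\Theta_c}\mc S_c(\wh\theta_c,\overline{\rho}_c)$ is bounded, say by $R_0$. This is where the standing hypothesis $\Theta=\R^p$ enters, as it makes $\phi=\Psi^*$ supercoercive (the coercivity estimate already used in the proof of Lemma~\ref{lemma:compact}); combining supercoercivity with compactness of $\Theta_c$ forces $D_\phi(\mu,\wh\mu_c)\to\infty$ as $\norm{\mu}\to\infty$ uniformly over $\wh\theta_c\in\Theta_c$.

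It remains to treat points $\mu\in\mc S_c(\wh\theta_c,\rho_c)$ with $\norm{\mu-\wh\mu_c}>r_0$, and there I deliberately avoid controlling $\nabla^2\phi$ (which may blow up near $\partial\dom(\phi)$): the upper bound is immediate from $D_\phi(\mu,\wh\mu_c)\le\rho_c\le\overline{\rho}_c\le(\overline{\rho}_c/r_0^2)\norm{\mu-\wh\mu_c}^2$, and the lower bound follows because $\mu\mapsto D_\phi(\mu,\wh\mu_c)$ is convex with minimum $0$ at $\wh\mu_c$, hence nondecreasing along the ray from $\wh\mu_c$ through $\mu$, so $D_\phi(\mu,\wh\mu_c)\ge D_\phi(\mu',\wh\mu_c)\ge\tfrac{m_1}{2}r_0^2\ge\tfrac{m_1 r_0^2}{2R_0^2}\norm{\mu-\wh\mu_c}^2$, where $\mu'$ is the point of that ray at distance $r_0$ and $\norm{\mu-\wh\mu_c}\le R_0$ on $\mc S_c$. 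Taking $m\Let m_1\min\{1,\,r_0^2/R_0^2\}$ and $M\Let\max\{M_1,\,2\overline{\rho}_c/r_0^2\}$ — positive, finite, dependent only on $\Psi,\Theta_c,\overline{\rho}_c$, and with $m<M$ — gives the quadratic estimate on all of $\mc S_c(\wh\theta_c,\rho_c)$; note that $\sqrt{2\rho_c/M}\le r_0$ by the choice of $M$, so the local bound also covers the inscribed ball. Feeding the ball sandwich back into the displayed identity completes the proof.

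The step I expect to be the main obstacle is precisely this uniformity: making the quadratic Bregman bounds hold with constants independent of both $\wh\theta_c\in\Theta_c$ and $\rho_c\le\overline{\rho}_c$ when $\mc S_c$ reaches, or comes arbitrarily close to, $\partial\dom(\phi)$, where $\phi$ is in general neither uniformly strongly convex nor uniformly smooth — so a direct appeal to local strong convexity (as in the proof of Proposition~\ref{prop:surrogate}) will not by itself deliver the far-field estimate. The local/global split above is designed to sidestep this, and the essential analytic input is the supercoercivity of $\phi$ supplied by $\Theta=\R^p$, which also underlies Lemma~\ref{lemma:compact}.
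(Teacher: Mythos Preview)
Your proof is correct and rests on the same three analytical ingredients as the paper's: (i) rewrite $\mc R_{\wh\theta_c,\rho_c}(w)-\mc R_{\wh\theta_c,0}(w)$ as the support function of the Bregman ball $\mc S_c$ in expectation-parameter coordinates, (ii) use the supercoercivity of $\phi$ (available precisely because $\Theta=\R^p$) to get a uniform bound on $\bigcup_{\wh\theta_c\in\Theta_c}\mc S_c(\wh\theta_c,\overline\rho_c)$, and (iii) use local strong convexity of $\phi$ (dual to the locally Lipschitz gradient hypothesis on $\Psi$, via \cite{ref:goebel2008local}) to get quadratic control of $D_\phi$.

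The organizational difference is that the paper does not split into local and far-field regimes: it shows the union $\mc D_{\overline\rho_c}=\bigcup_{\wh\theta_c\in\Theta_c}\mc S_c(\wh\theta_c,\overline\rho_c)$ is compact in $\inte(\dom(\phi))$, passes to its convex hull, and then asserts a single pair $0<m\le M<\infty$ with $\tfrac m2\|\mu-\mu'\|_2^2\le D_\phi(\mu,\mu')\le\tfrac M2\|\mu-\mu'\|_2^2$ for all $\mu,\mu'$ in that convex hull, after which both inequalities follow directly. Your local/global split, with the ray-monotonicity argument for the far-field lower bound and the explicit check that $\sqrt{2\rho_c/M}\le r_0$, is slightly longer but more explicit about why the inscribed Euclidean ball actually lies in $\mc S_c$ --- a point the paper dispatches with ``a similar argument leads to the lower bound.'' One cosmetic fix: your $R_0$ bounds $\|\mu\|$ over the union, so in the far-field step you should use $\|\mu-\wh\mu_c\|\le 2R_0$ (or redefine $R_0$ as a diameter bound); this does not affect the argument.
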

\begin{proof}[Proof of Lemma~\ref{lemma:regularization-inner}]
    Consider the set 
    \[
        \mc D \Let \{ \wh \mu_c : \exists \wh \theta_c \in \Theta_c \text{ such that } \wh \mu_c = \nabla \Psi(\wh \theta_c) \}
    \] 
    and its $\overline{\rho}_c$-inflated set
    \[
        \mc D_{\overline{\rho}_c} \Let \{ \mu: \exists \wh \mu_c \in \mc D \text{ such that }  \phi(\mu) - \phi(\wh \mu_c) - \inner{\mu - \wh \mu_c}{\wh \theta_c} \le \overline{\rho}_c \}.
    \]
    Because $\Theta_c$ is compact and $\nabla \Psi$ is a continuous function, $\mc D$ is compact~\cite[Theorem~2.34]{ref:aliprantis06hitchhiker}. Note that we can rewrite $\mc D_{\overline{\rho}_c}$ as 
    \[
    \mc D_{\overline{\rho}_c} = \{ \mu: \exists \wh \mu_c \in \mc D \text{ such that }  \phi(\mu)+\inner{\mu }{-\wh \theta_c} \le \overline{\rho}_c + \phi(\wh \mu_c) -\inner{ \wh \mu_c}{\wh\theta_c}\}.
    \]
    Let $S$ be temporarily the set
    \[
    S = \left\{\mu:\phi(\mu)+\inf_{\wh\theta_c\in\Theta_c}\inner{\mu }{-\wh \theta_c} \le \overline{\rho}_c +\Sup{\wh\theta_c\in\Theta_c} \phi(\wh \mu_c) -\inner{ \wh \mu_c}{\wh\theta_c}<\infty\right\}.
    \]
    We have that $\mc D_{\overline{\rho}_c}\subseteq S$. 
    Recall the definition of $\phi$:
    \[
	\phi : \mu \mapsto \Sup{\theta \in \Theta} \left\{ \inner{\mu}{\theta} - \Psi(\theta) \right\}.
    \]
    Therefore $\phi(\cdot)$ is closed, convex and proper. Therefore by \cite[Proposition~2.16]{ref:bauschke1997legendre}, $\Theta=\mathbb{R}^p$ implies that $\phi(\cdot)$ is super-coercive, i.e., $\lim_{\|\mu\|_2\to\infty}\phi(\mu)/\|\mu\|_2\to\infty$. Thus
    \[
    \lim_{\|\mu\|_2\to\infty}\phi(\mu)+\inf_{\wh\theta_c\in\Theta_c}\inner{\mu}{-\wh\theta_c}\to\infty.
    \]
    Therefore $S$ is bounded, which implies that $\mc D_{\overline{\rho}_c}$ is also bounded.
    
     Since $\Theta_c$ is compact, there exists a subsequence $\{\wh\theta_c^{k_n}\}_{n\geq1}$ such that $\wh\theta_c^{k_n}\to\wh\theta_c^\infty\in\Theta_c$ as $n\to\infty$. 
    Since $\mc D_{\rho_c}$ is bounded, it suffices to show that $\mc D_{\rho_c}$ is closed. Choose any sequence $\{\mu^k\}_{k\geq 1}\in \mc D_{\rho_c}$ such that $\mu^k\to\mu^{\infty}$ as $k\to\infty$, we want to show that $\mu^{\infty}\in \mc D_{\rho_c}$. For each $k$, since $\mu^k\in \mc D_{\rho_c}$, there exists $\wh\mu_c^k\in\mc D$ and $\wh \theta_c^k\in \Theta_c$ such that $\phi(\mu^k) - \phi(\wh \mu_c^k) - \inner{\mu^k - \wh \mu_c^k}{\wh \theta_c^k} \le \rho_c$. Since $\mc D$ and $\Theta_c$ are compact, there exists a  subsequence $\{k_n\}_{n\geq 1}$ such that $\wh\mu_c^{k_{n}}\to\wh\mu_c^{\infty}$ and $\wh\theta_c^{k_{n}}\to\wh\theta_c^{\infty}$ for some $\wh\mu_c^{\infty}\in \mc D$ and $\wh\theta_c^{\infty}\in \Theta_c$. Since $\wh\mu_c^{k_n}=\nabla \Psi(\wh\theta_c^{k_n})$, by continuity we have $\wh\mu_c^\infty = \nabla \Psi( \wh\theta_c^\infty)$. Note that \[
    \phi(\mu^{k_n}) - \phi(\wh \mu_c^{k_n}) - \inner{\mu^{k_n} - \wh \mu_c^{k_n}}{\wh \theta_c^{k_n}} \le \rho_c,
    \]
    by continuity of $\phi$, we have
    \[
    \phi(\mu^{\infty}) - \phi(\wh \mu_c^{\infty}) - \inner{\mu^{\infty} - \wh \mu_c^{\infty}}{\wh \theta_c^{\infty}} \le \rho_c.
    \]
    Therefore $\mu^\infty\in \mc D_{\rho_c}$ and hence $\mc D_{\rho_c}$ is closed. 
    
    The finite dimensional set $\mc D_{\overline{\rho}_c}$ is closed and bounded, thus it is compact, and moreover $\mc D \subseteq \mc D_{\rho_c}$. The convex hull $\overline{\mc D}_{\overline{\rho}_c}$ of $\mc D_{\overline{\rho}_c}$ is also compact~\cite[Corollary~5.33]{ref:aliprantis06hitchhiker}. Because $\Psi$ has locally Lipschitz continuous gradients, $\phi$ is locally strongly convex~\cite[Theorem~4.1]{ref:goebel2008local}. Moreover, $\phi$ is also essentially smooth by Lemma~\ref{lemma:facts}\ref{fact:4}. Thus over the set $\overline{\mc D}_{\overline{\rho}_c}$, there exist constants $0 < m \le M < +\infty$ such that
    \[
    \frac{m}{2} \| \mu - \mu'\|_2^2 \le \phi(\mu) - \phi(\mu') - \inner{\mu - \mu'}{\theta'} \le \frac{M}{2} \| \mu - \mu'\|_2^2 \quad \forall \mu, \mu' \in \overline{\mc D}_{\overline{\rho}_c}, \mu' = \nabla \Psi(\theta').
    \]
    Notice that the constants $m$ and $M$ depend only on $\phi$, and thus on $\Psi$, $\overline{\rho}_c$ and $\Theta_c$
    
    Denote temporarily the shorthand $\lambda_c = \lambda(w, \wh x_c)$. We have
    $\mc R_{\wh \theta_c, 0}(w) = \Psi(\lambda_c)-\inner{\wh \mu_c}{\lambda_c}$,
    and so
    \begin{align*}
        \mc R_{\wh \theta_c, \rho_c}(w) - \mc R_{\wh \theta_c, 0}(w) &= \left\{
            \begin{array}{cl}
                \sup & \inner{\mu - \wh \mu_c}{\lambda_c} \\
                \st & \phi(\mu) - \phi(\wh \mu_c) - \inner{\mu - \wh \mu_c}{\wh \theta_c} \le \rho_c.
            \end{array}
        \right. 
    \end{align*}
    Because $\mu$ and $\wh \mu_c$ are both in $\overline{D}_{\overline{\rho}_c}$, we have
    \[
        \frac{m}{2} \| \mu - \wh \mu_c\|_2^2 \le \phi(\mu) - \phi(\wh \mu_c) - \inner{\mu - \wh \mu_c}{\wh \theta_c} \le \frac{M}{2} \| \mu - \wh \mu_c\|_2^2.
    \]
    We now have
    \[
        \mc R_{\wh \theta_c, \rho_c}(w) - \mc R_{\wh \theta_c, 0}(w) \leq \sup\left\{\inner{\mu - \wh \mu_c}{\lambda_c}: \| \mu - \wh \mu_c\|_2^2 \le 2\rho_c/m\right\} = \sqrt{2\rho_c/m}\| \lambda_c\|_2.
    \]
    A similar argument leads to the lower bound. This observation completes the proof.
\end{proof}

\begin{proof}[Proof of Theorem~\ref{thm:consistency}]
Without loss of generality consider $\mc W\subseteq\mathbb{R}^q$. For notational simplicity, denote
\[
    R_{\wh\theta,\eps,\rho}(w)=\Sup{\QQ \in \mbb B(\Pnom)} \EE_{\QQ}\left[  \ell_\lambda(X, Y, w ) \right]. 
\]
Since $\eps \geq \sum_{c=1}^C\wh p_c \rho_c$ with probability going to $1$, following the same argument as in the proof of Proposition \ref{prop:surrogate}, we have that with probability going to $1$, for any $w\in\mc W$,
\[
  R_{\wh\theta,\eps,\rho}(w) - R_{\wh\theta,0,0}(w)\le  \| t\opt - \wh t\|_1  + \sqrt{2\eps} \|\wh t\|_1,
\]
where
  \[
        \| \wh t \|_1 = \sum_{c=1}^C | \EE_{\Pnom_{Y|\wh x_c}} \left[\ell_\lambda(\wh x_c, Y, w ) \right] |
    \quad \text{and} \quad
   \| t\opt - \wh t\|_1  = \sum_{c=1}^C |\mc R_{\wh \theta_c, \rho_c}(w) - \mc R_{\wh \theta_c, 0}(w)|.
\]
For each $w$, since $\wh \theta_c\to \lambda(w_0,\wh x_c)$ in probability, we have $\PP(\|\wh \theta_c-\lambda(w_0,\wh x_c)\|_2>1)\to0$. 
Therefore there exists compact set $\Theta_c$ for each $c$ such that $\wh \theta_c$ is contained in $\Theta_c$ with probability going to $1$. 
Choose $\overline{\rho}_c=1$, since $\rho_c\to0$, we have $\overline{\rho}_c\geq \rho_c$ eventually. Therefore, by Lemma~\ref{lemma:regularization-inner}, for each $c$ with probability going to $1$
\[
 |\mc R_{\wh \theta_c, \rho_c}(w) - \mc R_{\wh \theta_c, 0}(w)|\le\sqrt{2\rho_c/m}\| \lambda(w, \wh x_c) \|_2,
\] 
where the above constant $m$ can be chosen independent of $c$ due to the finite cardinality assumption of $\mathcal{X}$. 
Since the function $\lambda(w,\wh x_c)$ is continuous in $w$ for any $\wh x_c$, we have $\|\lambda(w,\wh x_c)\|_2$ is bounded for all $w$ ranging over a compact set $W \subset \mc W$. Thus for each $c$ with probability going to $1$, we have
\[
 \sup_{w\in W}|\mc R_{\wh \theta_c, \rho_c}(w) - \mc R_{\wh \theta_c, 0}(w)|\le \sqrt{2\rho_c/m}\sup_{w\in W}\|\lambda(w,\wh x_c)\|_2.
\]
Since $\rho_c\to0$, we have for each $c$
\[
 \sup_{w\in W}|\mc R_{\wh \theta_c, \rho_c}(w) - \mc R_{\wh \theta_c, 0}(w)|=o_{\PP}(1).
\]
Thus $\sup_{w\in W}\|t\opt -\wh t\|_1=o_{\PP}(1)$. On the other hand, since $\sup_{w\in W}\mathcal{R}_{\wh\theta_c,0}(w)$ is $O_{\PP}(1)$, we have $\sup_{w\in W}\|\wh t\|_1=O_{\PP}(1)$. Therefore as $\eps\to0,\rho_c\to0$,
\[
 \sup_{w\in W}|R_{\wh \theta,\eps,\rho}(w)-R_{\wh\theta,0,0}(w)|=o_{\PP}(1)
\]
for any compact set $W$. Next, since $\wh \theta_c\to \lambda(w_0,\wh x_c)$ in probability, we have by continuous mapping theorem 
\[
    \nabla \Psi(\wh \theta_c)\to\nabla \Psi(\lambda(w_0,\wh x_c))\text{ in probability}.
\]
Besides, by the strong law of large number, 
\[ \wh p_c\to \mathbb{P}(X=\wh x_c)\text{ almost surely.}
\]
Recall that
\begin{align*}
     R_{\wh \theta,0,0}(w)&= \EE_{\Pnom}[\ell_{\lambda}(X,Y,w)] = \sum_{c=1}^C\wh p_c\EE_{\Pnom_{Y|\wh x_c}}[\ell_{\lambda}(\wh x_c,Y,w)]\\
     &=\sum_{c=1}^C \wh p_c \left(\Psi(\lambda(w,\wh x_c))-\langle\nabla \Psi(\wh \theta_c),\lambda(w,\wh x_c)\rangle\right).
\end{align*}
Therefore, for each $w$, we have
\[ 
    R_{\wh\theta,0,0}(w)\to R(w)\text{ in probability},
\]
where
\[
R(w) = \EE_{\PP}[ \ell_{\lambda}(X, Y, w)] =\sum_{c=1}^C\mathbb{P}(X=\wh x_c)\left(\Psi(\lambda(w,\wh x_c))-\inner{\nabla \Psi(\lambda(w_0,\wh x_c))}{\lambda(w,\wh x_c)}\right).
\]
Since for each $c$, 
\[
  w_0 = \min_{w\in\mc W} \Psi(\lambda(w,\wh x_c))-\inner{\nabla \Psi(\lambda(w_0,\wh x_c))}{\lambda(w,\wh x_c)}
\]
Therefore $w_0$ solves $\min_{w\in\mc W} R(w)$. If $R(w)$ admits an unique solution, then clearly $w_0$ is such a solution. Since $R_{\wh \theta,0,0}(\cdot)$ is convex, by \cite[Theorem~II.1]{ref:andersen1982cox},
\[
  \sup_{w\in W}|R_{\wh\theta,0,0}(w)- R(w)| = o_{\PP}(1)
\]
for any compact set $W$.
Thus by triangle inequality
\[
   \sup_{w\in W}|R_{\wh \theta,\eps,\rho}(w)- R(w) |= o_{\PP}(1)
\]
for any compact set $W$.
Let $B$ denote the unit closed ball in $\mathbb{R}^q$, then $w_0+\eta B$ is compact for any $\eta>0$. Thus $ R_{\wh \theta,\eps,\rho}(w)- R(w) = o_{\PP}(1)$ uniformly over $w_0+\eta B$. Since $R(w)$ is convex and $w_0$ is its unique optimal solution, we have
\[
\inf_{w\in w_0+\eta B\backslash \frac{\eta}{2}B} R(w)>R(w_0).
\]
Therefore, with probability going to $1$, 
\[ 
   \inf_{w\in w_0+\frac{\eta}{2} B} R_{\wh \theta,\eps,\rho}(w)< \inf_{w\in w_0+\eta B\backslash \frac{\eta}{2}B} R_{\wh \theta,\eps,\rho}(w).
\]
Thus by convexity of $R_{\wh\theta,\eps,\rho}$, also
\[ 
   \inf_{w\in w_0+\frac{\eta}{2} B} R_{\wh \theta,\eps,\rho}(w)< \inf_{w\notin w_0+\eta B} R_{\wh \theta,\eps,\rho}(w).
\]
Thus the solution $w^*$ that solves $\inf_{w\in 
  \mathcal{W}} R_{\wh \theta,\eps,\rho}(w)$
satisfies 
\[
   \PP(\|w^*-w_0\|_2\leq \frac{\eta}{2})\to1.
\]
Since $\eta$ is chosen arbitrarily, we conclude that $w^*\to w_0$ in probability.
\end{proof}

\begin{proof}[Proof of Lemma~\ref{lemma:asymptotic-joint}]
Denote
\[
    W_c  =  \sqrt{N_c} \left(\frac{\sum_{\wh x_i =\wh x_c} T(\wh y_i)}{N_c} - \EE_{f(\cdot|\theta_c)}[T(Y)]\right).
\]
W.l.o.g. we can assume that $\EE_{f(\cdot|\theta_c)}[T(Y)] =0 $. 
We first show the joint convergence
\[
    (W_1^\top,\ldots, W_C^\top)^\top \xrightarrow{d.} \mc N(0,G)\qquad \text{as}\qquad N\to\infty,
\]
where $G$ is a block-diagonal matrix with diagonal blocks given by $G_c =\mathrm{Cov}_{f(\cdot|\theta_c)}(T(Y)),c=1,\ldots,C$. Note that 
\[
    N_c/N \to \PP(X = \wh x_c)>0\qquad \text{a.s. for each }c.
\]
For convenience denote $r_c = \PP(X = \wh x_c)$. We let 
\[
    \tilde {W_c} =  \sqrt{\lfloor r_c N\rfloor}\cdot \frac{\sum_{\wh x_i =\wh x_c} T(\wh y_i)}{\lfloor r_c N\rfloor} =  \frac{\sum_{\wh x_i =\wh x_c} T(\wh y_i)}{\sqrt{\lfloor r_c N\rfloor}}.
\]
Let $\left[\sum_{\wh x_i =\wh x_c} T(\wh y_i)\right]_{\lfloor r_c N\rfloor}$ be the sum of the first $\lfloor r_c N\rfloor$ samples of $T(\wh y_i)$ such that $\wh x_i = \wh x_c$. 
If $N_c < \lfloor r_c N\rfloor$, we add additional $\lfloor r_c N\rfloor - N_c$ independent copies of $T(Y)$ where $Y\sim f(\cdot|\theta_c)$ to the sum $\sum_{\wh x_i =\wh x_c} T(\wh y_i)$, and denote it by $\left[\sum_{\wh x_i =\wh x_c} T(\wh y_i)\right]_{\lfloor r_c N\rfloor}$ as well. Denote
\[
  \bar {W_c} = \frac{\left[\sum_{\wh x_i =\wh x_c} T(\wh y_i)\right]_{\lfloor r_c N\rfloor}}{\sqrt{\lfloor r_c N\rfloor} }.
\]
Note that $\bar W_1,\ldots,\bar W_C$ are independent, by i.i.d~ central limit theorem
\[
\left(\bar W_1^\top, \ldots, \bar W_C^\top \right)^\top\xrightarrow{d.} \mc N(0,G)\qquad \text{as}\qquad N\to\infty,
\]
where $G$ is a block-diagonal matrix with $G_c =\mathrm{Cov}_{f(\cdot|\theta_c)}(T(Y))$. We next show that 
 \[
 \tilde W_c - \bar W_c = o_{\PP}(1).
 \]
Note that
 \[
 \tilde W_c - \bar W_c = \frac{\left[\sum_{\wh x_i =\wh x_c} T(\wh y_i)\right]_{\lfloor r_c N\rfloor} - \sum_{\wh x_i =\wh x_c} T(\wh y_i)}{\sqrt{\lfloor r_c N\rfloor}}.
 \]
 By Chebyshev inequality
 \begin{align*}
 \PP(\|\tilde W_c - \bar W_c \|_2>\epsilon) & \leq \frac{\EE\left[\left\|\left[\sum_{\wh x_i =\wh x_c} T(\wh y_i)\right]_{\lfloor r_c N\rfloor} - \sum_{\wh x_i =\wh x_c} T(\wh y_i)\right\|_2^2\right]}{\epsilon^2 \lfloor r_c N\rfloor}\\
 &= \frac{\EE\left[\EE\left[\left\|\left[\sum_{\wh x_i =\wh x_c} T(\wh y_i)\right]_{\lfloor r_c N\rfloor} - \sum_{\wh x_i =\wh x_c} T(\wh y_i)\right\|_2^2\right]\bigg|N_c\right]}{\epsilon^2 \lfloor r_c N\rfloor}\\
 & =  \frac{\EE[\|T(\wh y_i)\|_2^2]}{\epsilon^2}\frac{\EE[|\lfloor r_c N\rfloor - N_c|]}{\lfloor r_c N\rfloor}.
 \end{align*}
 Since $N_c/\lfloor r_c N\rfloor\to 1$ almost surely, by dominated convergence theorem
 \[
 \frac{\EE[|\lfloor r_c N\rfloor - N_c|]}{\lfloor r_c N\rfloor}\to 0.
 \]
 Thus
 \[
 \PP(\|\tilde W_c - \bar W_c \|_2>\epsilon)\to 0 \qquad \text{as}\qquad N\to\infty,
 \]
 which means that $\tilde W_c - \bar W_c = o_{\PP}(1)$. Thus by Slutsky's lemma 
 \[
\left(\tilde W_1^\top, \ldots, \tilde W_C^\top \right)^\top\xrightarrow{d.} \mc N(0,G)\qquad \text{as}\qquad N\to\infty.
\]
Finally, since $W_c = (1+o_{\PP}(1))\tilde W_c$, by Slutsky's lemma,
\[
\left( W_1^\top, \ldots, W_C^\top \right)^\top\xrightarrow{d.} \mc N(0,G)\qquad \text{as}\qquad N\to\infty.
\]

 Now note that
 \[
\wh\theta_c =(\nabla \Psi)^{-1}\left((N_c)^{-1}\sum_{\wh x_i=\wh x_c}T(\wh y_i)\right) 
\]
and 
\[
\theta_c = (\nabla \Psi)^{-1}\left(\EE_{f(\cdot|\theta_c)}[T(Y)]\right).
\]
Also note that the vector-valued function $(\nabla\Psi)^{-1}(\cdot)$ is continuously differentiable at $\EE_{f(\cdot|\theta_c)}[T(Y)]$, therefore, by the delta method
\[
\left(\sqrt{N_1}(\wh \theta_1-\theta_1)^\top, \ldots, \sqrt{N_C}(\wh \theta_C-\theta_C)^\top\right)^\top \xrightarrow{d.} D\cdot \mc N(0,G),
\]
where $D$ is a block-diagonal matrix with diagonal elements given by
\[
 D_c= J (\nabla \Psi)^{-1}( \EE_{f(\cdot|\theta_c)}[T(Y)])
\]
the Jacobian matrix of $(\nabla \Psi)^{-1}$ evaluated at $\EE_{f(\cdot|\theta_c)}[T(Y)]$. Thus
\[
V_c =  D_c \mathrm{Cov}_{f(\cdot|\theta_c)}(T(Y))D_c^\top.
\]
Note that by Lemma~\ref{lemma:KL-exp}, we find
\[
     \KL(f(\cdot | \theta_c) \parallel f(\cdot | \wh \theta_c))=\inner{ \theta_c-\wh \theta_c}{ \mu_c}+\Psi(\wh \theta_c)-\Psi(\theta_c).
\]
Note that $\Psi$ is infinitely-many differentiable, we have the follow Taylor expansion
\begin{align*}
    &\Psi(\wh \theta_c)-\Psi(\theta_c)=\inner{\wh \theta_c-\theta_c}{\mu_c}+\frac{1}{2}\inner{\wh \theta_c-\theta_c}{\nabla^2\Psi\big(\theta_c+\eta(\wh \theta_c-\theta_c)\big)(\wh \theta_c-\theta_c)},
\end{align*}
where $\eta$ is a random variable with values between $0$ and $1$. 
Therefore
\[ 
    \KL(f(\cdot | \theta_c) \parallel f(\cdot | \wh \theta_c))=\frac{1}{2}\inner{\wh \theta_c-\theta_c}{\nabla^2\Psi\big(\theta_c+\eta(\wh \theta_c-\theta_c)\big)(\wh \theta_c-\theta_c)}.
\]
Because $\sqrt{N_c} (\wh \theta_c - \theta_c) \xrightarrow{d.} \mc N(0, V_c)$, and $\nabla^2\Psi(\cdot)$ is continuous, we have
\[
    \nabla^2\Psi\big(\theta_c+\eta(\wh \theta_c-\theta_c)\big)=\nabla^2\Psi\left(\theta_c\right)+o_{\PP}(1).
\]
Moreover, since we have the joint convergence 
\[
    \left(\sqrt{N_1}(\wh\theta_1-\theta_1)^\top,\ldots,\sqrt{N_C}(\wh\theta_C-\theta_C)^\top\right)^\top \xrightarrow{d.} \mc N(0,V),
\]
by continuous mapping theorem
\[
    \left(N_1\times\KL(f(\cdot|\theta_1)\parallel f(\cdot|\wh\theta_1)),\ldots, N_C\times \KL(f(\cdot|\theta_C)\parallel f(\cdot|\wh\theta_C))\right)^\top \xrightarrow{d.}Z \quad \text{as}\quad N\to\infty,
\]
 where $Z = (Z_1,\ldots, Z_C)^\top$ with $Z_c = \frac{1}{2}R_c^\top\nabla^2\Psi(\theta_c)R_c$, $R_c\sim\mc N(0,V_c)$ and are independent for $c=1,\ldots,C$.
\end{proof}

Before proving the result on the worst-case distribution in Theorem~\ref{thm:extreme}, we first prove the worst-case conditional measure that maximize problem~\eqref{eq:inner}. 
\begin{proposition}[Worst-case conditional distribution] \label{prop:conditional-extreme}
    For any $w \in \mc W$ and $\rho_c \in \R_{++}$, then the supremum problem~\eqref{eq:inner} is attained by $\QQ_{Y|\wh x_c}\opt \sim f(\cdot | \theta_c\opt)$ with $\theta_c\opt = \wh \theta_c - \lambda(w, \wh x_c)/\dualvar_c\opt$, where $\dualvar_c\opt > 0$ is the solution of the nonlinear algebraic equation
    \be \label{eq:FOC}
        \Psi\big( \wh \theta_c - \dualvar^{-1}\lambda(w, \wh x_c) \big) + \dualvar^{-1} \inner{\nabla \Psi\big(\wh \theta_c - \dualvar^{-1} \lambda(w, \wh x_c) \big)}{\lambda(w, \wh x_c)} = \Psi(\wh \theta_c) - \rho_c.
    \ee
\end{proposition}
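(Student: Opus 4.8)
I would build directly on the reformulation machinery in the proof of Proposition~\ref{prop:conditional-refor}, where problem~\eqref{eq:inner} was rewritten, via the mean-parameter change of variables $\mu = \nabla\Psi(\theta)$, as the concave maximization
\[
\Sup{\mu \in \mc S_c}\; \Psi(\lambda_c) - \inner{\mu}{\lambda_c} \quad \text{subject to}\quad \phi(\mu) - \phi(\wh\mu_c) - \inner{\mu - \wh\mu_c}{\wh\theta_c} \le \rho_c,
\]
with the shorthands $\lambda_c = \lambda(w,\wh x_c)$ and $\wh\mu_c = \nabla\Psi(\wh\theta_c)$, and with $\mc S_c$ compact by Lemma~\ref{lemma:compact}. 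The degenerate case $\lambda_c = 0$ is trivial (the log-loss is the constant $\Psi(0)$, the supremum is attained by the nominal itself, i.e.\ $\theta_c\opt = \wh\theta_c$, and nothing further is claimed), so I would assume $\lambda_c \neq 0$ throughout. Since $\rho_c > 0$, Slater's condition holds, so the supremum is attained --- at some $\mu_c\opt \in \mc S_c$ because $\mc S_c$ is compact and the objective continuous --- and, by the strong-duality argument already carried out in the proof of Proposition~\ref{prop:conditional-refor}, it equals $\Psi(\lambda_c) + \Inf{\dualvar > 0} \dualvar\bar\rho_c + \dualvar\phi^*(\wh\theta_c - \lambda_c/\dualvar)$, where $\bar\rho_c = \rho_c + \phi(\wh\mu_c) - \inner{\wh\mu_c}{\wh\theta_c} = \rho_c - \Psi(\wh\theta_c)$ by the Legendre identity, and $\phi^* = \Psi$.

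Next I would characterize the optimal dual variable. The dual objective $g(\dualvar) = \dualvar\bar\rho_c + \dualvar\Psi(\wh\theta_c - \lambda_c/\dualvar)$ is convex on $\R_{++}$ (Proposition~\ref{prop:conditional-refor}), and its infimum is attained at an interior point $\dualvar_c\opt > 0$, since $\dualvar = 0$ is not optimal when $\lambda_c \neq 0$ and the optimal value is finite. Differentiating, $g'(\dualvar) = \bar\rho_c + \Psi(\wh\theta_c - \dualvar^{-1}\lambda_c) + \dualvar^{-1}\inner{\nabla\Psi(\wh\theta_c - \dualvar^{-1}\lambda_c)}{\lambda_c}$; substituting $\bar\rho_c = \rho_c - \Psi(\wh\theta_c)$, the stationarity condition $g'(\dualvar_c\opt) = 0$ is exactly equation~\eqref{eq:FOC} with $\dualvar = \dualvar_c\opt$. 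Uniqueness of the positive root follows from strict convexity of $g$, which holds because $g$ is (up to a linear term) the perspective of the strictly convex function $\Psi$ of a regular exponential family, evaluated along the nonzero direction $\lambda_c$.

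It remains to recover the primal optimizer and certify it. At $\dualvar = \dualvar_c\opt$, the inner supremum in the Lagrangian, $\Sup{\mu} \inner{\mu}{\dualvar_c\opt\wh\theta_c - \lambda_c} - \dualvar_c\opt\phi(\mu)$, is solved by the first-order condition $\nabla\phi(\mu_c\opt) = \wh\theta_c - \lambda_c/\dualvar_c\opt$; since $\nabla\phi = (\nabla\Psi)^{-1}$ (Lemma~\ref{lemma:facts}\ref{fact:3}), this identifies the natural parameter of the optimal conditional measure as $\theta_c\opt = \wh\theta_c - \lambda(w,\wh x_c)/\dualvar_c\opt$ and $\mu_c\opt = \nabla\Psi(\theta_c\opt)$. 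Finally I would verify that this $\mu_c\opt$ is primal feasible with the KL constraint active --- equivalently, that $\phi(\mu_c\opt) - \phi(\wh\mu_c) - \inner{\mu_c\opt - \wh\mu_c}{\wh\theta_c} = \rho_c$ --- which, after substituting $\mu_c\opt = \nabla\Psi(\theta_c\opt)$ and using the Legendre identities $\phi(\nabla\Psi(\theta)) = \inner{\nabla\Psi(\theta)}{\theta} - \Psi(\theta)$, collapses to precisely~\eqref{eq:FOC}; together with the matching of primal and dual objective values this certifies that $\QQ_{Y|\wh x_c}\opt \sim f(\cdot\,|\,\theta_c\opt)$ attains~\eqref{eq:inner}.

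\textbf{Main obstacle.} The delicate part is the bookkeeping of passing between the mean-parameter space $\mu$ (where the problem is convex and the KKT system is transparent) and the natural-parameter space $\theta$ (in which the conclusion is phrased), leaning on the Legendre-duality facts of Lemma~\ref{lemma:facts}; in particular, one must check that the stationary multiplier $\dualvar_c\opt$ is strictly positive and interior, that complementary slackness genuinely forces the active-constraint equation~\eqref{eq:FOC}, and that the edge case $\lambda(w,\wh x_c) = 0$ is excised cleanly.
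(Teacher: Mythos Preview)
Your proposal is correct and follows essentially the same route as the paper: both leverage the dual reformulation of Proposition~\ref{prop:conditional-refor}, identify~\eqref{eq:FOC} as the first-order optimality condition of the convex dual~\eqref{eq:inner-refor}, and then certify that the induced $\theta_c\opt = \wh\theta_c - \lambda(w,\wh x_c)/\dualvar_c\opt$ is primal feasible with the KL constraint binding and attains the optimal value. The only stylistic difference is that the paper verifies feasibility by computing $\KL(\QQ_{Y|\wh x_c}\opt \parallel \Pnom_{Y|\wh x_c})$ directly via Lemma~\ref{lemma:KL-exp} and then matches objective values, whereas you recover the primal maximizer from the inner Lagrangian stationarity and unpack the Legendre identities --- the same argument in a slightly different order.
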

\begin{proof}[Proof of Proposition~\ref{prop:conditional-extreme}]
    Reminding that problem~\eqref{eq:inner} is written as
    \[
    \Sup{\QQ_{Y|\wh x_c} \in \mbb B_{Y|\wh x_c}}  \EE_{\QQ_{Y|\wh x_c}} \left[\ell_\lambda(\wh x_c, Y, w ) \right].
    \]
    In the first step, we show that $\QQ_{Y|\wh x_c}\opt$ is feasible in problem~\eqref{eq:inner}, which means that $\QQ_{Y|\wh x_c}\opt \in \mbb B_{Y|\wh x_c}$. Indeed, we find that
    \[
        \KL(\QQ_{Y|\wh x_c}\opt \parallel \Pnom_{Y|\wh x_c}) = -\Psi\Big( \wh \theta_c - \frac{\lambda(w, \wh x_c)}{\dualvar_c\opt} \Big) - \frac{1}{\dualvar_c\opt} \inner{\nabla \Psi\Big(\wh \theta_c - \frac{\lambda(w, \wh x_c)}{\dualvar_c\opt} \Big)}{\lambda(w, \wh x_c)} + \Psi(\wh \theta_c) = \rho_c,
    \]
    where the first equality exploits the expression of the KL divergence between two distributions from the same family in Lemma~\ref{lemma:KL-exp}, and the second equality follows from the fact that $\dualvar_c\opt$ solves~\eqref{eq:FOC}.
    
    Proposition~\ref{prop:conditional-refor} asserts that the worst-case conditional expected log-loss problem~\eqref{eq:inner} is equivalent to the convex program~\eqref{eq:inner-refor}. Noticing that~\eqref{eq:FOC} is the first-order optimality condition of problem~\eqref{eq:inner-refor}, thus, by definition, $\dualvar_c\opt$ is the minimizer of~\eqref{eq:inner-refor}. The objective value of $\QQ_{Y|\wh x_c}\opt$ in~\eqref{eq:inner} amounts to
    \begin{align*}
        \EE_{\QQ_{Y|\wh x_c}\opt} \left[\ell_{\lambda}(\wh x_c, Y, w ) \right] &= \Psi( \lambda(w, \wh x_c)) - \inner{\EE_{\QQ_{Y|\wh x_c}\opt} [T(Y)]}{\lambda(w, \wh x_c)} \\
        &= \Psi( \lambda(w, \wh x_c)) - \inner{\nabla \Psi\Big( \wh \theta_c - \frac{\lambda(w, \wh x_c}{\dualvar_c\opt} \Big)}{\lambda(w, \wh x_c)} \\
        &= \dualvar_c\opt \big(\rho_c - \Psi(\wh \theta_c) \big) + \dualvar_c\opt \Psi\Big( \wh \theta_c - \frac{\lambda(w, \wh x_c)}{\dualvar_c\opt} \Big) + \Psi(\lambda(w, \wh x_c)),
    \end{align*}
    where the first equality follows by substituting the expression of $\ell_\lambda$ and the linearity of the expectation operator, the second equality follows from the convex conjugate relationship between the expectation parameters and the log-partition function $\Psi$, and the last equality follows from the fact that $\dualvar_c\opt$ solves~\eqref{eq:FOC}. Notice that the last expression coincide with the objective value of~\eqref{eq:inner-refor} evaluated at the optimal solution $\dualvar_c\opt$. This observation implies that $\QQ_{Y|\wh x_c}\opt$ attains the optimal value in~\eqref{eq:inner}.
\end{proof}

Next, we establish the following result on the optimal solution of the support function $h_{\mc Q}$ of the set $\mc Q$ defined as in~Lemma~\ref{lemma:support}.

\begin{lemma}[Support point of $\mc Q$] \label{lemma:support-solution}
Let $\mc Q$ be defined as in~\eqref{eq:Q-def}. For any $t \in \R^C$, if there exist $\alpha\opt \in \R$ and $\beta\opt \in \R_{++}$ that solve the following system of nonlinear algebraic equation
\begin{subequations}
    \begin{align}
            \ds \sum_{c=1}^C \wh p_c \exp \Big( \frac{t_c - \alpha}{\beta} - \rho_c - 1 \Big) - 1 &= 0 \label{eq:q-opt-1}\\
            \ds \sum_{c=1}^C \wh p_c (t_c - \alpha) \exp \Big( \frac{t_c - \alpha}{\beta} - \rho_c - 1 \Big) - (\eps + 1) \beta &= 0 \label{eq:q-opt-2}
        \end{align}
then the optimal solution $q\opt \in \mc Q$ that attains $t^\top q\opt = h_{\mc Q}(t)$ is
\be \label{eq:q-opt-def}
    q_c\opt = \wh p_c \exp\Big( \frac{t_c - \alpha\opt}{\beta\opt} - \rho_c - 1 \Big) \qquad \forall c = 1, \ldots, C.
\ee
\end{subequations}
\end{lemma}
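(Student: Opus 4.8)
The plan is to verify directly that the candidate $q\opt$ defined in~\eqref{eq:q-opt-def} is feasible for $\mc Q$ and that its objective value $t^\top q\opt$ coincides with the optimal value $h_{\mc Q}(t)$ furnished by Lemma~\ref{lemma:support}; a short Lagrangian weak-duality estimate then certifies that no feasible point does better, so $q\opt$ attains the support function. Concretely, I would split the argument into a feasibility check, an evaluation of $t^\top q\opt$, and a matching upper bound.

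First I would check feasibility. Non-negativity of $q\opt$ is immediate from~\eqref{eq:q-opt-def}, and equation~\eqref{eq:q-opt-1} states precisely that $\sum_{c=1}^C q_c\opt = 1$, so $q\opt$ is a probability vector. For the entropic constraint, note that by construction $\log q_c\opt - \log \wh p_c + \rho_c = (t_c - \alpha\opt)/\beta\opt - 1$, hence
\[
    \sum_{c=1}^C q_c\opt\big(\log q_c\opt - \log \wh p_c + \rho_c\big) = \frac{1}{\beta\opt}\sum_{c=1}^C q_c\opt (t_c - \alpha\opt) - \sum_{c=1}^C q_c\opt .
\]
Substituting~\eqref{eq:q-opt-1} together with~\eqref{eq:q-opt-2}, which gives $\sum_{c=1}^C q_c\opt(t_c-\alpha\opt) = (\eps+1)\beta\opt$, collapses the right-hand side to $\eps$; thus the entropic constraint holds with equality and $q\opt \in \mc Q$. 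The same two identities yield $t^\top q\opt = \alpha\opt \sum_{c=1}^C q_c\opt + \sum_{c=1}^C q_c\opt(t_c - \alpha\opt) = \alpha\opt + \beta\opt\eps + \beta\opt$, which is exactly the dual objective appearing in Lemma~\ref{lemma:support} evaluated at $(\alpha\opt,\beta\opt)$ (using~\eqref{eq:q-opt-1} once more to reduce the exponential sum to $1$).

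It then remains to show $t^\top q \le \alpha\opt + \beta\opt\eps + \beta\opt$ for every $q \in \mc Q$. Since any such $q$ satisfies $\sum_{c=1}^C q_c = 1$ and $\sum_{c=1}^C q_c(\log q_c - \log\wh p_c + \rho_c) \le \eps$, and $\beta\opt > 0$, adding these (zero and nonnegative) quantities gives
\[
    t^\top q \le \alpha\opt + \beta\opt\eps + \sum_{c=1}^C q_c\big(t_c - \alpha\opt - \beta\opt(\log q_c - \log\wh p_c + \rho_c)\big).
\]
Maximizing the final sum over $q_c \ge 0$ coordinatewise, the first-order condition yields the unconstrained optimizer $q_c = \wh p_c\exp((t_c-\alpha\opt)/\beta\opt - \rho_c - 1) = q_c\opt$, with optimal coordinate value $\beta\opt q_c\opt$; summing and invoking~\eqref{eq:q-opt-1} bounds the sum by $\beta\opt$. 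Hence $h_{\mc Q}(t) = \sup_{q\in\mc Q} t^\top q \le \alpha\opt + \beta\opt\eps + \beta\opt = t^\top q\opt$, and since $q\opt \in \mc Q$ this forces $t^\top q\opt = h_{\mc Q}(t)$, proving the claim. As an alternative to the coordinatewise bound, one can observe that~\eqref{eq:q-opt-1}--\eqref{eq:q-opt-2} are the stationarity conditions of the dual objective $(\alpha,\beta) \mapsto \alpha + \beta\eps + \beta\sum_{c=1}^C \wh p_c\exp((t_c-\alpha)/\beta - \rho_c - 1)$, which is jointly convex on $\R\times\R_{++}$ because each summand is a perspective of $\exp$ precomposed with an affine map; thus $(\alpha\opt,\beta\opt)$ is a global minimizer and Lemma~\ref{lemma:support} identifies its value with $h_{\mc Q}(t)$.

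The main obstacle, modest as it is, will be making the weak-duality chain airtight: one must use that the simplex constraint is an \emph{equality}, so that its multiplier $\alpha\opt$ may range over all of $\R$, whereas the entropic constraint is an inequality whose multiplier must be nonnegative --- guaranteed here by the hypothesis $\beta\opt \in \R_{++}$. Once this sign bookkeeping is in place, the remainder is just repeated substitution of~\eqref{eq:q-opt-1}--\eqref{eq:q-opt-2}, so no further machinery beyond Lemma~\ref{lemma:support} is needed.
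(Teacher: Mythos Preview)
Your proof is correct and follows essentially the same path as the paper's: verify feasibility of $q\opt$ via~\eqref{eq:q-opt-1}--\eqref{eq:q-opt-2}, then match $t^\top q\opt$ to the dual value at $(\alpha\opt,\beta\opt)$. The only minor difference is that the paper establishes optimality by observing that~\eqref{eq:q-opt-1}--\eqref{eq:q-opt-2} are the KKT conditions of the convex dual from Lemma~\ref{lemma:support} (so $(\alpha\opt,\beta\opt)$ is its minimizer and its value equals $h_{\mc Q}(t)$), whereas your primary argument carries out the weak-duality Lagrangian bound by hand and mentions the KKT route only as an alternative; the two are equivalent.
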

\begin{proof}[Proof of Lemma~\ref{lemma:support-solution}]
    By definition of $q\opt$ in~\eqref{eq:q-opt-def}, one can verify that $q\opt \ge 0$ and that $\sum_{c = }^C q_c\opt = 1$, where the equality follows from~\eqref{eq:q-opt-1}. Moreover,
    \begin{align*}
    \ds \sum_{c =1}^C q_c\opt (\log q_c\opt - \log \wh p_c + \rho_c) &= \sum_{c =1}^C \wh p_c \Big( \frac{t_c - \alpha\opt}{\beta\opt} - 1 \Big) \exp\Big( \frac{t_c - \alpha\opt}{\beta\opt} - \rho_c - 1 \Big) \\
    &= \sum_{c =1}^C \wh p_c \Big( \frac{t_c - \alpha\opt}{\beta\opt} \Big) \exp\Big( \frac{t_c - \alpha\opt}{\beta\opt} - \rho_c - 1 \Big) - 1 = \eps,
    \end{align*}
    where the equalities follow from the definition of~$q\opt$ in~\eqref{eq:q-opt-def}, and the equations~\eqref{eq:q-opt-1} and~\eqref{eq:q-opt-2}, respectively. This implies that $q\opt \in \mc Q$.
    
    It now remains to show that $t^\top q\opt = h_{\mc Q}(t)$. By Lemma~\ref{lemma:support}, we have
	\[
		h_{\mc Q}(t) = \left\{
			\begin{array}{cl}
				\inf & \alpha + \eps \beta + \beta \sum_{c=1}^C \wh p_c \exp\Big( \frac{t_c - \alpha}{\beta} - \rho_c - 1 \Big) \\
				\st & \alpha \in \R,\; \beta \in \R_{++}.
			\end{array}
		\right.
	\]
	If $(\alpha\opt, \beta\opt) \in \R \times \R_{++}$ is the solution of~\eqref{eq:q-opt-1}-\eqref{eq:q-opt-2}, then $(\alpha\opt, \beta\opt)$ satisfy the Karush-Kuhn-Tucker condition of the above infimum optimization problem, and thus we have
	\[
		h_{\mc Q}(t) = \alpha\opt + \eps \beta\opt + \beta\opt \sum_{c=1}^C \wh p_c \exp\Big( \frac{t_c - \alpha\opt}{\beta\opt} - \rho_c - 1 \Big).
	\]
	Moreover, we find
	\begin{align*}
		\sum_{c=1}^C t_c q_c\opt &= \sum_{c=1}^C t_c \wh p_c \exp\Big( \frac{t_c - \alpha\opt}{\beta\opt} - \rho_c -1 \Big) \\
		&=(\eps + 1)\beta\opt + \alpha\opt \sum_{c=1}^C \wh p_c \exp \Big( \frac{t_c - \alpha\opt}{\beta\opt} - \rho_c - 1 \Big) \\
		&= \alpha\opt + \eps \beta\opt + \beta\opt \sum_{c=1}^C \wh p_c \exp \Big( \frac{t_c - \alpha\opt}{\beta\opt} - \rho_c - 1 \Big) = h_{\mc Q}(t),
	\end{align*}
	where the first equality follows from the definition of $q\opt$, the second equality follows from~\eqref{eq:q-opt-2} and the third equality follows from~\eqref{eq:q-opt-1}. This observation completes the proof.
\end{proof}

\begin{proof}[Proof of Theorem~\ref{thm:extreme}]
    It is easy to verify that $\QQ\opt$ is a probability measure because each $\delta_{\wh x_c}$ and $\QQ_{Y|\wh x_c}\opt$ is a probability measure, and $\sum_{c=1}^C \wh p_c \exp\big( (t_c\opt - \alpha\opt)/\beta\opt - \rho_c -1 \big) = 1$ since $\alpha\opt,\beta\opt$ solves
	\begin{align}
	\sum_{c=1}^C \wh p_c \exp \big( \beta^{-1}(t_c\opt - \alpha) - \rho_c - 1 \big) - 1 &= 0\label{eq:Q-opt-1} \\
	\sum_{c=1}^C \wh p_c (t_c\opt - \alpha) \exp \big( \beta^{-1}(t_c\opt - \alpha) - \rho_c - 1 \big) - (\eps + 1) \beta &= 0\label{eq:Q-opt-2}, 
	\end{align}
	If we set $\QQ_X\opt = \sum_{c=1}^C \wh p_c \exp\big( (t_c\opt - \alpha\opt)/\beta\opt - \rho_c -1 \big) \delta_{\wh x_c}$, then we have
    \[
		\QQ\opt (\{\wh x_c\} \times A) = \QQ_X\opt(\{\wh x_c\}) \QQ_{Y|\wh x_c}\opt(A) \quad \forall A \in \mc F(\mc Y),~\forall c.
    \]
    Moreover, because $\QQ_{Y|\wh x_c}\opt$ is constructed using Proposition~\ref{prop:conditional-extreme}, we have $\KL(\QQ_{Y|\wh x_c} \parallel \Pnom_{Y|\wh x_c} ) \le \rho_c$ for all $c$. Furthermore, we also have
    \begin{align*}
        \KL(\QQ_X\opt \parallel \Pnom_X) + \EE_{\QQ_X\opt}[\sum_{c=1}^C \rho_c \mathbbm{1}_{\wh x_c}(X)]  &= \sum_{c =1}^C \wh p_c \Big( \frac{t_c\opt - \alpha\opt}{\beta\opt} - 1 \Big) \exp\Big( \frac{t_c\opt - \alpha\opt}{\beta\opt} - \rho_c - 1 \Big) \\
        &= \sum_{c =1}^C \wh p_c \Big( \frac{t_c\opt - \alpha\opt}{\beta\opt} \Big) \exp\Big( \frac{t_c\opt - \alpha\opt}{\beta\opt} - \rho_c - 1 \Big) - 1 = \eps,
    \end{align*}
    where the equalities follow from the construction of $\QQ_X\opt$ and the equations~\eqref{eq:Q-opt-1} and~\eqref{eq:Q-opt-2}, respectively. This implies that $\QQ\opt \in \mbb B(\Pnom)$.

    It now remains to show that $\QQ\opt$ is optimal. For any weight $w$, by the definition of $t_c\opt$, we have
    \[
        t_c\opt = \EE_{\QQ_{Y|\wh x_c}\opt} \left[\ell_{\lambda}(\wh x_c, Y, w ) \right] = \Sup{\QQ_{Y|\wh x_c} \in \mbb B_{Y|\wh x_c}}  \EE_{\QQ_{Y|\wh x_c}} \left[ \ell_\lambda(\wh x_c, Y, w)\right]
    \]
    We thus find
    \begin{align}
        \Max{\QQ \in \mbb B(\Pnom)} \EE_{\QQ} \Big[ \ell_\lambda(X, Y, w )\Big] &= 
    	\Sup{\QQ_{X} \in \mbb B_{X}} \EE_{\QQ_{X}}\left[  \Sup{\QQ_{Y|X} \in \mbb B_{Y|X}} \EE_{\QQ_{Y|X}} \left[ \ell_\lambda(X, Y, w) \right] \right] \notag \\
    	&=\Sup{\QQ_{X} \in \mbb B_{X}} \EE_{\QQ_{X}}\left[ \sum_{c=1}^C t_c\opt \mathbbm{1}_{\wh x_c}(X)\right] \notag\\
    	&= \Sup{q \in \mc Q}~q^\top t\opt \label{eq:Q-opt-12}\\
    	&= \sum_{c=1}^C \wh p_c t_c\opt \exp\Big( \frac{t_c\opt - \alpha\opt}{\beta\opt} - \rho_c -1 \Big) \label{eq:Q-opt-13}\\
    	&= \EE_{\QQ_{X}\opt}\left[ \sum_{c=1}^C t_c\opt \mathbbm{1}_{\wh x_c}(X)\right] \label{eq:Q-opt-14}\\
    	&= \EE_{\QQ_{X}\opt}\left[   \EE_{\QQ_{Y|X}\opt} \left[ \ell_\lambda(X, Y, w) \right] \right] = \EE_{\QQ\opt} \Big[ \ell_\lambda(X, Y, w )\Big] \notag.
    	\end{align}
        where the set $\mc Q$ in~\eqref{eq:Q-opt-12} is defined as in~\eqref{eq:Q-def}. Equality~\eqref{eq:Q-opt-13} follows from Lemma~\ref{lemma:support-solution} and from the definition of $\alpha\opt$ and $\beta\opt$ that solve~\eqref{eq:Q-opt-1}-\eqref{eq:Q-opt-2}. Equality~\eqref{eq:Q-opt-14} follows from the definition of~$\QQ_X\opt$. The proof is completed.
\end{proof}

\section{Auxiliary Results}
\label{sec:app-aux}

\begin{lemma}[Locally strongly convex parameter] \label{lemma:strong-convexity}
    If $\Psi$ is locally strongly smooth, and at $\wh \theta$, the smoothness parameter is $\sigma$, then $\phi$ is locally strongly convex at $\wh \mu = \nabla \Psi(\wh \theta)$ with strongly convex parameter $1/\sigma$ in a sufficiently small neighbourhood of $\wh \mu$.
\end{lemma}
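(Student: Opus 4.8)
The plan is to transport the pointwise curvature bound on $\Psi$ through the Legendre diffeomorphism $\nabla\Psi$ to a curvature bound on $\phi$, exploiting that under the Legendre correspondence $\nabla^2\phi$ is the matrix inverse of $\nabla^2\Psi$. I read the hypothesis ``$\Psi$ is locally strongly smooth with smoothness parameter $\sigma$ at $\wh\theta$'' as: there is an open convex neighbourhood $V\subseteq\Theta$ of $\wh\theta$ on which $\nabla\Psi$ is $\sigma$-Lipschitz; since $\Psi$ is $C^\infty$ on the open set $\Theta$ for a regular exponential family, convexity of $V$ makes this equivalent to $\nabla^2\Psi(\theta)\preceq\sigma I$ for all $\theta\in V$. (If $\sigma$ denotes only the exact local smoothness at the single point $\wh\theta$, continuity of $\nabla^2\Psi$ gives the same conclusion with constant $\sigma^{-1}-\eps$ on a small enough neighbourhood, for every $\eps>0$.)

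First I would pin down the regularity of $\phi$. For a regular exponential family, affine independence of $T_1,\dots,T_p$ yields $\nabla^2\Psi(\theta)=\mathrm{Cov}_{f(\cdot\mid\theta)}(T(Y))\succ0$ for every $\theta\in\Theta$, so by the inverse function theorem $\nabla\phi=(\nabla\Psi)^{-1}$ (Lemma~\ref{lemma:facts}\ref{fact:3}) is $C^1$ on $\inte(\dom(\phi))$ and
\[
    \nabla^2\phi(\mu)=\big[\nabla^2\Psi\big((\nabla\Psi)^{-1}(\mu)\big)\big]^{-1}\qquad\forall\,\mu\in\inte(\dom(\phi)).
\]
Because $\nabla\Psi$ is a homeomorphism of $\Theta$ onto $\inte(\dom(\phi))$ (Lemma~\ref{lemma:facts}), the set $U\Let\nabla\Psi(V)$ is open and contains $\wh\mu=\nabla\Psi(\wh\theta)$, and for $\mu\in U$ with $\theta=(\nabla\Psi)^{-1}(\mu)\in V$ the bound $0\prec\nabla^2\Psi(\theta)\preceq\sigma I$ inverts to $\nabla^2\phi(\mu)\succeq\sigma^{-1}I$.

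Next I would pass to a convex sub-neighbourhood and integrate. The image $U=\nabla\Psi(V)$ need not be convex even though $V$ is, so I choose an open ball $U'\Let\{\mu:\norm{\mu-\wh\mu}<r\}\subseteq U$ centred at $\wh\mu$. On the convex set $U'$ one has $\nabla^2\phi\succeq\sigma^{-1}I$, and Taylor's theorem with integral remainder along each segment $[\mu,\mu']\subseteq U'$ gives
\[
    \phi(\mu')\ \ge\ \phi(\mu)+\inner{\nabla\phi(\mu)}{\mu'-\mu}+\tfrac1{2\sigma}\norm{\mu'-\mu}^2\qquad\forall\,\mu,\mu'\in U',
\]
i.e.\ $\phi$ is $\sigma^{-1}$-strongly convex on a neighbourhood of $\wh\mu$, which is the assertion.

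The steps requiring care are two. First, one must justify that $\phi$ is genuinely $C^2$ with $\nabla^2\phi=(\nabla^2\Psi)^{-1}$; the $C^1$ Legendre-type property from Lemma~\ref{lemma:facts} is not enough, which is why I invoke positive-definiteness of $\nabla^2\Psi$ (regularity of the family) together with the inverse function theorem. Second, and easy to overlook, one cannot work directly on $U=\nabla\Psi(V)$ because $\nabla\Psi$ does not map convex sets to convex sets; on $U$ the Hessian bound only yields strong monotonicity of $\nabla\phi$, which upgrades to strong convexity of $\phi$ only after restricting to the convex ball $U'$. A second-derivative-free alternative would feed the $\sigma$-smoothness of $\nabla\Psi$ on $V$ into the Baillon--Haddad cocoercivity inequality and substitute $\mu_i=\nabla\Psi(\theta_i)$ to get $\sigma^{-1}$-strong monotonicity of $\nabla\phi$ on $U$; but the localised form of Baillon--Haddad needs control of gradient-step iterates that may leave $V$, so the Hessian route is the cleaner one. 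Either way, the statement is the quantitative refinement, with the explicit constant $\sigma^{-1}$, of the qualitative result from~\cite{ref:goebel2008local} used in the proof of Proposition~\ref{prop:surrogate}.
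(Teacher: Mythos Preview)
Your proof is correct, but it takes a genuinely different route from the paper's. The paper works directly from the conjugate definition: starting from the quadratic upper bound $\Psi(\theta)\le\Psi(\wh\theta)+\inner{\nabla\Psi(\wh\theta)}{\theta-\wh\theta}+\tfrac{\sigma}{2}\|\theta-\wh\theta\|_2^2$ on a neighbourhood $\Theta'$, it lower-bounds $\phi(\mu)=\sup_\theta(\inner{\mu}{\theta}-\Psi(\theta))$ by restricting the supremum to $\Theta'$ and then evaluating at the particular point $\theta=\wh\theta+\sigma^{-1}(\mu-\wh\mu)$, which (for $\mu$ close enough to $\wh\mu$) lies in $\Theta'$ and yields exactly $\phi(\wh\mu)+\inner{\wh\theta}{\mu-\wh\mu}+\tfrac{1}{2\sigma}\|\mu-\wh\mu\|_2^2$. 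No second derivatives, no inverse function theorem, and no Hessian inversion are needed.

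The trade-off: the paper's argument is more elementary and uses only the smoothness inequality itself, but it establishes only the \emph{anchored} inequality at the single base point $\wh\mu$ (which is all that is needed downstream in Proposition~\ref{prop:surrogate}). Your Hessian route requires the extra regularity ($\Psi\in C^2$ with $\nabla^2\Psi\succ0$, both available here) and the care you correctly take in passing to a convex sub-neighbourhood $U'\subseteq\nabla\Psi(V)$, but in return it delivers the stronger uniform conclusion: strong convexity for \emph{all} pairs $\mu,\mu'\in U'$, not just pairs with one endpoint at $\wh\mu$.
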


\begin{proof}[Proof of Lemma~\ref{lemma:strong-convexity}]
The proof follows directly from the proof of \cite[Theorem~4.1]{ref:goebel2008local}. By the definition of locally strongly smooth, for some $\Theta^{'}\subseteq\Theta$ neighborhood of $\wh\theta$, we have for $\theta\in\Theta^{'}$
\[
    \Psi(\theta)\le \Psi(\wh \theta)+\inner{\nabla\Psi(\wh \theta)}{\theta-\wh\theta}+\frac{\sigma}{2}\|\theta-\wh\theta\|_2^2.
\]
Since $\wh \mu=\nabla\Psi(\wh\theta)$ and $\phi(\wh\mu)=\inner{\wh\mu}{\wh\theta}-\Psi(\wh\theta)$, we have
\begin{align*}
    \phi(\mu)&=\sup_{\theta\in\Theta}\left(\inner{\mu}{\theta}-\Psi(\theta)\right)\\
    &\geq \sup_{\theta\in\Theta^{'}}\left(\inner{\mu}{\theta}-\Psi(\wh\theta)-\inner{\wh\mu}{\theta-\wh\theta}-\frac{\sigma}{2}\|\theta-\wh\theta\|_2^2\right)\\
    &=\inner{\wh\mu}{\wh\theta}-\Psi(\wh\theta)+\sup_{\theta\in\Theta^{'}}\left(\inner{\mu}{\theta}-\inner{\wh\mu}{\theta}-\frac{\sigma}{2}\|\theta-\wh\theta\|_2^2\right)\\
    &=\phi(\wh\mu) +\inner{\wh\theta}{\mu-\wh\mu}+\sup_{\theta\in\Theta^{'}}\left(\inner{\mu-\wh\mu}{\theta-\wh\theta}-\frac{\sigma}{2}\|\theta-\wh\theta\|_2^2\right).
\end{align*}
In the last step, note that $\wh\theta=\nabla \phi(\wh\mu)$. Taking $\theta-\wh\theta=\alpha(\mu-\wh\mu)$ where $\alpha=1/\sigma$. $\theta\in\Theta^{'}$ if $\mu-\wh \mu$ is sufficiently small. We have
\[
\sup_{\theta\in\Theta^{'}}\left(\inner{\mu-\wh\mu}{\theta-\wh\theta}-\frac{\sigma}{2}\|\theta-\wh\theta\|_2^2\right)\geq (\alpha-\frac{\sigma}{2}\alpha^2)\|\mu-\wh\mu\|_2^2=\frac{1}{2\sigma}\|\mu-\wh\mu\|_2^2.
\]
Therefore $\phi$ is locally strongly convex at $\wh \mu$ with strongly convex parameter $1/\sigma$.
\end{proof}
In Proposition~\ref{prop:surrogate}, since $\Psi$ is locally Lipschitz continuous, we have that $\Psi$ is locally strongly smooth with smoothness parameter $\sigma_c$ at $\wh\theta_c$, where $\sigma_c$ can be chosen as the local Lipschitz constant for a neighborhood around $\wh\theta_c$. By Lemma~\ref{lemma:strong-convexity} and the proof of Proposition~\ref{prop:surrogate}, for sufficiently small $\rho_c,c=1,\ldots,C$, we can choose $m$ explicitly as $m=\min_c1/\sigma_c$, thus $\kappa_2=\sqrt{2\max_c\rho_c\cdot\max_c\sigma_c}$.



\bibliographystyle{abbrv}
\bibliography{arxiv.bbl}

\end{document}